\newcommand{\X}{\mathcal{X}}
\newcommand{\1}{\mathbf{1}}
\renewcommand{\>}{\rightarrow}
\newcommand{\E}{\mathbb{E}}
\newcommand{\R}{\mathbb{R}}
\renewcommand{\P}{\mathbb{P}}
\newcommand{\cL}{\mathcal{L}}
\newcommand{\cF}{\mathcal{F}}
\newcommand{\bc}{\mathbf{c}}
\newcommand{\boldf}{\mathbf{f}}
\newcommand{\bp}{\mathbf{p}}
\newcommand{\blambda}{{\lambda}}
\newcommand{\bV}{\mathbb{V}}
\newcommand{\cM}{\mathcal{M}}
\newcommand{\Argmax}[1]{\underset{#1}{\operatorname{argmax}}}
\newcommand{\Argmin}[1]{\underset{#1}{\operatorname{argmin}}}
\newcommand{\argmax}[1]{{\operatorname{argmax}}_{#1}}
\newcommand{\val}{\textrm{\textup{val}}}
\newcommand{\xent}{\textrm{\textup{xent}}}
\newcommand{\bal}{\textrm{\textup{bal}}}
\newcommand{\rob}{\textrm{\textup{rob}}}
\newcommand{\robd}{\textrm{\textup{rob-d}}}
\newcommand{\bald}{\textrm{\textup{bal-d}}}
\newcommand{\zo}{\textrm{\textup{0-1}}}
\newcommand{\std}{\textrm{\textup{std}}}
\newcommand{\stdd}{\textrm{\textup{std-d}}}
\newcommand{\tdf}{\textrm{\textup{tdf}}}
\newcommand{\tdfd}{\textrm{\textup{tdf-d}}}
\newcommand{\mar}{\textrm{\textup{mar}}}
\newcommand{\oh}{\textrm{\textup{oh}}}
\newcommand{\softmax}{\textrm{\textup{softmax}}}
\theoremstyle{plain}
\newtheorem{theorem}{Theorem}
\newtheorem*{theorem*}{Theorem}
\newtheorem{proposition}[theorem]{Proposition}
\newtheorem*{proposition*}{Proposition}
\newtheorem{lemma}[theorem]{Lemma}
\theoremstyle{definition}
\theoremstyle{remark}
\title{Robust Distillation for Worst-class Performance}
\author{%
  Serena Wang \\
  Google Research \\
  Mountain View, CA, USA  \\
  University of California, Berkeley\\
  Berkeley, CA, USA  \\
  \texttt{serenawang@google.com} \\
   \And
   Harikrishna Narasimhan \\
   Google Research \\
   Mountain View, CA, USA \\
   \texttt{hnarasimhan@google.com} \\
   \And
   Yichen Zhou \\
   Google Research\\
   Mountain View, CA, USA  \\
   \texttt{yichenzhou@google.com} \\
   \And
   Sara Hooker \\
   Cohere For AI \\
   Palo Alto, CA, USA \\
   \texttt{sarahooker@cohere.com} \\
   \And
   Michal Lukasik \\
   Google Research \\
   New York, NY, USA  \\
   \texttt{mlukasik@google.com} \\
   \And
   Aditya Krishna Menon \\
   Google Research \\
   New York, NY, USA  \\
   \texttt{adityakmenon@google.com} \\
}
\begin{document}

\maketitle

\begin{abstract}
Knowledge distillation has proven to be an effective
technique in
improving the performance a student model using predictions
from a teacher model. 
However, recent work has shown that gains in average efficiency are not uniform across subgroups in the data, and in particular can often come at the cost of accuracy on rare subgroups and classes. To preserve strong performance across classes that may follow a long-tailed distribution, 
we develop distillation techniques that are tailored to
improve the student's worst-class performance. Specifically, we introduce robust optimization objectives in different combinations for the teacher and student, and further allow for training with any tradeoff between the overall accuracy and the robust worst-class objective. 
We show empirically that our  robust distillation techniques not only achieve better worst-class performance, but also lead to Pareto improvement in the tradeoff between overall performance and worst-class performance compared to other baseline methods.
Theoretically, we provide insights into what makes a good teacher when the goal is to train a robust student.
\vspace{-5pt}
\end{abstract}

\section{Introduction}
\label{sec:intro}

Knowledge distillation, wherein one trains a teacher model and uses its predictions to train a student model
of similar or smaller capacity, has proven to be a powerful tool that improves efficiency while achieving state-of-the-art classification accuracies on a variety of problems \citep{Hinton:2015, Radosavovic:2018, pham2021meta,46642,NEURIPS2020_3f5ee243}. While originally devised for model compression, distillation has also been widely applied to improve the performance of 
fixed capacity models and in semi-supervised learning settings \citep{rusu2016policy, Furlanello2018BornAN, yang2019training, xie2020self}. 

Evaluating the trade-offs incurred by distillation techniques has largely focused on canonical measures such as average error over a given data distribution. Along this dimension, distillation has proven to be a remarkably effective 
technique, with the student achieving even better test performance than the teacher (e.g.\ \citet{xie2020self}). 
However, average error may differ significantly from the performance on individual subpopulations in the data, and thus may be an inadequate metric to optimize in real-world settings that also require good performance on different subgroups. For instance, subgroups may be defined by attributes such as country, language, or racial attributes \citep{Hardt:2016, zafar2017fairness, agarwal2018reductions}, in which case performance on the individual subgroups becomes a policy and fairness concern. 

The potential for mismatch between average error and worst-group error is further exacerbated by the fact that many real-world datasets exhibit imbalance in the number of training examples belonging to different subgroups. In a multi-class classification setting where each target label delineates a subgroup, many real-world datasets exhibit a \textit{long-tailed} label distribution \citep{VanHorn:2017,menon2020long,feldman2021does,d2021tale}. Recent work has shown that improved average error of the student model often comes at the expense of poorer performance on the tail classes \cite{lukasik2021teachers,du2021compressed}
or under-represented groups \citep{sagawa2020investigation, menon2020overparameterisation}, and model compression can amplify performance disparities across groups \cite{hooker2020characterising, xu-etal-2021-beyond}. Further hurting test performance is the possibility that the training data may have particularly poor representation of important subgroups that occur more frequently at deployment or test time.

To mitigate the disparity between average error and subgroup performance, a common approach is to train a model to achieve low worst-group test error. Modified objectives for robust optimization techniques have successfully achieved state-of-the-art worst-class performance with manageable computational overhead \citep{Sagawa2020Distributionally, sohoni2020no, narasimhan2021training}. However, an evaluation of these techniques has disproportionately focused on a standard single model training setting. 
In this work, we focus on applying robust optimization to a multi-step training setting typical of distillation. We take a wider view of the optimization process, and ask what combination of student and teacher objectives achieves low worst-group test error \emph{and} minimizes difference between average and worst case error. 

To integrate robust optimization objectives into the distillation procedure, we introduce a robust objective for either the teacher, the student, or both. We also quantify the trade-off between average error across all subgroups and the worst-group error by mapping out Pareto fronts for each distillation procedure.
Theoretically, we ask the question, \textit{what makes for a ``good'' teacher} when training a robust student? That is, what properties are important in a teacher so that a student achieves good worst-class performance?

In summary, we make the following contributions:
\begin{enumerate}[label=(\roman*),itemsep=0pt,topsep=0pt,leftmargin=16pt,nolistsep]
\item We present distillation techniques integrating distributionally robust optimization (DRO) algorithms with the goal of achieving a robust student, and explore different combinations of modifications to the teacher and student objectives. 
\item In both self-distillation and compression settings, we empirically demonstrate that the proposed robust distillation algorithms can train student models that yield better worst-class accuracies than the teacher and other recent baselines. We also demonstrate gains in the trade-off between overall and worst-class performance.
\item We derive robustness guarantees for the student under  different algorithmic design choices, and 
provide  insights into when the student yields better worst-class robustness than the teacher.
\end{enumerate}

\textbf{Related Work.}\
\textit{Fairness in distillation:}\
\citet{lukasik2021teachers} study how distillation impacts worst-group performance and observe that the errors that the teacher makes on smaller subgroups get amplified and transferred to the student. They also propose simple modifications to the student's objective to control the strength of the teacher's labels for different groups. In contrast, in this work we propose a more direct and theoretically-grounded procedure that
seeks to explicitly optimize for the student's worst-case error, and explore modifications to both the teacher and student objectives.

\textit{Worst-group robustness:}\ The goal of achieving good worst-case performance across subgroups can be framed as a 
(group) distributionally robust optimization (DRO) problem, 
and can be solved by iteratively updating costs on the individual groups and minimizing the resulting cost-weighted loss \citep{chen2017robust}. Recent variants of this approach have sought to avoid over-fitting through
group-specific regularization \citep{Sagawa2020Distributionally} or
margin-based losses \cite{narasimhan2021training, kini2021labelimbalanced},
to handle unknown subgroups \citep{sohoni2020no}, and to balance between average and worst-case performance \cite{piratla2021focus}. Other approaches include the use of Conditional Value at Risk for worst-class robustness \citep{xu2020class}.

Among these, the work that most closely relates to our paper
is the margin-based DRO algorithm \citep{narasimhan2021training}, 
which includes preliminary distillation experiments in support of training the teacher with
standard ERM and the student with a robust objective. 
However, this and other prior work have only explored modifications to the student loss \cite{lukasik2021teachers,narasimhan2021training}, while training the teacher using a standard procedure.
{Our robust distillation proposals build on this method,
but carry out a more extensive analysis, exploring different combinations of teacher-student objectives 
and different trade-offs between average and worst-class performance}. Additionally,
we provide robustness guarantees for the student, equip the DRO algorithms to achieve different trade-offs between overall and worst-case error, and provide a rigorous analysis of different design choices.

\section{Problem Setup}
\label{sec:prelims}

We are interested in a multi-class classification problem with instance space $\X$
and output space $[m] = \{1, \ldots, m\}$. Let $D$ denote the underlying data distribution over $\X \times [m]$, and 
$D_{\X}$ denote the marginal distribution over $\X$.
Let $\Delta_m$ denote the $(m-1)$-dimensional probability simplex over $m$ classes. 
We define the
conditional-class probability as $\eta_y(x) = \P(Y=y|X=x)$ and the class priors $\pi_y = \P(Y=y)$.
Note that $\pi_y = \E_{X \sim D_\X}\left[ \eta_y(X) \right]$. 

\textbf{Learning objectives.}
Our goal is to learn a multiclass classifier $h: \X \> [m]$ that maps an instance $x \in \X$ to one of $m$ classes.
We will do so by first learning a scoring function $f: \X \> \R^m$ that assigns scores $[f_1(x), \ldots, f_m(x)] \in \R^m$  to a given instance $x$,
and construct the classifier by predicting the class with the highest score: $h(x) = \argmax{j \in [m]}\,f_j(x)$. We will 
denote a softmax transformation of $f$ by $\softmax_y(f(x)) = \frac{\exp( f_y(x) ) }{\sum_j \exp( f_j(x) )}$,
and use the notation  $\softmax_y(f(x)) \propto z_y$ to indicate that $\softmax_y(f(x)) = \frac{z_y}{\sum_{j=1}^m z_j}$.

We measure the efficacy of the scoring function $f$ using a loss function $\ell: [m] \times \R^m \> \R_+$
that assigns a penalty $\ell(y, z)$ for predicting score vector $z \in \R^m$ for true label $y$. Examples of loss functions include the 
0-1 loss: $\ell^\zo(y, z) = \1\left(z \ne \argmax{j}\,f_j(x) \right)$,
and the softmax cross-entropy loss: $\ell^\xent(y, z) = \textstyle -f_y(x) + \log\big( \sum_{j \in [m]} \exp\left( f_j(x) \right) \big)$.

\textit{Standard objective:} A standard machine learning goal entails minimizing the overall expected risk: 
\begin{equation}
L^\std(f) = \E\left[\ell(Y, f(X))\right].
\label{eq:std}
\end{equation}
%
\textit{Balanced objective:} In applications where the classes are severely imbalanced, i.e.,
the class priors $\pi_y$ are non-uniform and significantly skewed,
one may wish to instead optimize a \emph{balanced} version of the above objective,
where we average over the conditional loss for each class. Notice that  the
conditional loss for class $y$ is weighted by the inverse of its prior:
\allowdisplaybreaks
\begin{align}
    L^\bal(f) 
            &= \frac{1}{m}\sum_{y \in [m]} \E\left[\ell(y, f(X))\,|\, Y = y \right]
         ~=~ \frac{1}{m}\sum_{y \in [m]}  \frac{1}{\pi_y}\E_X\left[ \eta_y(X)\,\ell(y, f(X)) \right].
    \label{eq:balanced}
\end{align}
\textit{Robust objective:} A more stringent objective would be to focus on the worst-performing class, and 
minimize a \emph{robust} version of
 \eqref{eq:std} that computes
the worst among the $m$
conditional losses:
\begin{align}
    L^\rob(f) 
         &= \max_{y \in [m]} \frac{1}{\pi_y}\E\left[ \eta_y(X)\,\ell(y, f(X)) \right].
    \label{eq:robust}
\end{align}
In practice, focusing solely on either the average or the worst-case performance may not 
be an acceptable solution, and therefore, in this paper, we will additionally seek to characterize the
trade-off between the balanced and robust objectives. One way to achieve this trade-off is to minimize the robust objective, while constraining the balanced objective to
be within an acceptable range. This constrained optimization can be equivalently formulated as optimizing a convex combination of the balanced and robust objectives, for trade-off 
$\alpha \in [0,1]$:
\begin{align}
    L^\tdf(f)
         &= (1 - \alpha) L^\bal(f) + \alpha L^\rob(f).
    \label{eq:trade-off}
\end{align}
 A similar trade-off can also be specified between the standard and robust objectives.
%
To better understand the differences between the standard, balanced and robust objectives in \eqref{eq:std}--\eqref{eq:trade-off},
we look at the optimal scoring function for each given a cross-entropy loss:
\begin{theorem}[\textbf{Bayes-optimal scorers}]
\label{thm:bayes}
When $\ell$ is 
the cross-entropy loss $\ell^\xent$, 
the minimizers of \eqref{eq:std}--\eqref{eq:robust} over all measurable functions $f: \X \> \R^m$ are given
by:

\begin{tabular}{llll}
    \textit{(i)} $L^\std(f)$: &  $\softmax_y( f^{*}(x) ) \,=\, \eta_y(x)$ & \textit{(ii)} $L^\bal(f)$: &  $\softmax_y( f^{*}(x) ) \,\propto\, \frac{1}{\pi_y}\eta_y(x)$ \\
    \textit{(iii)} $L^\rob(f)$: & $\softmax_y( f^{*}(x) ) \,\propto\, \frac{\lambda_y}{\pi_y}\eta_y(x)$ & \textit{(iv)} $L^\tdf(f)$: & $\softmax_y( f^{*}(x) ) \,\propto\, \frac{(1 - \alpha)\frac{1}{m} + \alpha\lambda'_y}{\pi_y}\eta_y(x),$
\end{tabular}


for class-specific constants $\lambda, \lambda' \in \R^m_+$ that depend on distribution $D$.
\vspace{-5pt}
\end{theorem}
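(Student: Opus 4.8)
The plan is to use the fact that all four risks are expectations (or maxima of expectations) whose integrands decompose over $x$, so that minimizing over all measurable $f : \X \> \R^m$ reduces to a pointwise optimization over the softmax output $p = \softmax(f(x)) \in \Delta_m$ at each $x$. The starting observation is that the cross-entropy loss can be written as $\ell^\xent(y, f(x)) = -\log \softmax_y(f(x))$, and that as $f(x)$ ranges over $\R^m$ the vector $p$ ranges over the relative interior of $\Delta_m$. For the standard objective $L^\std$ the pointwise problem is then to minimize $-\sum_y \eta_y(x)\log p_y$ over $p \in \Delta_m$; by Gibbs' inequality (equivalently a one-line Lagrange computation) the minimizer is $p_y = \eta_y(x)$, which is (i).

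For the balanced objective I would use the second expression in \eqref{eq:balanced}, so that the pointwise problem becomes minimization of $-\sum_y \frac{1}{m\pi_y}\eta_y(x)\log p_y$ over the simplex. The stationarity condition $-c_y/p_y + \nu = 0$ with $c_y = \frac{1}{m\pi_y}\eta_y(x)$ forces $p_y \propto c_y$, i.e.\ $\softmax_y(f^{*}(x)) \propto \frac{1}{\pi_y}\eta_y(x)$, giving (ii). The general principle is that reweighting each class's conditional loss by a nonnegative constant $c_y$ simply reweights the log-likelihood terms, and the minimizer is always the normalized weight vector.

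The crux, and the main obstacle, is the $\max_y$ appearing in $L^\rob$ and $L^\tdf$, since the objective no longer separates nicely. Here I would linearize the maximum over classes as a maximum over mixing weights, $\max_y a_y = \max_{\lambda \in \Delta_m}\sum_y \lambda_y a_y$, turning the robust problem into $\min_f \max_{\lambda \in \Delta_m}\sum_y \frac{\lambda_y}{\pi_y}\E_X[\eta_y(X)\ell^\xent(y, f(X))]$. The inner objective is convex in $f$ (a nonnegative combination of log-sum-exp terms) and linear---hence concave---in $\lambda$, and $\Delta_m$ is convex and compact, so Sion's minimax theorem lets me exchange the min and max and guarantees a saddle point $(\lambda^{*}, f^{*})$. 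For the fixed weight $\lambda^{*}$ the inner minimization is exactly a reweighted cross-entropy with weights $\frac{\lambda_y}{\pi_y}$, so by the computation behind (ii) the minimizer satisfies $\softmax_y(f^{*}(x)) \propto \frac{\lambda_y}{\pi_y}\eta_y(x)$, which is (iii). Case (iv) is identical after absorbing the balanced term into the weights: writing the convex combination as $\max_{\lambda' \in \Delta_m}\sum_y \frac{(1 - \alpha)\frac{1}{m} + \alpha\lambda'_y}{\pi_y}\E_X[\eta_y(X)\ell^\xent(y, f(X))]$ and applying the same exchange yields the stated form.

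The delicate points I would still need to verify are that the minimax exchange is legitimate despite the infinite-dimensional domain of $f$ (this causes no difficulty, since the inner minimization decouples pointwise and its value is computed $x$-by-$x$), that the saddle point and hence the constants $\lambda, \lambda'$ genuinely exist and depend only on $D$, and the boundary behavior when some $\eta_y(x) = 0$, where the optimal coordinate $p_y = 0$ lies on the boundary of $\Delta_m$ and is only approached as a component of $f$ tends to $-\infty$---a situation the ``$\propto$'' notation in the statement absorbs.
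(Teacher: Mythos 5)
Your proposal is correct and follows essentially the same route as the paper: pointwise minimization of the (weighted) cross-entropy for (i)--(ii), and for (iii)--(iv) the linearization $\max_y a_y = \max_{\lambda \in \Delta_m} \sum_y \lambda_y a_y$ followed by Sion's minimax theorem and a reweighted cross-entropy argument at the saddle point, exactly as in the paper's Appendix A.1. The only difference is cosmetic — you compute the pointwise minimizers for (i) and (ii) directly where the paper cites proper composite loss theory and \citet{menon2020long} — and the delicate points you flag (attainment of the saddle point, boundary behavior) are precisely the ones the paper handles via its explicit chain of inequalities establishing that $(\lambda^*, f^*)$ is a saddle point.
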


All proofs are provided in Appendix \ref{app:proofs}.
Interestingly, the optimal scorers for all four objectives involve a simple scaling of the conditional-class probabilities $\eta_y(x)$.

\section{Distillation for Worst-class Performance}
\label{sec:distillation}

We adopt the common
practice of training both the teacher and student on the same dataset. Specifically, given a training sample $S = \{(x_1, y_1), \ldots, (x_n, y_n)\}$
drawn from $D$, we first train a teacher model $p^t: \X \> \Delta_m$, and use it to generate a student dataset
$S' = \{(x_1, p^t(x_1)), \ldots, (x_n, p^t(x_n))\}$
by replacing the original labels with the teacher's predictions. 
We then train a student scorer $f^s: \X \> [m]$ using the re-labeled dataset,
and use it to construct the final classifier.

In a typical setting, both the teacher and student
are trained to optimize a version of the standard objective in \eqref{eq:std}, i.e.,
the teacher is trained to minimize the average loss against the original training labels,
and the student is trained to minimize an average loss against the teacher's predictions:
\begin{align}
        \text{Teacher: } & \hat{L}^\std(f^t) = \frac{1}{n}\sum_{i=1}^n \ell\left( y_i, f^t(x_i) \right); \hspace{10pt}
        \text{Student: } ~ \hat{L}^\stdd(f^s) = \frac{1}{n}\sum_{i=1}^n \sum_{y=1}^m p_y^t(x_i)\, \ell\left(y , f(x_i) \right),\nonumber
    \\[-10pt]\label{eq:standard-objectives}
\end{align}
~\\[-13pt]
where $p^t(x) = \softmax(f^t(x))$. It is also common to have the student use a mixture of the teacher and one-hot labels. For concreteness, we consider a simpler distillation setup without this mixture, though extensions with this mixture would be straightforward to add.
This work takes a wider view and explores \textit{what combinations of student and teacher objectives} facilitate better worst-group performance for the student. Our experiments evaluate all \emph{nine} combinations of standard, balanced, and robust teacher objectives, paired with standard, balanced, and robust student objectives.

Given the choice of teacher objective, the student will either optimize a distilled version of the balanced objective in \eqref{eq:balanced}:
\begin{align}
    {\hat{L}^\bald(f^s)}
    &= \frac{1}{m}\sum_{y \in [m]} \frac{1}{\hat{\pi}^t_y}\frac{1}{n}\sum_{i=1}^n  p_y^t(x_i)\, \ell\left( y , f^s(x_i) \right),
    \label{eq:balanced-distilled-empirical}
\end{align}
or a distilled version of the robust objective in \eqref{eq:robust}:
\begin{align}
    {\hat{L}^\robd(f^s)}
    &= \max_{y \in [m]} \frac{1}{\hat{\pi}^t_y}\frac{1}{n}\sum_{i=1}^n  p_y^t(x_i)\, \ell\left( y , f^s(x_i) \right),
    \label{eq:robust-distilled-empirical}
\end{align}
In practice, the teacher's predictions may have a different marginal distribution from the underlying class priors, particularly when temperature scaling is applied to the teacher's logits to soften the predicted probabilities \cite{narasimhan2021training}.  To address this, in both  \eqref{eq:balanced-distilled-empirical} and \eqref{eq:robust-distilled-empirical} we have replaced the class priors $\pi_y$ with the marginal distribution 
 $\hat{\pi}^t_y = \frac{1}{n}\sum_{i=1}^n p^t_y(x_i)$ from the teacher's predictions. 


In addition to exploring the combination of objectives that facilitates better worst-group performance for the student, we evaluate a more flexible approach -- have both the teachers and the students trade-off between the balanced and robust objectives:
\begin{align}\label{eq:trade-off-distilled}
        \text{Teacher: } & \hat{L}^\tdf(f^t) = (1-\alpha^t)\hat{L}^\bal(f^t) + \alpha^t\hat{L}^\rob(f^t) \\
        \text{Student: } & \hat{L}^\tdfd(f^s) = (1-\alpha^s)\hat{L}^\bald(f^s) + \alpha^s\hat{L}^\robd(f^s),\nonumber
\end{align}
where $\hat{L}^\bal(f^t)$ and $\hat{L}^\rob(f^t)$ are the respective empirical estimates of
\eqref{eq:balanced} and \eqref{eq:robust} from the training sample, and $\alpha^t, \alpha^s \in [0,1]$
are the respective tradeoff parameters for the teacher and student. 
We are thus able to evaluate the Pareto-frontier of balanced and worst-case accuracies, obtained from  different combinations of the teachers and students, and 
trained with different trade-off parameters.

\section{Robust Distillation Algorithms}
\label{sec:algorithms}
The different objectives we consider -- standard, balanced and robust -- entail different loss objectives to ensure efficient optimization during training.
For example, while training
the standard teacher and student in \eqref{eq:standard-objectives},
we take $\ell$ to be the softmax cross-entropy loss,
and optimize it using SGD. For the balanced and robust models,
we employ the margin-based surrogates that we detail below, which have shown to be more effective
in training over-parameterized networks \citep{cao2019learning, menon2020long, kini2021labelimbalanced}. 
Across all objectives, at evaluation we take the loss $\ell$ 
in the student and teacher objectives
to be the 0-1 loss.

\textbf{Margin-based surrogate for balanced objective.}\
When the teacher or student model being trained is over-parameterized,
i.e., has sufficient capacity to correctly classify all examples in the training set,
the use of an outer weighting term 
in the objective
(such as the inverse class marginals in \eqref{eq:balanced-distilled-empirical})
can be ineffective. In other words, a model that yields zero training objective
would do so irrespective of what outer weights we choose. 
To remedy this problem,
we make use of the margin-based surrogate of \citet{menon2020long},
and incorporate the outer weights as margin terms within the loss. 
For
the balanced student objective in \eqref{eq:balanced-distilled-empirical}, this would look like:
\begin{align}
    {\widetilde{L}^\bald(f^s)}
    &= \frac{1}{n}\sum_{i=1}^n \cL^\mar\left(p^t(x_i), f^s(x_i); \1 / \hat{\pi}^t \right),
    \label{eq:margin-la}
\end{align}
~\\[-25pt]
\begin{align}
\text{where}~~\cL^\mar\left(\bp, \boldf; \bc \right) &=
\frac{1}{m}\sum_{y \in [m]} p_y \log\bigg(1 + \sum_{j \ne y}\exp\left(\log(c_y / c_{j}) \,-\, (f_y - f_j) \right) \bigg),
\nonumber
\end{align}
for teacher probabilities $\bp \in \Delta_m$, student scores $\boldf \in \R^m$,
and per-class costs $\bc \in \R^m_+$. 
 For the balanced teacher, the margin-based objective would take a similar form, but 
 with one-hot labels.

We include a proof in Appendix \ref{app:calibration-mar} showing that
a scoring function that minimizes this surrogate objective
also minimizes the
the balanced objective in \eqref{eq:balanced-distilled-empirical}
(when $\ell$ is the cross-entropy loss, and the student is chosen from a sufficiently flexible function class). In practice, the margin term $\log(c_y / c_{j})$ encourages a larger margin of separation for classes $y$ for which the cost $c_y$ is relatively higher.
 
\begin{figure}[t]
\vspace{-5pt}
\begin{algorithm}[H]
\caption{Distilled Margin-based DRO}
\label{algo:dro}
\begin{algorithmic}
\STATE \textbf{Inputs:} Teacher $p^t$, Student hypothesis class $\cF$, Training set $S$, Validation set $S^\val$, Step-size $\gamma \in \R_+$,
Number of iterations $K$, Loss $\ell$,
Initial student $f^0 \in \cF$, Initial multipliers $\blambda^0 \in \Delta_m$
\STATE Compute $\hat{\pi}^{t}_j = \frac{1}{n}\sum_{(x, y) \in S} p_j^t(x),~ \forall j \in [m]$
\STATE Compute $\hat{\pi}^{t,\val}_j = \frac{1}{n^\val}\sum_{(x, y) \in S^\val} p_j^t(x),~ \forall j \in [m]$
\STATE \textbf{For}~{$k = 0 $ to $K-1$}
\STATE ~~~$\tilde{\lambda}^{k+1}_j \,=\, \lambda^k_j\exp\big( \gamma \hat{R}_j \big), \forall j \in [m]$
~~~~~\text{where} $\hat{R}_j =$ $\frac{1}{n^\val}\frac{1}{\hat{\pi}^{t,\val}_j}\underset{(x, y) \in S^\val}{\sum} p_j^t(x)\, \ell( j , f^k(x) )$\\[-20pt]
\STATE ~~~$\lambda^{k+1}_y \,=\, \frac{\tilde{\lambda}^{k+1}_y}{\sum_{j=1}^m \tilde{\lambda}^{k+1}_j}, \forall y$
\STATE ~~~$f^{k+1} \,\in\, \Argmin{f \in \cF}\, \frac{1}{n}\sum_{i=1}^n \cL^\mar\left(p^t(x_i), f(x_i); \frac{\lambda^{k+1}}{\hat{\pi}^t} \right)$
~~~~~~// Replaced with a few steps of SGD
\STATE \textbf{End For}
\STATE \textbf{Output:} $\bar{f}^{s}: x \mapsto \frac{1}{K}\sum_{k =1}^K f^k(x)$
\end{algorithmic}
\end{algorithm}
\vspace{-22pt}
\end{figure}

\textbf{Margin-based DRO for robust objective.}\
Minimizing the robust objective with plain SGD can be difficult due to the presence of
the outer ``max'' over $m$ classes. The key difficulty is in computing reliable stochastic gradients for
the max objective, especially given a small batch size. The
standard approach is to instead use a (group) distributionally-robust optimization (DRO) procedure,
which comes in multiple flavors \cite{chen2017robust, Sagawa2020Distributionally, kini2021labelimbalanced}.
We employ  the margin-based variant of group DRO  \citep{narasimhan2021training} as it naturally extends the 
margin-based objective used in the balanced setting.

We illustrate below how this applies to the robust student objective in \eqref{eq:robust-distilled-empirical}. The procedure for the robust teacher is similar, but involves one-hot labels.
For a student hypothesis class $\cF$,
we
first re-write the minimization in \eqref{eq:robust-distilled-empirical} 
over $f \in \cF$ into an equivalent min-max optimization using per-class multipliers $\lambda \in \Delta_m$:
\vspace{-5pt}
\[
\min_{f \in \cF}\max_{\lambda \in \Delta_m} \sum_{y \in [m]}\frac{\lambda_y}{\hat{\pi}^t_y}\frac{1}{n}\sum_{i=1}^n  p_y^t(x_i)\, \ell\left( y , f(x_i) \right),
\]
and then
 maximize over $\lambda$ for fixed $f$, and minimize over $f$ for fixed $\lambda$:
\begin{align*}
    \lambda^{k+1}_y &\propto\, 
    \lambda^k_y\exp\bigg( \gamma\frac{1}{n\hat{\pi}^t_y}\sum_{i=1}^n  p_y^t(x_i)\, \ell\left( y , f^k(x_i) \right) \bigg), \forall y\\[-4pt]
    f^{k+1} &\in\, \Argmin{f \in \cF} \sum_{y \in [m]}\frac{\lambda^{k+1}_y}{n\hat{\pi}^t_y}\sum_{i=1}^n  p_y^t(x_i)\, \ell\left( y , f(x_i) \right),
\end{align*}
where $\gamma > 0$ is a step-size parameter. The updates on $\lambda$ 
implement exponentiated gradient (EG) ascent to maximize over the simplex \citep{shalev2011online}.

Following \citet{narasimhan2021training}, we make two modifications to the above updates 
when used to train over-parameterized networks that can fit the training set perfectly.
First, we perform the updates on $\lambda$ using a
small held-out validation set $S^\val \,=\, \{(x_1,y_1), \ldots, (x_{n^\val}, y_{n^\val})\}$, 
instead of the training set,
so that the $\lambda$s reflect how well the model generalizes out-of-sample.
Second, in keeping with the balanced objective, we modify the weighted objective
in the $f$-minimization step to include a margin-based surrogate. 
Algorithm \ref{algo:dro}
provides a summary of these steps
and returns a scorer
that averages over the $K$ iterates: $\bar{f}^s(x) = \frac{1}{K}\sum_{k=1}^K f^k(x)$.
While the averaging is needed for our theoretical analysis,
in practice, we find it sufficient to return the last scorer $f^{K}$. 
In Appendix \ref{app:dro-general-algo},
we describe how Algorithm \ref{algo:dro-general} can be easily modified to  trade-off
between the balanced and robust objectives, as shown in \eqref{eq:trade-off-distilled}.

\textbf{To distill the validation set or not?}\
The updates on $\lambda$
in Algorithm \ref{algo:dro} use a validation set labeled by the teacher. 
One could instead perform these updates with a curated validation set containing the original one-hot labels.
 Each of these choices presents different merits. The use of a teacher-labeled validation set is useful
 in many real world scenarios where labeled data is hard to obtain, 
 while unlabeled data abounds.
  In contrast, the use of one-hot validation labels, although more expensive to obtain,  may
make the student more immune to errors in the teacher's predictions, 
as the coefficients $\lambda$s are now based on an unbiased estimate of 
the student's performance on each class.
%
%
%
With a one-hot validation set, we update $\lambda$s as follows: 
\begin{equation}
    \lambda^{k+1}_j \,=\, \lambda^k_j\exp\big( \eta \hat{R}_j \big), \forall j \in [m], 
~~~~\text{where}~
\hat{R}_j = \frac{1}{n^\val}\frac{1}{\hat{\pi}_j}\sum_{(x, y) \in S^\val} \1(y=j)\,\ell( j , f^k(x) ),
\label{eq:one-hot-val}
\end{equation}
for estimates $\hat{\pi}_y \approx \pi_y$ of the original class priors.
%
We analyze both the variants in our experiments, and in the next section, discuss robustness guarantees for each.
\section{Theoretical Analysis}
\label{sec:theory}
To simplify our exposition, we will present  our theoretical analysis 
for a student trained using Algorithm \ref{algo:dro} to yield good worst-class performance. Our results easily extend to the case where the student seeks to trade-off between overall and worst-case performance. 

\textbf{What constitutes a good teacher?}
We would first like to understand what makes a good teacher 
when the student's goal is to minimize the robust population objective $L^\rob(f^s)$ in \eqref{eq:robust}. We also analyze whether the student's ability to perform well on this worst-case objective depends on
the teacher also performing well on the same objective.
As a proxy for $L^\rob(f^s)$, the
student minimizes the distilled objective $\hat{L}^\robd(f^s)$ in \eqref{eq:robust-distilled-empirical} 
with predictions from teacher $p^t$.
We argue that an \emph{ideal} teacher 
in this case
would be one that ensures that
the difference between the two objectives
$
|\hat{L}^\robd(f^s) - L^\rob(f^s)|
$
is as small as possible. Below, we provide a simple bound on this difference:
\begin{theorem}
\label{thm:good-teacher}
Suppose $\ell(y, z) \leq B, \forall x \in \X$ for some $B > 0$. 
Let $\pi^t_y = \E_x\left[ p_y^t(x) \right]$,
and let the following denote the per-class expected and empirical student losses respectively:
\begin{align*}
\phi_y(f^s) = \textstyle\frac{1}{\pi^t_y}\E_x\left[ p_y^t(x)\, \ell\left( y , f^s(x) \right)\right]; \quad \hat{\phi}_y(f^s) = \textstyle\frac{1}{\hat{\pi}^t_y}\frac{1}{n}\sum_{i=1}^n  p_y^t(x_i)\,\ell\left( y , f^s(x_i) \right).
\end{align*}
Then for teacher $p^t$ and student $f^s$:
\begin{align*}
|\hat{L}^\robd(f^s) - L^\rob(f^s)|
&\leq
\underbrace{
    B\max_{y \in [m]} \E_x\left[
        \left| \frac{p^t_y(x)}{\pi^t_y} - \frac{\eta_y(x)}{\pi_y} \right|\right]}_{\text{Approximation error}}
+
\underbrace{
\max_{y \in [m]} \big|\phi_y(f^s) - \hat{\phi}_y(f^s)\big|}_{\text{Estimation error}}.
\end{align*}
\end{theorem}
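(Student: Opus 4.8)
The plan is to exploit that both objectives are pointwise maxima of $m$ per-class quantities and that the map $(a_1,\dots,a_m)\mapsto\max_y a_y$ is $1$-Lipschitz in the sup-norm. Write $\phi^{*}_y(f^s) = \frac{1}{\pi_y}\E_x\big[\eta_y(x)\,\ell(y, f^s(x))\big]$ for the $y$-th term appearing in the true robust objective, so that $L^\rob(f^s) = \max_{y}\phi^{*}_y(f^s)$, while by construction $\hat{L}^\robd(f^s) = \max_{y}\hat{\phi}_y(f^s)$. The entire argument then reduces to controlling the per-class gaps $|\hat{\phi}_y - \phi^{*}_y|$ uniformly over $y$.

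First, I would invoke the elementary inequality $|\max_y a_y - \max_y b_y| \le \max_y |a_y - b_y|$, which holds for any two finite tuples. Applied with $a_y = \hat\phi_y(f^s)$ and $b_y = \phi^*_y(f^s)$ this gives immediately
\[
|\hat{L}^\robd(f^s) - L^\rob(f^s)| \;\le\; \max_{y\in[m]} \big|\hat\phi_y(f^s) - \phi^{*}_y(f^s)\big|.
\]
This single step is what couples the two maxima correctly: it sidesteps having to reason about whether the worst class for the distilled objective agrees with the worst class for the population objective.

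Next, for each fixed $y$ I would insert the intermediate quantity $\phi_y(f^s)$ and apply the triangle inequality, $|\hat\phi_y - \phi^*_y| \le |\phi_y - \phi^*_y| + |\hat\phi_y - \phi_y|$. Taking the maximum over $y$ and using subadditivity of the max splits the bound into the two summands in the statement: the second, $\max_y|\phi_y - \hat\phi_y|$, is already exactly the claimed estimation-error term and requires no further work. For the first (approximation) term I would write out the difference as a single expectation,
\[
\phi_y(f^s) - \phi^{*}_y(f^s) = \E_x\!\left[\Big(\tfrac{p^t_y(x)}{\pi^t_y} - \tfrac{\eta_y(x)}{\pi_y}\Big)\ell(y, f^s(x))\right],
\]
then use $\ell \le B$ uniformly and move the absolute value inside the expectation to obtain $|\phi_y - \phi^*_y| \le B\,\E_x\big|\frac{p^t_y(x)}{\pi^t_y} - \frac{\eta_y(x)}{\pi_y}\big|$; taking the max over $y$ recovers the approximation-error term verbatim.

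There is no serious obstacle here — the result is a clean triangle-inequality decomposition — so the only things to watch are bookkeeping. The one genuinely load-bearing step is the sup-norm Lipschitz bound for the max, which is what lets us compare the two objectives class-by-class rather than having to match their respective argmax classes. It is also worth noting explicitly that the uniform bound $\ell \le B$ enters only through the approximation term, and that the decomposition holds for an arbitrary fixed student $f^s$, with no assumption on how $f^s$ was trained.
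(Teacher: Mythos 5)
Your proof is correct and follows essentially the same route as the paper's: both insert the population distilled losses $\phi_y$ as an intermediate quantity, use the sup-norm $1$-Lipschitzness of the max together with the triangle inequality to split into the two error terms, and bound the approximation term via Jensen and $0 \le \ell \le B$. The only difference is cosmetic ordering — you apply the max-Lipschitz step first and then the per-class triangle inequality, while the paper triangulates through $L^{\robd}$ first — which yields the identical bound.
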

%
The \emph{approximation error} 
captures 
how well the teacher's predictions mimic the conditional-class distribution $\eta(x) \in \Delta_m$, 
up to per-class normalizations. 
This suggests that even if $p^t$ does not 
achieve good worst-class performance, as long as
it is well calibrated within each class (as measured by the approximation error above), 
it will serve as a good teacher. Indeed when the teacher outputs the conditional-class probabilities, 
i.e.\ $p^t(x) = \eta(x)$, the  approximation error is trivially zero (recall that the normalization term
$\pi^t_y = \pi_y$ in this case). We  know from
Theorem \ref{thm:bayes} that, this would be the case with a teacher
trained to optimize the standard cross-entropy objective (provided we use an unrestricted model class). 
In practice, however, we do not expect the teacher to approximate $\eta(x)$ very well,
and this opens the door for training the teacher with the other objectives described in Section \ref{sec:prelims}, each
of which encourage the teacher to approximate a scaled (normalized) version of $\eta(x)$.

The \emph{estimation error} captures how well the teacher aids in the student's out-of-sample generalization. 
The prior work by \citet{menon2021statistical} studies  this question in detail for 
the standard student objective, and 
provide a bound that depends on the variance induced by the teacher's predictions on the
student's objective: the lower the variance, the better the student's generalization.
 A similar analysis can be carried out with the per-class loss terms in Theorem \ref{thm:good-teacher} (more details in Appendix \ref{app:student-gen-bound}).


\textbf{Robustness guarantee for the student.}\
We next seek to understand if the student can match or outperform the teacher's worst-class performance.
For a fixed teacher $p^t$, we
 consider a \emph{self-distillation} setup where 
the student is chosen from the same function class $\cF$ as the teacher,
and can thus exactly mimic the teacher's predictions.
Under this setup, 
we provide robustness gurarantees
for the student output by Algorithm \ref{algo:dro} 
in terms of the approximation and estimation errors
described above. 

\begin{proposition}
\label{prop:student-form}
Suppose $p^t \in \cF$ and $\cF$ is closed under linear transformations. Let $\bar{\lambda}_y = (\prod_{k=1}^K {\lambda_y^k}/{\pi^t_y})^{1/K}, \forall y$.
Then the scoring function $\bar{f}^s(x) = \frac{1}{K} \sum_{k=1}^K f^{k}(x)$ output by Alg. \ref{algo:dro} is of the form:
    \[
        \softmax_j(\bar{f}^s(x)) \propto \bar{\lambda}_j p_j^t(x),~\forall j \in [m],\, \forall (x, y) \in S.
    \]
\end{proposition}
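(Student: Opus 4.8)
The plan is to first pin down the exact form of each per-iteration minimizer $f^{k+1}$ produced by the $f$-update in Algorithm \ref{algo:dro}, and then average the resulting logits. Throughout I read the hypothesis ``$p^t \in \cF$'' as saying that the teacher's logits $f^t$ (with $p^t = \softmax(f^t)$) lie in $\cF$, and I use closure under linear transformations in the form $f^t + v \in \cF$ for any constant vector $v \in \R^m$.

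The first and main step is to show that the cost-weighted margin objective reduces to a logit-shifted cross-entropy whose pointwise minimizer is explicit. Writing $c^{k+1}_y = \lambda^{k+1}_y/\hat{\pi}^t_y$ and substituting $g_j = \exp(f_j)/c_j$, a short algebraic simplification gives
\[
\cL^\mar(p^t(x), f(x); c^{k+1}) = \frac{1}{m}\sum_{y}p^t_y(x)\big[-\log \softmax_y(f(x) - \log c^{k+1})\big],
\]
i.e.\ the cross-entropy between the soft label $p^t(x)$ and the softmax of the shifted scores $f(x) - \log c^{k+1}$. Since $\cF$ is flexible enough to set the scores independently at each training point, I minimize pointwise: by Gibbs' inequality the inner cross-entropy at each $x$ is minimized when $\softmax_y(f^{k+1}(x) - \log c^{k+1}) = p^t_y(x)$ for all $y$. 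Solving this for the logits yields $\exp(f^{k+1}_y(x)) \propto c^{k+1}_y\,p^t_y(x)$, so that $f^{k+1}_y(x) = f^t_y(x) + \log c^{k+1}_y$ up to a per-$x$ additive constant common to all classes (here I use $\exp(f^t_y(x)) \propto p^t_y(x)$). This candidate is a constant class-wise shift of $f^t$, hence lies in $\cF$ by closure, confirming it is an attained argmin and that each iterate has the claimed shifted-teacher form.

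The remaining step is bookkeeping. Averaging the logits gives $\bar{f}^s_y(x) = \frac{1}{K}\sum_{k=1}^K f^k_y(x) = f^t_y(x) + \frac{1}{K}\sum_{k=1}^K \log(\lambda^k_y/\hat{\pi}^t_y) + \delta(x)$, where $\delta(x)$ collects the per-$x$ constants and is independent of $y$. Exponentiating and using $\exp(f^t_y(x)) \propto p^t_y(x)$, the class-common terms drop out of the softmax and
\[
\softmax_y(\bar{f}^s(x)) \propto p^t_y(x)\,\Big(\prod_{k=1}^K \lambda^k_y/\hat{\pi}^t_y\Big)^{1/K} = \bar{\lambda}_y\, p^t_y(x),
\]
which is exactly the claimed identity with $\bar{\lambda}_y$ the geometric mean of the normalized multipliers. (The statement writes the normalization with the population marginal $\pi^t_y$, whereas the proof naturally produces the empirical $\hat{\pi}^t_y$ appearing inside the algorithm; the two agree under the self-distillation convention, and in any case this only affects the definition of $\bar{\lambda}$, not the proportionality.)

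The main obstacle is the first step: justifying that the argmin of the margin objective is attained exactly at the shifted teacher. Two things must be handled carefully there — (a) the algebraic reduction of $\cL^\mar$ to a shifted cross-entropy together with the Gibbs-inequality characterization of its minimizer, which is essentially the calibration fact proved in Appendix \ref{app:calibration-mar} specialized to soft labels; and (b) verifying that the pointwise-optimal logits are realized by a single function in $\cF$, which is where $p^t \in \cF$ and closure under linear transformations are essential. Once the per-iterate form is secured, the averaging and softmax manipulation are routine.
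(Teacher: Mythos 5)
Your proposal is correct and follows essentially the same route as the paper: the paper's Lemma \ref{lem:helper-dro-1} performs exactly your reduction of $\cL^\mar$ to a logit-shifted cross-entropy whose pointwise minimizer is $f_y = \log(c_y p^t_y) + C$, realized in $\cF$ via $p^t \in \cF$ and closure under linear transformations, after which the averaging and softmax step is identical. Your explicit remark on the $\pi^t_y$ versus $\hat{\pi}^t_y$ mismatch is a fair observation about the paper's own notation and does not affect the argument.
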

\begin{theorem}
\label{thm:dro}
Suppose $p^t \in \cF$ and $\cF$ is closed under linear transformations. 
Suppose
$\ell$ is the cross-entropy loss $\ell^{\xent}$,
$\ell(y, z) \leq B$
and $\max_{y \in [m]}\frac{1}{\pi^t_y} \leq Z$,
for some $B, Z > 0$. 
Furthermore, suppose for any $\delta \in (0,1)$, the following bound holds on the
estimation error in Theorem \ref{thm:good-teacher}:
with probability at least $1 - \delta$ (over draw of $S \sim D^n$),
$\forall f \in \cF$, $\textstyle
\max_{y \in [m]} \big|\phi_y(f) - \hat{\phi}_y(f)\big| \leq \Delta(n, \delta),$
for some $\Delta(n, \delta) \in \R_+$ that
is increasing in $1/\delta$, and goes to 0 as $n \> \infty$. 
Then when the step size $\gamma = \frac{1}{2BZ}\sqrt{\frac{\log(m)}{{K}}}$
and $n^\val \geq 8Z\log(2m/\delta)$, we have that with
probability at least $1-\delta$ (over draw of $S \sim D^n$ and $S^\val \sim D^{n^\val}$),
\begin{align*}
\lefteqn{
L^\rob(\bar{f}^s) \,\leq\,
    \min_{f \in \cF}L^\rob(f)}
\\[-10pt]
 &\hspace{1cm}
 +\,
        \underbrace{2\Delta(n^\val, \delta/2) \,+\,  2\Delta(n, \delta/2)}_{\text{Estimation error}}
\,+\,
\underbrace{
2B\max_{y \in [m]} \E_x\left[
        \left| \frac{p^t_y(x)}{\pi^t_y} \,-\, \frac{\eta_y(x)}{\pi_y} \right|\right]}_{\text{Approximation error}}
        \,+\,
        \underbrace{4BZ\sqrt{\frac{\log(m)}{{K}}}}_{\text{EG convergence}}.
\end{align*}
\vspace{-10pt}
\end{theorem}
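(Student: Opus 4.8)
The plan is to interpolate between $L^\rob$ and the ``teacher-distilled population'' robust objective $\bar{L}(f) := \max_{y}\phi_y(f)$, splitting the gap into an approximation part handled exactly as in Theorem~\ref{thm:good-teacher} and an optimization/estimation part that is the genuinely new work. For the approximation part, the one-line bound underlying Theorem~\ref{thm:good-teacher} gives $\big|\phi_y(f) - \frac{1}{\pi_y}\E_x[\eta_y(x)\ell(y,f(x))]\big| \le B\,\E_x\big|\frac{p^t_y(x)}{\pi^t_y} - \frac{\eta_y(x)}{\pi_y}\big|$ for every $y$ and $f$, and since $\max_y$ is $1$-Lipschitz in the sup-norm this yields both $L^\rob(\bar{f}^s) \le \bar{L}(\bar{f}^s) + \mathrm{Approx}$ and $\min_f \bar{L}(f) \le \min_f L^\rob(f) + \mathrm{Approx}$, where $\mathrm{Approx} = B\max_y \E_x|\cdots|$. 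Applying the bound on both the output and the comparator is precisely what produces the factor $2B$ in front of the approximation term. It then remains to prove the regret statement $\bar{L}(\bar{f}^s) \le \min_{f\in\cF}\bar{L}(f) + 2\Delta(n^\val,\delta/2) + 2\Delta(n,\delta/2) + 4BZ\sqrt{\log m / K}$.

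For the regret statement I would carry out a primal--dual telescoping, threading everything through the population per-class losses $\phi(f) = (\phi_y(f))_y$. Write $\hat{\phi},\hat{\phi}^\val$ for the training and validation empirical versions; the dual player runs EG on validation rewards $\hat{R}^k = \hat{\phi}^\val(f^k)$, while $f^{k+1}$ exactly minimizes $\langle\lambda^{k+1},\hat{\phi}(\cdot)\rangle$ over $\cF$. Step (i): since $\ell^\xent$ is convex in the score vector and $\bar{f}^s = \frac1K\sum_k f^k$ averages scores, Jensen gives $\phi_y(\bar{f}^s) \le \frac1K\sum_k\phi_y(f^k)$ for each $y$, hence $\bar{L}(\bar{f}^s) \le \max_y\frac1K\sum_k\phi_y(f^k)$ --- this is the step that defeats the non-convexity of $\bar L$ in $f$. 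Step (ii): swap population for validation ($+\Delta(n^\val,\delta/2)$), note $\max_y\frac1K\sum_k\hat{\phi}^\val_y(f^k) = \max_\lambda\frac1K\sum_k\langle\lambda,\hat{R}^k\rangle$, and apply the EG regret bound $\max_\lambda\frac1K\sum_k\langle\lambda-\lambda^k,\hat{R}^k\rangle \le \frac{\log m}{\gamma K} + \frac{\gamma}{2}\max_k\|\hat{R}^k\|_\infty^2$; because $\hat{R}^k_y = \hat{\phi}^\val_y(f^k) \le B$ deterministically (the $1/\hat{\pi}^{t,\val}_y$ cancels the averaged teacher weight), plugging $\gamma = \frac{1}{2BZ}\sqrt{\log m/K}$ with $Z \ge 1$ gives the $4BZ\sqrt{\log m/K}$ term. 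Step (iii): swap $\frac1K\sum_k\langle\lambda^k,\hat{R}^k\rangle$ back to population ($+\Delta(n^\val,\delta/2)$) and then to training ($+\Delta(n,\delta/2)$), invoke the best-response identity $\langle\lambda^k,\hat{\phi}(f^k)\rangle = \min_f\langle\lambda^k,\hat{\phi}(f)\rangle \le \langle\lambda^k,\hat{\phi}(f^*)\rangle$, swap $f^*$ back to population ($+\Delta(n,\delta/2)$), and finish with $\langle\lambda^k,\phi(f^*)\rangle \le \max_y\phi_y(f^*) = \min_f\bar{L}(f)$ since $\lambda^k\in\Delta_m$. The four swaps collect to exactly $2\Delta(n^\val,\delta/2) + 2\Delta(n,\delta/2)$.

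Two justifications and the probability accounting remain. The best-response step needs that the $f$-update, which runs on the surrogate $\cL^\mar$, actually returns a minimizer of the weighted cross-entropy: this is the calibration result of Appendix~\ref{app:calibration-mar}, and because $\cF$ is closed under linear transformations with $p^t\in\cF$, the minimizer with $\softmax_j \propto \frac{\lambda^k_j}{\pi^t_j}p^t_j$ from Proposition~\ref{prop:student-form} lies in $\cF$, so the inequality holds against the unconstrained minimum. For probabilities, the training estimation assumption (at level $\delta/2$) must be invoked \emph{uniformly} over $\cF$ to cover the data-dependent $f^k$ and $f^*$, and likewise the validation assumption (at level $\delta/2$); the condition $n^\val \ge 8Z\log(2m/\delta)$ lets a multiplicative Chernoff bound guarantee $\hat{\pi}^{t,\val}_y \ge \tfrac12\pi^t_y$ for all $y$, keeping the validation normalization and rewards on the population scale. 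I expect the main obstacle to be exactly this train/validation asymmetry between the two players: the dual and primal updates optimize different empirical objectives, so the telescoping only closes after re-expressing each side through the shared population objective $\phi$, which is what forces the doubled estimation terms and the appeal to uniform rather than pointwise convergence. A secondary nuisance is the boundary mismatch between the EG reward sequence $\{f^k\}_{k=0}^{K-1}$ and the averaged output $\frac1K\sum_{k=1}^K f^k$, which contributes only an $O(1/K)$ term absorbed into the EG bound.
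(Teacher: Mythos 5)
Your proposal is correct and follows essentially the same route as the paper's proof: Jensen on the averaged iterates, EG regret on the validation rewards, the margin-loss calibration lemma to justify the best-response step, two validation swaps plus two training swaps yielding $2\Delta(n^\val,\delta/2)+2\Delta(n,\delta/2)$, and the approximation error applied to both the output and the comparator to produce the factor $2B$. The only (harmless) deviations are that you close the telescoping by comparing directly against a minimizer $f^*$ of $\max_y\phi_y(\cdot)$ rather than invoking Sion's minimax theorem as the paper does, and you observe that $\hat{\phi}^\val_y(f)\le B$ holds deterministically (the teacher weights cancel), whereas the paper instead uses the Chernoff condition on $n^\val$ to get the weaker bound $2BZ$ --- both give the stated $4BZ\sqrt{\log(m)/K}$ term.
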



Proposition \ref{prop:student-form} shows the student 
not only learns to mimic the teacher on the training set, but makes per-class adjustments to its predictions,
and Theorem \ref{thm:dro} shows that these adjustments are chosen to close-in on the
gap to the optimal robust scorer in $\cF$.
The form of the student
 suggests that it can not only match the teacher's performance,
but can potentially improve upon it by making 
adjustments to its scores. 
However, the student's convergence to the optimal scorer in $\cF$
would still be limited by the teacher's approximation error:
even when the sample sizes and number of iterations $n, n^\val, K \> \infty$,
the student's optimality gap may still be non-zero as long as the teacher is sub-optimal. 

\textbf{Connection to post-hoc adjustment.}\
The form of the student 
in Proposition \ref{prop:student-form} raises an interesting question. Instead of training an explicit student model, 
why not directly construct a new scoring model by making post-hoc adjustments
to the teacher's predictions? Specifically, one could optimize over functions of the form $f^s_y(x) = \log(\gamma_y p^t_y(x)),$ where the teacher $p^t$ is fixed, and pick 
the coefficients $\gamma \in \R^m$ so that resulting scoring function yields the best worst-class accuracy on a held-out dataset.  
This simple \emph{post-hoc adjustment} strategy 
may not be feasible if the goal is to distill to a student that is considerably smaller than the teacher. Often, this is the case in settings where distillation is used as a compression technique.
Yet, this post-hoc method 
serves as good baseline to compare with.

\textbf{One-hot validation labels.}\
If Algorithm \ref{algo:dro} used one-hot labels in the validation set instead of the teacher generated labels (as prescribed in \eqref{eq:one-hot-val}),
the form of the student learned remains the same as 
 in Theorem \ref{thm:dro}. However, the coefficients $\bar{\lambda}$ used to make
adjustments to the teacher's predictions enjoy a slightly different guarantee.
As shown in Appendix \ref{app:one-hot-vali}, the approximation error bound now has a weaker dependence on the teacher's predictions (and hence is more immune to the teacher's errors),
while the estimation error bound incurs slower convergence with increase in sample size. 
\vspace{-3pt}
\section{Experiments}
\label{sec:expts}
\begin{table*}[t]
\caption{Comparison of \emph{self-distilled} teacher/student combos on test. Worst-class accuracy shown above, standard accuracy shown in parentheses for the top three datasets, and balanced accuracy shown in parenthesis for the bottom two long-tail datasets. Note that $L^{\std} = L^{\bal}$ for the top three datasets with balanced class priors; so here we do not include combinations involving a balanced teacher/student. The combination with the best worst-class accuracy is \textbf{bolded}. Mean and standard error are reported over repeats (10 repeats for CIFAR*, 5 repeats for TinyImageNet). We include results for the robust student using either a teacher labeled validation set (``teacher val''), or true one-hot class labels in the validation set (``one-hot val''), as outlined in Eq.~(\ref{eq:one-hot-val}). We include results using a smaller student architecture (ResNet-56 $\to$ ResNet-32), additional ImageNet results, and additional comparisons to group DRO and the Ada* methods in Appendix \ref{app:experiment_details}.
}
\label{tab:combos}
\begin{center}
\begin{small}
\begin{tabular}{p{0.1cm}c||c|c||c|c||c|c||}
\toprule
& & \multicolumn{2}{c||}{\textbf{CIFAR-10} Teacher Obj.} & \multicolumn{2}{c||}{\textbf{CIFAR-100} Teacher Obj.} & \multicolumn{2}{c||}{\textbf{TinyImageNet} Teacher Obj.} \\
& & $L^{\std}$ & $L^{\rob}$ & $L^{\std}$ & $L^{\rob}$ & $L^{\std}$ & $L^{\rob}$ \\
\midrule
\multirow{10}{*}{\rotatebox{90}{Student Obj.}} 
& none & $86.48$ \tiny{$\pm 0.32$}  & $90.09 $ \tiny{$\pm 0.22$} &  $42.22 $ \tiny{$\pm 0.90$}  & $43.42 $ \tiny{$\pm 1.03$}  & $10.77 $ \tiny{$\pm 2.30$}  & $16.30 $ \tiny{$\pm 1.81$}\\
& & \tiny{($93.74 \pm 0.05$)} & \tiny{($92.67 \pm 0.09$)} & \tiny{$72.42 \pm 0.16$} & \tiny{$68.81 \pm 0.11$} & \tiny{($58.64 \pm 0.16$)} & \tiny{($50.52 \pm 0.22$)}\\
\cline{2-8}
& Post & $88.60 $ \tiny{$\pm 0.35$}  & $87.95 $ \tiny{$\pm 0.42$} & $38.19 $ \tiny{$\pm 1.25$}  &  $37.92 $ \tiny{$\pm 0.94$} & $11.22 $ \tiny{$\pm 0.38$}  & $13.58 $ \tiny{$\pm 0.40$}\\
& shift & \tiny{$92.16 \pm 0.18$} & \tiny{$91.21 \pm 0.18$} & \tiny{$61.22 \pm 1.15$} & \tiny{$61.84 \pm 0.93$} & \tiny{$43.63 \pm 0.65$} & \tiny{$43.23 \pm 0.83$}\\
\cline{2-8}
& $L^{\stdd}$ & $87.66 $ \tiny{$\pm 0.40$}  & $90.12 $ \tiny{$\pm 0.23$}  & $43.81\pm0.58$  & \cellcolor{blue!25}$\mathbf{45.33}$ \tiny{$\pm0.82$} & $4.21 $ \tiny{$\pm 1.76$}  & $10.53 $ \tiny{$\pm 1.48$}\\
& & \tiny{($94.34 \pm 0.07$)} & \tiny{($94.07 \pm 0.07$)} & \tiny{$74.61\pm0.15$} & \cellcolor{blue!25}\tiny{$73.67\pm0.05$} & \tiny{($59.66 \pm 0.18$)} & \tiny{($56.55 \pm 0.15$)}\\
\cline{2-8}
& $L^{\robd}$  & \cellcolor{blue!25} $\mathbf{90.94} $ \tiny{$\pm 0.16$} & $85.14 $ \tiny{$\pm 0.47$} & $42.96$ \tiny{$\pm0.99$}  & $27.59\pm0.86$ & ${17.75} $ \tiny{$\pm 1.19$} & $6.00 $ \tiny{$\pm 1.65$}\\
&\tiny{(teacher val)} & \cellcolor{blue!25}\tiny{($92.54 \pm 0.05$)} & \tiny{($89.58 \pm 0.11$)} & \tiny{($68.71\pm0.15$)} & \tiny{($54.79\pm0.23$)} & \tiny{($47.81 \pm 0.13$)} & \tiny{($39.49 \pm 0.14$)}\\
\cline{2-8}
& $L^{\robd}$ & $89.37 $ \tiny{$\pm 0.17$} & $87.32 $ \tiny{$\pm 0.21$} & $40.36$ \tiny{$\pm 0.72$}  & $42.68\pm0.74$ & $14.70 $ \tiny{$\pm 1.65$} & $16.16 $ \tiny{$\pm 1.42$}\\
&\tiny{(one-hot val)} & \tiny{($91.63 \pm 0.06$)} & \tiny{($91.16 \pm 0.10$)} & \tiny{($61.49\pm0.22$)} & \tiny{($62.03\pm0.24$)} & \tiny{($50.16 \pm 0.18$)} & \tiny{($44.39 \pm 0.23$)} \\
\bottomrule
\end{tabular}

\begin{tabular}{p{0.1cm}c||c|c|c||c|c|c||}
\toprule
& & \multicolumn{3}{c||}{\textbf{CIFAR-10-LT} Teacher Obj.} & \multicolumn{3}{c||}{\textbf{CIFAR-100-LT} Teacher Obj.} \\
& & $L^{\std}$ & $L^{\bal}$ & $L^{\rob}$ & $L^{\std}$ & $L^{\bal}$ & $L^{\rob}$ \\
\midrule
\multirow{14}{*}{\rotatebox{90}{Student Obj.}} 
& none & $57.26 $ \tiny{$\pm 0.55$} & $68.52 $ \tiny{$\pm 0.52$} & $74.8 $ \tiny{$\pm 0.30$} &   $0.00 $ \tiny{$\pm 0.00$} & $3.75 $ \tiny{$\pm 0.62$} & $10.33 $ \tiny{$\pm 0.82$} \\
&& \tiny{($76.27 \pm 0.20$)} & \tiny{($79.85 \pm 0.20$)} & \tiny{($80.29 \pm 0.12$)}  & \tiny{($43.33 \pm 0.16$)} & \tiny{($47.55 \pm 0.17$)} & \tiny{($44.27 \pm 0.13$)}\\
\cline{2-8}
& Post & $74.33 $ \tiny{$\pm 0.27$} & $73.60 $ \tiny{$\pm 0.46$} & $73.93 $ \tiny{$\pm 0.35$} &   $6.28 $ \tiny{$\pm 0.84$} & $8.89 $ \tiny{$\pm 0.69$} & $10.01 $ \tiny{$\pm 0.72$} \\
& shift & \tiny{($78.28 \pm 0.15$)}  & \tiny{($77.92 \pm 0.19$)} & \tiny{($77.93 \pm 0.12$)} &  \tiny{($27.93 \pm 0.46$)} & \tiny{($28.70 \pm 0.38$)} & \tiny{($29.88 \pm 0.61$)}\\
\cline{2-8}
& Ada & $47.52 $ \tiny{$\pm0.95$} & $66.74 $ \tiny{$\pm0.35$} & $70.33 $ \tiny{$\pm0.50$} & $0.00  $ \tiny{$\pm 0.00$} & $0.00  $ \tiny{$\pm 0.00$} & $12.46  $ \tiny{$\pm 0.36$} \\
& Margin & \tiny{($72.69\pm0.24$)} & \tiny{($78.20\pm0.09$)} & \tiny{($78.87\pm0.12$)}  & \tiny{($31.26 \pm 0.21$)} & \tiny{($34.06 \pm 0.12$)} & \tiny{($42.90 \pm 0.07$)}\\
\cline{2-8}
& $L^{\stdd}$ & $36.67  $ \tiny{$\pm 0.28$} & $66.96  $ \tiny{$\pm 0.43$} & $71.15  $ \tiny{$\pm 0.24$} & $0.00  $ \tiny{$\pm 0.00$} & $2.39  $ \tiny{$\pm 0.24$} & $7.32  $ \tiny{$\pm 0.47$} \\
&&\tiny{($69.5 \pm 0.13$)} & \tiny{($79.25 \pm 0.10$)} & \tiny{($80.95 \pm 0.11$)}  &\tiny{($43.86 \pm 0.14$)} & \tiny{($48.95 \pm 0.15$)} & \tiny{($47.93 \pm 0.11$)}\\
\cline{2-8}
& $L^{\bald}$ & $71.23  $ \tiny{$\pm 0.44$} & $70.52  $ \tiny{$\pm 0.20$} & $72.96  $ \tiny{$\pm 0.53$} & $4.39  $ \tiny{$\pm 0.65$} & $7.08  $ \tiny{$\pm 0.80$} & $7.19  $ \tiny{$\pm 0.79$} \\
&& \tiny{($80.5 \pm 0.12$)} & \tiny{($81.12 \pm 0.08$)} & \tiny{($80.71 \pm 0.07$)} & \tiny{($50.4 \pm 0.11$)} & \tiny{($50.1 \pm 0.09$)} & \tiny{($47.51\pm 0.20$)}\\
\cline{2-8}
& $L^{\robd}$  & $63.85  $ \tiny{$\pm 0.21$}  & \cellcolor{blue!25}$\mathbf{75.56} $ \tiny{$\pm 0.19$} & $69.21  $ \tiny{$\pm 0.45$}  & $9.05  $ \tiny{$\pm 0.71$}  & $12.52  $ \tiny{$\pm 0.98$} & $10.32  $ \tiny{$\pm 0.76$} \\
& \tiny{(teacher val)} & \tiny{($76.81 \pm 0.08$)} & \cellcolor{blue!25}\tiny{($80.81\pm 0.08$)} & \tiny{($76.72 \pm 0.19$)} & \tiny{($33.75 \pm 0.10$)} & \tiny{($34.05 \pm 0.09$)} & \tiny{($36.83 \pm 0.15$)}\\
\cline{2-8}
& $L^{\robd}$  & $73.59  $ \tiny{$\pm 0.25$} & $75.43  $ \tiny{$\pm 0.38$} & $74.7  $ \tiny{$\pm 0.19$} & $12.28  $ \tiny{$\pm 0.46$} & $11.94  $ \tiny{$\pm 0.80$} & \cellcolor{blue!25}$\mathbf{13.18}  $ \tiny{$\pm 0.61$} \\
& \tiny{(one-hot val)} & \tiny{($77.92 \pm 0.05$)} & \tiny{($79.02 \pm 0.07$)} & \tiny{($77.99 \pm 0.10$} & \tiny{($30.79 \pm 0.18$)} & \tiny{($29.8 \pm 0.20$)} & \cellcolor{blue!25}\tiny{($31.88 \pm 0.20$}\\
\bottomrule
\end{tabular}
\end{small}
\end{center}
\vskip -0.2in
\end{table*}

\textbf{Datasets.} We evaluate each robust distillation objective across different image dataset benchmarks: (i) CIFAR-10, (ii) CIFAR-100 \cite{Krizhevsky09learningmultiple}, and (iii) TinyImageNet  (a subset of ImageNet with 200 classes; \citet{le2015tiny}). We also include long tailed versions of each dataset \cite{cui2019class}.
Details on sampling the long tailed versions of the datasets 
and additional results on the full ImageNet dataset \citep{ILSVRC15} are given in Appendix \ref{app:experiment_details}.
For all datasets, we randomly split the original default test set in half to create our validation set and test set. We use the same validation and test sets for the long-tailed training sets as we do for the original versions, following the convention in prior work \cite{menon2020long, narasimhan2021training}. 

\textbf{Architectures.}  We evaluate our distillation protocols in both a self-distillation and compression setting. On all CIFAR datasets, all teachers were trained with the ResNet-56 architecture and students were trained with either ResNet-56 or ResNet-32. 
On TinyImageNet, teachers and students were trained with ResNet-18.
On ImageNet, teachers and students were trained with ResNet-34 and ResNet-18. 
This is as done by \citet{lukasik2021teachers} and \citet{he2016deep}, where more details on these architectures can be found (see, e.g., Table 7 in \citet{lukasik2021teachers}).

\textbf{Hyperparameters.} 
We apply temperature scaling to the teacher score distributions, i.e., compute $p^t(x) = \softmax(f^t(x) / \gamma)$, and  vary the temperature parameter $\gamma$  over a range of $\{1, 3, 5\}$.
Unless otherwise specified, the temperature hyperparameters were chosen to achieve the best worst-class accuracy on the validation set. A higher temperature produces a softer probability distribution over classes \citep{Hinton2015DistillingTK}. 
When teacher labels are applied to the validation set (see Section \ref{sec:algorithms}), we additionally include a temperature of 0.1 to approximate a hard thresholding of the teacher probabilities. We closely mimic the learning rate and regularization settings from prior work \cite{menon2020long,narasimhan2021training}. In keeping with the theory, the regularization ensures that the losses are bounded. 
See Appendix \ref{app:experiment_details} for further details.

\textbf{Baselines.}
We compare the robust distillation objectives with the following: 
\textit{(i)} \textbf{No distillation:} Models trained without distillation using each objective: $L^{\std}$, $L^{\bal}$, and $L^{\rob}$. We also include a comparison to group DRO \cite{Sagawa2020Distributionally} without distillation in Appendix \ref{app:experiment_details} which differs from our robust objective $L^{\rob}$ in that we apply a margin-based loss with a validation set.
\textit{(ii)} \textbf{Standard distillation without robustness:} Standard distillation protocol where both the teacher and student are trained with $L^{\std}$, $L^{\stdd}$, respectively.
\textit{(iii)} \textbf{Post-shifting:} 
Following Section \ref{sec:theory}, we evaluate a post-shift approach
that directly constructs a new scoring model by making post-hoc adjustments to the teacher,
so as to maximize the robust accuracy on the validation sample \citep{narasimhan2021training}.
\textit{(iv)} \textbf{AdaMargin and AdaAlpha \citep{lukasik2021teachers}:}
Both Ada* techniques are motivated by the observation that the margin defined for each class $y$ by $\gamma_{\rm avg}( y, p^{\rm t}( x ) ) = p^{\rm t}_y( x ) - \frac{1}{m - 1} \sum_{y' \neq y} p^{\rm t}_{y'}( x )$ 
correlates with whether distillation improves over one-hot training \cite{lukasik2021teachers}. 
AdaMargin uses that quantity as a margin in the distillation loss, whereas AdaAlpha uses it to adaptively mix between the one-hot and distillation losses.

\textbf{Which teacher/student combo is most robust?} 
For each objective and baseline,
we report the \textit{standard accuracy} over the test set (see \eqref{eq:std}), as well as the \textit{worst-class accuracy}, which we define to be the minimum per-class recall over all classes (see \eqref{eq:robust}). 
For the long-tail datasets, we follow the convention in 
\citet{menon2020long} of reporting \textit{balanced accuracy} (see \eqref{eq:balanced}) instead of standard
accuracy. 
In each case, we use the 0-1 loss for evaluation.
Table \ref{tab:combos} shows results for various combinations of the proposed robust objectives and baselines.
Across all datasets, the combination with best worst-class objective had at least one of either the teacher or the student apply the robust objective ($L^{\rob}$ or $L^{\robd}$). These combinations also achieve higher worst-class accuracy compared to the post-shift and AdaMargin techniques; although pairing these techniques with robust teachers could be competitive. Interestingly, post-shift was often seen to over-fit to the validation set, resulting in poorer test performance. 
As another comparison point, prior work by \cite{narasimhan2021training} also perform distillation with a standard teacher ($L^{\std}$) and robust student ($L^{\rob}$) with true one-hot labels on the validation set. In comparison to this, the new set of proposed robust combinations still achieves gains.

An inspection of the first row of Table \ref{tab:combos} reveals counter-intuitively that the teacher's worst-class accuracy is not a direct predictor of the robustness of a subsequent student. This couples with our theoretical understanding in Section \ref{sec:theory}, which showed that the ability of a teacher to train good students is determined by the calibration of scores within each class.
%
Perhaps surprisingly, it did not always benefit the robust student ($L^{\robd}$) to utilize the true one-hot labels in the validation set. Instead, training the robust student with teacher labels on the validation set was sufficient to achieve the best worst-class performance. This is promising from a data efficiency standpoint, since it can be expensive to build up a labeled dataset for validation, especially if the training data is long-tailed.

\begin{figure*}[t]
\vskip -0.2in
\begin{center}
\begin{tabular}{ccc}
    \includegraphics[trim={2.8cm 0 2.8cm 0},clip,width=0.11\textwidth]{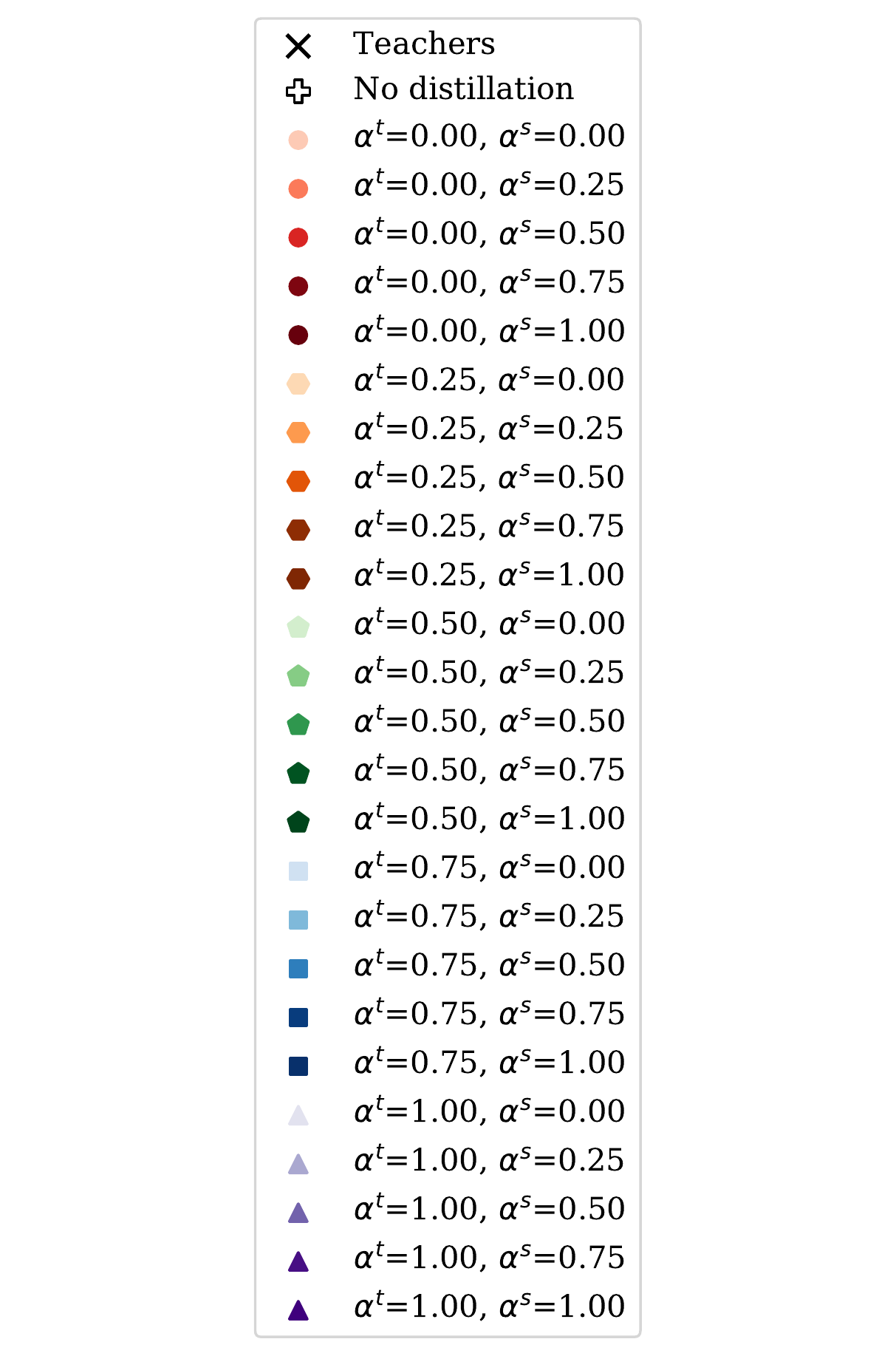} & \includegraphics[trim={0 0.2cm 0 0.1cm},clip,width=0.4\textwidth]{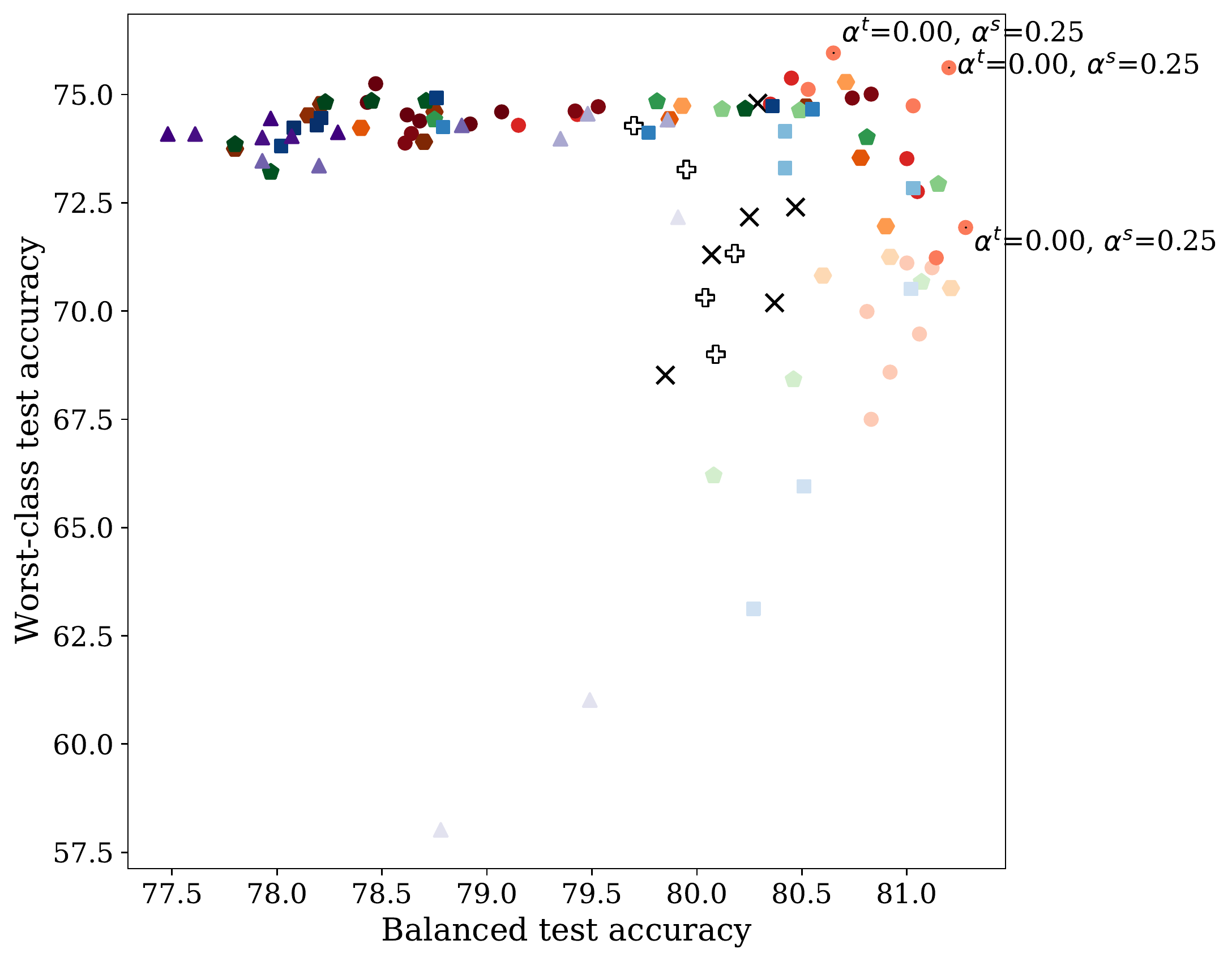} & \includegraphics[trim={0 0.2cm 0 0.1cm},clip,width=0.415\textwidth]{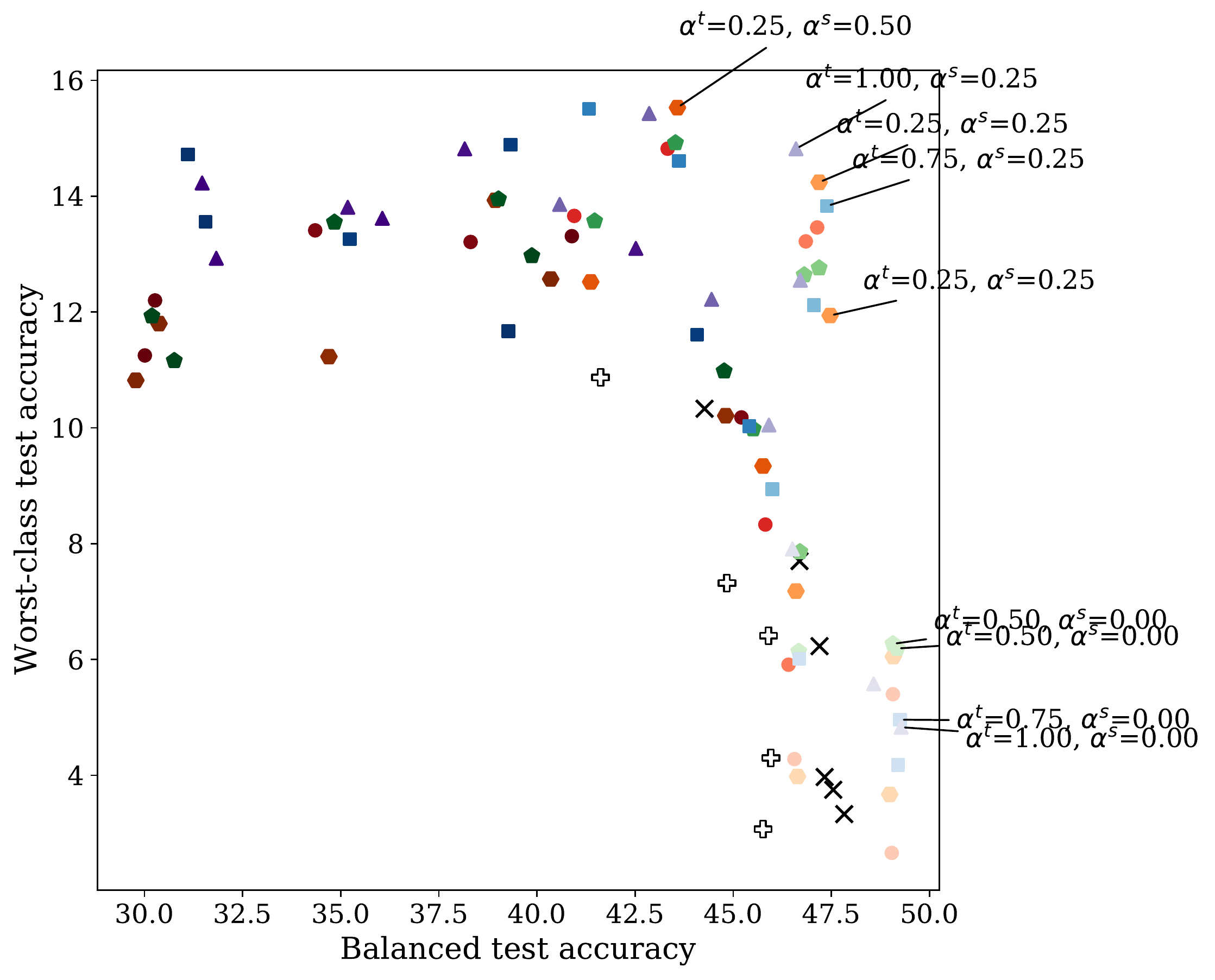}
\end{tabular}
\vspace{-5pt}
\caption{Trade-offs in worst-class test accuracy vs. balanced test accuracy for CIFAR-10-LT \textit{(left)} and CIFAR-100-LT \textit{(right)}, with ResNet-56 teachers and ResNet-32 students trained with one-hot validation labels. 
Robust teacher/student combinations produce points that are more Pareto efficient than the ResNet-56 teachers alone. Additional plots in Appendix \ref{app:tradeoff_plots} show similar results for self distillation and teacher validation labels. Mean test performance over 10 runs is shown.
}
\label{fig:tradeoffs_cf-lt_32_onehot}
\end{center}
\vskip -0.3in
\end{figure*}

\textbf{Trading off balanced vs. worst-class accuracy.} We also evaluate the robust objectives in a setting where distillation is used for efficiency gains by distilling a larger teacher into a smaller student.
In Figure \ref{fig:tradeoffs_cf-lt_32_onehot}, we plot the full trade-off between balanced accuracy and worst-class accuracy for the robust distillation protocols. Each point represents the outcome of a given combination of teacher and student objectives, where the teacher optimizes $L^{\tdf}$ with tradeoff parameter $\alpha^t$, and the student optimizes $L^{\tdfd}$ with tradeoff parameter $\alpha^s$.  
Strikingly, Figure \ref{fig:tradeoffs_cf-lt_32_onehot} shows that ResNet-32 students distilled with robust trade-offs can be more Pareto efficient than even the larger ResNet-56 teacher models. Thus, distillation with combinations of robust losses not only helps worst-case accuracy, but also achieves better trade-offs with balanced accuracy. Similar trends prevail across our experiment setups, including self distillation and the original non-long-tailed datasets (see Appendix \ref{app:experiment_details}).

Overall, we demonstrate empirically and theoretically the value of applying different combinations of teacher-student objectives, not only for improving worst-class accuracy, but also to achieve efficient trade-offs between average and worst-class accuracy.  
Future avenues for exploration include experiments with mixtures of label types and effects of properties of the data on robustness.

\newpage
\clearpage

\bibliographystyle{plainnat}
\bibliography{main}

\newpage
\clearpage
\appendix
\onecolumn
\section{Proofs}
\label{app:proofs}

\subsection{Proof of Theorem \ref{thm:bayes}}
\label{app:proof-bayes}
(i) The first result follows from the fact that the cross-entropy loss is a proper composite loss \citep{williamson2016composite} with the softmax function as the  associated (inverse) link function.

(ii) For a proof of the second result, please see \citet{menon2020long}.

(iii) Below, we provide a proof for the third result. 

The minimization of the robust objective in \eqref{eq:robust} over $f$ can be re-written as
a min-max optimization problem:
\begin{align}
\min_{f: \X \> \R^m}\,L^\rob(f) = \min_{f: \X \> \R^m}\,\max_{\lambda \in \Delta_m} 
\underbrace{
\sum_{y=1}^m \frac{\lambda_y}{\pi_y}\E\left[ \eta_y(X)\,\ell(y, f(X)) \right]}_{\omega(\lambda, f)}.
\label{eq:min-max}
\end{align}

The min-max objective $\omega(\lambda, f)$ is clearly linear in $\lambda$ (for fixed $f$) and 
with $\ell$ chosen to be the cross-entropy loss, is convex in $f$ (for fixed $\lambda$), i.e.,
$\omega(\lambda, \kappa f_1 + (1-\kappa) f_2) \leq \kappa\omega(\lambda,  f_1) + (1-\kappa) \omega(\lambda, f_2), \,\forall f_1, f_2:\X\>\R^m, \kappa\in [0,1]$. Furthermore, $\Delta_m$ is a convex compact set, while the domain of $f$ is convex. It follows 
from Sion's minimax theorem \citep{sion1958general} that:
\begin{align}
\min_{f: \X \> \R^m}\max_{\lambda \in \Delta_m}\,\omega(\lambda, f)
&=\, \max_{\lambda \in \Delta_m}\min_{f: \X \> \R^m}\,\omega(\lambda, f).
\label{eq:min-max-swap}
\end{align}

Let $(\lambda^*, f^*)$ be such that:
\[
\lambda^* \in \Argmax{\lambda \in \Delta_m}\min_{f: \X \> \R^m}\,\omega(\lambda, f);~~~~~
f^* \in \Argmin{f: \X \> \R^m}\max_{\lambda \in \Delta_m}\,\omega(\lambda, f),
\]
where for any fixed $\lambda \in \Delta_m$, owing to the use of the cross-entropy loss, a minimizer $f^*$ 
always exists for $\omega(\lambda, f)$, and is given by $f^*_y(x) = \log\left(\frac{\lambda_y}{\pi_y}\eta_y(x)\right) + C,$ for some $C\in \R$.

We then have from \eqref{eq:min-max-swap}:
\begin{align*}
\omega(\lambda^*, f^*) &\leq\, \max_{\lambda \in \Delta_m}\, \omega(\lambda, f^*)\\
&=\, \min_{f: \X \> \R^m}\max_{\lambda \in \Delta_m}\,\omega(\lambda, f)
\,=\, \max_{\lambda \in \Delta_m}\min_{f: \X \> \R^m}\,\omega(\lambda, f)\\
&=\,\min_{f: \X \> \R^m}\,\omega(\lambda^*, f)
\,\leq\, \omega(\lambda^*, f^*),
\end{align*}
which tells us that there exists $(\lambda^*, f^*)$ is a saddle-point for
\eqref{eq:min-max}, i.e.,
\begin{align*}
\omega(\lambda^*, f^*) &= \max_{\lambda \in \Delta_m}\, \omega(\lambda, f^*) \,=\, \min_{f: \X \> \R^m}\, \omega(\lambda^*, f).
\label{eq:saddle-point}
\end{align*}
Consequently, we have:
\begin{align*}
L^\rob(f^*) &=\, 
\max_{\lambda \in \Delta_m}\, \omega(\lambda, f^*) \,=\,
\min_{f: \X \> \R^m}\, \omega(\lambda^*, f)\\
&\leq\, \max_{\lambda \in \Delta_m}\min_{f: \X \> \R^m}\, \omega(\lambda, f)
\,=\, \min_{f: \X \> \R^m}\max_{\lambda \in \Delta_m}\, \omega(\lambda, f)\\
&=\, \min_{f: \X \> \R^m}\,L^\rob(f),
\end{align*}
where the last equality follows from \eqref{eq:min-max}.
We thus have that $f^*$ is a minimizer of $L^\rob(f)$

Furthermore, because $f^*$ is a minimizer of $\omega(\lambda^*, f)$ over $f$, i.e.,\
\[
f^* \in \Argmin{f: \X \> \R^m} \sum_{y=1}^m \frac{\lambda_y^*}{\pi_y}\E\left[ \eta_y(X)\,\ell(y, f(X)) \right],
\]
it follows that:
$$
\softmax_y(f^*(x)) \,\propto\, \frac{\lambda^*_y}{\pi_y}\eta_y(x).
$$

(iv) For the fourth result, we expand the traded-off objective,
and re-write it as:
\begin{align*}
    L^\tdf(f)
         &= (1 - \alpha) L^\bal(f) + \alpha L^\rob(f)\\
         &= 
             (1-\alpha)\frac{1}{m}\sum_{y=1}^m \frac{1}{\pi_y}\E\left[ \eta_y(X)\,\ell(y, f(X)) \right]
             \,+\,
             \alpha \max_{\lambda \in \Delta_m}\,\sum_{y=1}^m \frac{\lambda_y}{\pi_y}\E\left[ \eta_y(X)\,\ell(y, f(X)) \right]\\
        &=  \max_{\lambda \in \Delta_m}
        \underbrace{
            \sum_{y=1}^m \left((1-\alpha)\frac{1}{m} + \alpha\lambda_y\right)\frac{1}{\pi_y}\E\left[ \eta_y(X)\,\ell(y, f(X)) \right]}_{\omega(\lambda, f)}.
    \label{eq:trade-off}
\end{align*}
For a fixed $\lambda$, $\omega(\lambda, f)$ is convex in $f$ (as the loss $\ell$ is the cross-entropy loss),
and for a fixed $f$, $\omega(\lambda, f)$ is linear in $\lambda$. Following the same steps as the proof of (iii), we have that there exists $(\lambda^*, f^*)$ such that
\[
L^\tdf(f^*) \,=\, 
\max_{\lambda \in \Delta_m}\, \omega(\lambda, f^*) \,=\,
\min_{f: \X \> \R^m}\,L^\tdf(f),
\]
and
\[
f^* \in \Argmin{f: \X \> \R^m}\,
            \sum_{y=1}^m \left((1-\alpha)\frac{1}{m} + \alpha\lambda^*_y\right)\frac{1}{\pi_y}\E\left[ \eta_y(X)\,\ell(y, f(X)) \right],
\]
which, owing to the properties of the cross-entropy loss, then gives us the desired form for $f^*$.

\subsection{Proof of Theorem \ref{thm:good-teacher}}
\label{app:proof-good-teacher}
\begin{proof}
Expanding the left-hand side, we have:
\begin{align*}
|\hat{L}^\robd(f) - L^\rob(f)|
&\leq |\hat{L}^\robd(f) - {L}^\robd(f) + {L}^\robd(f) - L^\rob(f)|\\
&\leq |\hat{L}^\robd(f) - {L}^\robd(f)| + |{L}^\robd(f) - L^\rob(f)|\\
&= 
|\hat{L}^\robd(f) - {L}^\robd(f)| +  \left|
                        \max_{y \in [m]} \frac{ \E_{x}\left[ p^t_y(x)\,\ell(y, f(x)) \right] }{ \E_x\left[ p^t_y(x) \right] } -
                        \max_{y \in [m]} \frac{ \E_{x}\left[ \eta_y(x)\,\ell(y, f(x)) \right] }{ \pi_y } \right|\\
&\leq 
|\hat{L}^\robd(f) - {L}^\robd(f)| + \max_{y \in [m]} \left|
                        \frac{ \E_{x}\left[ p^t_y(x)\,\ell(y, f(x)) \right] }{ \E_x\left[ p^t_y(x) \right] } -
                        \frac{ \E_{x}\left[ \eta_y(x)\,\ell(y, f(x)) \right] }{ \pi_y } \right|\\
&\leq
|\hat{L}^\robd(f) - {L}^\robd(f)| +
    B\max_{y \in [m]} \E_x\left[
        \left| \frac{p^t_y(x)}{\E_x\left[ p^t_y(x) \right]} \,-\, \frac{\eta_y(x)}{\pi_y} \right|\ell(y, f(x))\right]\\
&\leq
|\hat{L}^\robd(f) - {L}^\robd(f)| +
    B\max_{y \in [m]} \E_x\left[
        \left| \frac{p^t_y(x)}{\E_x\left[ p^t_y(x) \right]} \,-\, \frac{\eta_y(x)}{\pi_y} \right|\right],
\end{align*}
where the second-last step uses Jensen's inequality and the fact that $\ell(y, f(x)) \geq 0$, 
and the last step uses the fact that $\ell(y, f(x)) \leq B$.

Further expanding the first term,
\begin{align*}
|\hat{L}^\robd(f) - L^\rob(f)|
&\leq
    \left| 
        \max_{y \in [m]} \phi_y(f)
        \,-\,
        \max_{y \in [m]} \hat{\phi}_y(f)
    \right|
        +
    B\max_{y \in [m]} \E_x\left[
        \left| \frac{p^t_y(x)}{\E_x\left[ p^t_y(x) \right]} \,-\, \frac{\eta_y(x)}{\pi_y} \right|\right]\\
&\leq
    \max_{y \in [m]}\left| 
         \phi_y(f)
        \,-\,
         \hat{\phi}_y(f)
    \right|
        +
    B\max_{y \in [m]} \E_x\left[
        \left| \frac{p^t_y(x)}{\E_x\left[ p^t_y(x) \right]} \,-\, \frac{\eta_y(x)}{\pi_y} \right|\right],
\end{align*}
as desired.
\end{proof}

\subsection{Calibration of Margin-based Loss $\cL^\mar$}
\label{app:calibration-mar}
To show that minimizer of the margin-based objective in \eqref{eq:margin-la} also minimizes
the balanced objective in \eqref{eq:balanced-distilled-empirical}, we state the following general result:
\begin{lemma}
\label{lem:helper-dro-1}
    Suppose $p^t \in \cF$ and $\cF$ is closed under linear transformations. 
    Let 
    \begin{align}
        \hat{f} &\in\, \displaystyle\Argmin{f \in \cF}\, \frac{1}{n}\sum_{i=1}^n \cL^\mar\left(p^t(x_i), f(x_i); \bc \right)
        \label{eq:margin-empirical}
    \end{align}
    for some cost vector $\bc \in \R_+^m$. Then:
    \[
            \hat{f}_y(x_i) = \log\left(c_y p_y^t(x_i)\right) + C_i, ~~\forall i \in [n],
    \]
for some example-specific constant constants $C_i \in \R, \forall i \in [n]$. Furthermore, 
    for any assignment of example weights of $w \in \R^n_+$,  
    $\hat{f}$ is also the minimizer of 
    the weighted objective:
    \begin{align}
        \hat{f} \,\in\, \displaystyle\Argmin{f \in \cF}\,\frac{1}{n}\sum_{i=1}^n w_i\sum_{y=1}^m c_y\, p_y^t(x_i)\, \ell\left( y , {f}(x_i) \right).
        \label{eq:weighted-loss-empirical}
    \end{align}
\end{lemma}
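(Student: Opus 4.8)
The plan is to recognize that the margin-based loss $\cL^\mar$ is, after a per-coordinate change of variables, an ordinary soft-label cross-entropy, and then to exploit the fact that cross-entropy against a fixed target distribution is a proper loss whose pointwise minimizer is attainable in $\cF$.

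First I would simplify $\cL^\mar$. Writing $g_j = f_j - \log c_j$, the margin term becomes $\log(c_y/c_j) - (f_y - f_j) = g_j - g_y$, so $1 + \sum_{j \ne y}\exp(g_j - g_y) = \sum_j \exp(g_j - g_y)$ and hence $\cL^\mar(\bp, \boldf; \bc) = \frac{1}{m}\sum_y p_y\bigl(-g_y + \log\sum_j \exp g_j\bigr) = \frac{1}{m}\sum_y p_y\,\ell^\xent(y, g)$. Thus, up to the shift $g = \boldf - \log\bc$, the margin loss is exactly the cross-entropy of the score vector $g$ against the soft labels $\bp$.

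Next I would minimize pointwise. Since $\frac{1}{n}\sum_i \cL^\mar(p^t(x_i), f(x_i); \bc)$ is a sum of per-example terms, it suffices to minimize each term over the score vector $f(x_i) \in \R^m$. Because $\sum_y p_y\,\ell^\xent(y, g) = -\sum_y p_y g_y + \log\sum_j \exp g_j$ is convex in $g$ with stationarity condition $\softmax_k(g) = p_k$, the minimizer satisfies $g_y = \log p_y + \text{const}$, i.e.\ $f_y(x_i) = \log(c_y\,p_y^t(x_i)) + C_i$ for some $C_i \in \R$. The assumptions $p^t \in \cF$ and closure of $\cF$ under linear transformations guarantee this pointwise optimum is realized by a single $f \in \cF$: a scorer $\tilde f$ with $\softmax(\tilde f) = p^t$ satisfies $\tilde f_y(x) = \log p_y^t(x) + D(x)$, and adding the constant vector $\log\bc$ (a linear transformation) yields the desired form simultaneously at every $x_i$. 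Hence every minimizer $\hat f$ has the claimed form.

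Finally, for the weighted objective I would again argue pointwise. With $\ell = \ell^\xent$, the per-example term $\sum_y c_y p_y^t(x_i)\ell^\xent(y, f(x_i)) = -\sum_y c_y p_y^t(x_i) f_y(x_i) + \bigl(\sum_y c_y p_y^t(x_i)\bigr)\log\sum_j \exp f_j(x_i)$ is convex in $f(x_i)$ with stationarity condition $\softmax_k(f(x_i)) \propto c_k p_k^t(x_i)$ — exactly the distribution induced by $\hat f$. Since each per-example term is minimized by this same $\hat f$ and the weights $w_i \ge 0$ only scale the terms nonnegatively, $\hat f$ minimizes the full weighted sum. The main obstacle is purely organizational: carrying out the change of variables cleanly and verifying that the common pointwise optimum is simultaneously attainable inside $\cF$ via the closure assumption, so that pointwise optimality upgrades to optimality over the whole class.
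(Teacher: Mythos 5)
Your proposal is correct and follows essentially the same route as the paper's proof: rewriting $\cL^\mar$ as a softmax cross-entropy with shifted logits, extracting the pointwise minimizer $\hat{f}_y(x_i) = \log(c_y p_y^t(x_i)) + C_i$, and observing that this same scorer pointwise-minimizes the weighted objective since its softmax is proportional to $c_y p_y^t(x_i)$ and the weights are nonnegative. If anything, you are slightly more careful than the paper in verifying that the common pointwise optimum is simultaneously attainable within $\cF$ via $p^t \in \cF$ and closure under (affine) transformations, a step the paper leaves implicit.
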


\begin{proof}
Following \citet{menon2020long} (e.g.\ proof of Theorem 1), we have that
for class probabilities $\bp \in \Delta_m$ and costs $\bc \in \R^m_+$, 
the margin-based loss in \eqref{eq:margin-la} 
\begin{align*}
\cL^\mar\left(\bp, \boldf; \bc \right) 
&=
\frac{1}{m}\sum_{y \in [m]} p_y \log\bigg(1 + \sum_{j \ne y}\exp\left(\log(c_y / c_{j}) \,-\, (f_y - f_j) \right) \bigg).
\end{align*}
is minimized by:
\[
        f^*_y = \log\left(c_y p_y\right) + C,
\]
for any $C > 0$. To see why this is true, note that the above loss can be equivalently written as:
\begin{align*}
\cL^\mar\left(\bp, \boldf; \bc \right) 
&=
-\frac{1}{m}\sum_{y \in [m]} p_y \log\bigg(\frac{ \exp\left(f_y - \log(c_y) \right) }{ \sum_{j=1}^m \exp\left(f_j - \log(c_j) \right) } \bigg).
\end{align*}
This the same as the softmax cross-entropy loss with adjustments made to the logits, the minimizer for which is of the form:
\[
        f^*_y - \log(c_y) = \log\left(p_y\right) + C~~~~\text{or}~~~~f^*_y = \log\left(c_y p_y\right) + C.
\]

It follows that any minimizer $\hat{f}$ of the
average margin-based loss in \eqref{eq:margin-empirical} over sample $S$,
would do so point-wise, and therefore
\begin{align*}
        \hat{f}_y(x_i) = \log\left(c_y p_y^t(x_i)\right) + C_i, ~~\forall i \in [n],
\end{align*}
for some example-specific constant constants $C_i \in \R, \forall i \in [n]$.

To prove the second part, we  note that for the minimizer $\hat{f}$ to also minimize 
the weighted objective:
    \[
        \frac{1}{n}\sum_{i=1}^n w_i\sum_{y=1}^m c_y\, p_y^t(x_i)\, \ell\left( y , {f}(x_i) \right),
    \]
it would also have to do so point-wise for each $i \in [m]$, and so as long the weights $w_i$ are non-negative,
it suffices that
\[
    \hat{f}(x_i) \in \Argmin{\boldf \in \R^m}\, \sum_{y=1}^m c_y\, p_y^t(x_i)\, \ell\left( y , {f}(x_i) \right).
\]
This is indeed the case when $\ell$ is the softmax cross-entropy loss, where
the point-wise minimizer for each $i \in [m]$ would be of the form $\softmax_y(f(x)) = c_y p^t_y(x)$,
which is satisfied by $\hat{f}$.
\end{proof}
A similar result also holds in the population limit, when \eqref{eq:margin-empirical}
and \eqref{eq:weighted-loss-empirical} are computed in expectation, and the per-example weighting in \eqref{eq:weighted-loss-empirical} is
replaced by an arbitrary weighting function $w(x) \in \R_+$. Any scorer of the following form
would then minimize both objectives:
\[
\hat{f}_y(x) = \log\left(c_y p_y^t(x)\right) + C(x), ~~\forall x \in \X,
\]
where $C(x)$ is some  example-specific constant.

\subsection{Proof of Proposition \ref{prop:student-form}}
\label{app:student-form}
\begin{proposition*}[Restated]
Suppose $p^t \in \cF$ and $\cF$ is closed under linear transformations. Then the final scoring function $\bar{f}^s(x) = \frac{1}{K} \sum_{k=1}^K f^{k}(x)$ output by Algorithm \ref{algo:dro} is of the form:
    \[
        \softmax_j(\bar{f}^s(x)) \propto \bar{\lambda}_j p_j^t(x),~~~~\forall j \in [m],~ \forall (x, y) \in S,
    \]
where $\bar{\lambda}_y = \left(\prod_{k=1}^K \lambda_y^k / \pi^t_y\right)^{1/K}$. 
\end{proposition*}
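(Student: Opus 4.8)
The plan is to unwind the two update steps of Algorithm \ref{algo:dro} and show that each iterate $f^k$ has the claimed multiplicative form, then argue that averaging the logits produces the geometric-mean coefficient $\bar\lambda$. The key enabling fact is Lemma \ref{lem:helper-dro-1}: since the $f$-minimization step at iteration $k$ minimizes exactly the empirical margin objective $\frac{1}{n}\sum_i \cL^\mar(p^t(x_i), f(x_i); \lambda^{k+1}/\hat\pi^t)$, the lemma (applied with cost vector $\bc = \lambda^{k+1}/\hat\pi^t$) tells us that on each training point the minimizer satisfies
\[
f^{k}_y(x_i) = \log\!\left( \frac{\lambda^{k}_y}{\hat\pi^t_y}\, p^t_y(x_i)\right) + C^k_i,
\]
for some example-specific constant $C^k_i \in \R$. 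This is where the hypotheses $p^t \in \cF$ and $\cF$ closed under linear transformations are used: they guarantee the pointwise minimizer is actually attainable within the class $\cF$, so the algorithm genuinely returns a scorer of this form rather than an approximation.

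Next I would average the logits over $k = 1, \dots, K$, as the output $\bar f^s(x) = \frac1K \sum_k f^k(x)$ prescribes. Summing the per-coordinate expression above gives
\[
\bar f^s_y(x_i) = \frac{1}{K}\sum_{k=1}^K \log\!\left(\frac{\lambda^k_y}{\hat\pi^t_y}\, p^t_y(x_i)\right) + \frac{1}{K}\sum_{k=1}^K C^k_i.
\]
The crucial observation is that the additive constant $\frac1K\sum_k C^k_i$ depends only on $i$ (not on the class index $y$), so it contributes a class-independent shift that washes out under the softmax. Splitting the remaining logarithm, the $p^t_y(x_i)$ factor contributes $\log p^t_y(x_i)$ to the logit, while the product $\prod_k (\lambda^k_y/\hat\pi^t_y)^{1/K}$ is exactly the definition of $\bar\lambda_y$ (up to identifying $\pi^t_y$ with $\hat\pi^t_y$, consistent with the statement's $\bar\lambda_y = (\prod_k \lambda^k_y/\pi^t_y)^{1/K}$). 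Exponentiating and normalizing,
\[
\softmax_j(\bar f^s(x_i)) \propto \exp\!\big(\bar f^s_j(x_i)\big) \propto \bar\lambda_j\, p^t_j(x_i),
\]
which is the desired conclusion for every training point $(x_i,y_i) \in S$.

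The main obstacle I anticipate is bookkeeping the example-specific constants carefully: one must verify that the additive freedom $C^k_i$ in Lemma \ref{lem:helper-dro-1} is genuinely class-independent at each iteration, since otherwise the averaging would not cleanly factor into the $\bar\lambda_j p^t_j$ product. A secondary subtlety is the normalization convention — the statement writes $\bar\lambda_y$ using the true $\pi^t_y$ while the algorithm's margins use the empirical $\hat\pi^t_y$ — so I would either absorb any discrepancy into the $\propto$ (since a class-dependent but $x$-independent rescaling is permissible only if it matches the definition of $\bar\lambda$) or note that the intended reading identifies these normalizers. Everything else is a routine application of the calibration lemma followed by linearity of the logit-averaging, so I expect no serious analytic difficulty beyond this constant-tracking.
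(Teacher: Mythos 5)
Your proposal is correct and follows essentially the same route as the paper's proof: apply Lemma \ref{lem:helper-dro-1} with costs $\lambda^{k}/\hat{\pi}^t$ to get the per-iterate form $f^k_y(x_i) = \log(\lambda^k_y p^t_y(x_i)/\hat{\pi}^t_y) + C^k_i$, average the logits so the class-independent constants wash out under the softmax, and identify the geometric mean of the coefficients with $\bar{\lambda}_y$. The $\pi^t$ versus $\hat{\pi}^t$ discrepancy you flag is present in the paper's own statement and proof as well, and is handled there exactly as you suggest, by identifying the two normalizers.
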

\begin{proof}
The proof follows from Lemma \ref{lem:helper-dro-1} with the costs $\bc$ set to $\lambda^k / \pi^t$ for each iteration $k$. 
The lemma tells us that each $f^k$ is of the form:
\[
        f^k(x') = \log\left(\frac{\lambda^k_y}{\pi^t_y} p_y^t(x')\right) + C(x'), ~~\forall (x', y') \in S,
    \]
for some example-specific constant $C(x') \in \R$. Consequently, we have that:
    \[
        \bar{f}_y^s(x') = \log(\bar{\lambda}_y p_y^t(x')) + \bar{C}(x'), ~~\forall (x', y') \in S,
    \]
    where $\bar{\lambda}_y = \left(\prod_{k=1}^K \lambda_y^k / \pi^t_y\right)^{1/K}$ and $\bar{C}(x') \in \R$. Applying a
    softmax to $\bar{f}^s$ results in the desired form.
\end{proof}

\subsection{Proof of Theorem \ref{thm:dro}}
\label{app:convergence-dro}

\begin{theorem*}[Restated]
Suppose $p^t \in \cF$ and $\cF$ is closed under linear transformations.  Suppose
$\ell$ is the softmax cross-entropy loss $\ell^\xent$,
$\ell(y, z) \leq B$
and $\max_{y \in [m]}\frac{1}{\pi^t_y} \leq Z$, for some $B, Z > 0$.
Furthermore, suppose for any $\delta \in (0,1)$, the following bound holds on the
estimation error in Theorem \ref{thm:good-teacher}:
with probability at least $1 - \delta$ (over draw of $S \sim D^n$),
for all $f \in \cF$,
\[
\max_{y \in [m]} \big|\phi_y(f) - \hat{\phi}_y(f)\big| \leq \Delta(n, \delta),
\]
for some $\Delta(n, \delta) \in \R_+$ that
is increasing in $1/\delta$, and goes to 0 as $n \> \infty$. Fix $\delta \in (0,1)$.
Then when the step size $\gamma = \frac{1}{2BZ}\sqrt{\frac{\log(m)}{{K}}}$
and $n^\val \geq 8Z\log(2m/\delta)$, with
probability at least $1-\delta$ (over draw of $S \sim D^n$ and $S^\val \sim D^{n^\val}$)
\begin{align*}
L^\rob(\bar{f}^s) &\leq\,
\min_{f \in \cF}L^\rob(f)
        \,+\,
    \underbrace{2B\max_{y \in [m]} \E_x\left[
        \left| \frac{p^t_y(x)}{\pi^t_y} \,-\, \frac{\eta_y(x)}{\pi_y} \right|\right]}_{\text{Approximation error}}\\
        &\hspace{4cm}
        \,+\, \underbrace{2\Delta(n^\val, \delta/2) \,+\,  2\Delta(n, \delta/2)}_{\text{Estimation error}}
        \,+\,
        \underbrace{4BZ\sqrt{\frac{\log(m)}{{K}}}}_{\text{EG convergence}}.
\end{align*}
\end{theorem*}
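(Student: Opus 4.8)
The plan is to reduce the theorem to a convergence statement for the empirical min-max problem solved by Algorithm~\ref{algo:dro}, stated in terms of the \emph{teacher-population} per-class losses, and then to pay the approximation error once on each side. It is convenient to track three per-class quantities: the true loss $\psi_y(f) = \frac{1}{\pi_y}\E_x[\eta_y(x)\ell(y,f(x))]$ (so $L^\rob(f)=\max_y\psi_y(f)$), the teacher-population loss $\phi_y(f) = \frac{1}{\pi^t_y}\E_x[p^t_y(x)\ell(y,f(x))]$, and its empirical training and validation estimates $\hat\phi_y(f)$ and $\hat\phi^\val_y(f)$. Writing $A := \max_y\E_x[|p^t_y(x)/\pi^t_y - \eta_y(x)/\pi_y|]$ for the approximation error, the same computation as in the proof of Theorem~\ref{thm:good-teacher} gives $|\psi_y(f)-\phi_y(f)| \le B\,\E_x[|\eta_y(x)/\pi_y - p^t_y(x)/\pi^t_y|] \le BA$ for all $f,y$. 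Since $\max_y(\cdot)$ is $1$-Lipschitz, this yields both $L^\rob(\bar f^s) \le \max_y\phi_y(\bar f^s) + BA$ and $\min_{f\in\cF}\max_y\phi_y(f) \le \min_{f\in\cF}L^\rob(f) + BA$, which together account for the $2BA$ approximation term. It therefore suffices to prove $\max_y\phi_y(\bar f^s) \le \min_{f\in\cF}\max_y\phi_y(f) + 2\Delta(n^\val,\delta/2) + 2\Delta(n,\delta/2) + 4BZ\sqrt{\log(m)/K}$.

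For this core inequality I would first use convexity of the cross-entropy loss: each $\phi_y$ is convex in $f$, so Jensen applied to $\bar f^s = \frac1K\sum_k f^k$ gives $\max_y\phi_y(\bar f^s) \le \max_{\lambda\in\Delta_m}\frac1K\sum_k\langle\lambda,\phi(f^k)\rangle$, with $\phi(f)=(\phi_1(f),\dots,\phi_m(f))$. I would then control the right-hand side in four moves, all on one high-probability event. (1) Exchange population for validation via the estimation bound: $\langle\lambda,\phi(f^k)\rangle \le \langle\lambda,\hat\phi^\val(f^k)\rangle + \Delta(n^\val,\delta/2)$. (2) Apply the exponentiated-gradient regret bound to the $\lambda$-iterates, whose updates in Algorithm~\ref{algo:dro} are exactly EG ascent on the validation losses $\hat\phi^\val(f^k)$, bounding $\max_\lambda\frac1K\sum_k\langle\lambda,\hat\phi^\val(f^k)\rangle$ by $\frac1K\sum_k\langle\lambda^k,\hat\phi^\val(f^k)\rangle$ plus the averaged regret. (3) Exchange validation back to population, $\langle\lambda^k,\hat\phi^\val(f^k)\rangle \le \langle\lambda^k,\phi(f^k)\rangle + \Delta(n^\val,\delta/2)$, then population to training, $\langle\lambda^k,\phi(f^k)\rangle \le \langle\lambda^k,\hat\phi(f^k)\rangle + \Delta(n,\delta/2)$. (4) Invoke best-response optimality: by Lemma~\ref{lem:helper-dro-1} the margin-based $f$-step returns an \emph{exact} minimizer of the cost-weighted training loss $\langle\lambda^k,\hat\phi(\cdot)\rangle$ over $\cF$ (using $p^t\in\cF$ and closure under linear transformations), so $\langle\lambda^k,\hat\phi(f^k)\rangle \le \langle\lambda^k,\hat\phi(f^\star)\rangle \le \max_y\phi_y(f^\star) + \Delta(n,\delta/2)$ for $f^\star\in\argmin_{f\in\cF}\max_y\phi_y(f)$. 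Chaining (1)--(4) collects two copies each of $\Delta(n^\val,\delta/2)$ and $\Delta(n,\delta/2)$, plus the EG regret.

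Next I would instantiate the regret bound. Exponentiated gradient over $\Delta_m$ with rewards in $[0,B]$ and step size $\gamma$ has averaged regret at most $\frac{\log m}{\gamma K} + \frac{\gamma B^2}{2}$; choosing $\gamma = \frac{1}{2BZ}\sqrt{\log(m)/K}$ balances the two contributions and yields the stated $4BZ\sqrt{\log(m)/K}$ term. The condition $n^\val \ge 8Z\log(2m/\delta)$ enters through a multiplicative Chernoff bound guaranteeing $\hat\pi^{t,\val}_y \ge \tfrac12\pi^t_y$ for all $y$ with probability $\ge 1-\delta/2$; this keeps the random denominator in $\hat\phi^\val_y$ bounded away from zero, so the validation losses genuinely concentrate at the rate $\Delta(n^\val,\delta/2)$ assumed in step (1) and (3). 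A final union bound over the training estimation event and the validation concentration/estimation event (each of probability $1-\delta/2$) makes all displayed inequalities hold simultaneously with probability at least $1-\delta$.

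The main obstacle is the \emph{two-sample} structure of the dynamics: the dual player updates from validation losses while the primal player best-responds on training losses, so a single-distribution saddle-point argument does not apply directly. The bridging in steps (1) and (3) -- repeatedly routing through the common population loss $\phi$ -- is exactly what repairs this, at the cost of the doubled estimation terms. Care is also needed to align iterate indices, since in Algorithm~\ref{algo:dro} the update producing $\lambda^{k+1}$ uses the reward from $f^k$ while $f^{k+1}$ best-responds to $\lambda^{k+1}$; this off-by-one is standard and is absorbed by averaging over $k=1,\dots,K$, the contribution of the initial iterate being $O(1/K)$. A secondary subtlety is that the bound relies on the $f$-step being an \emph{exact} minimizer, which is precisely the property supplied by the margin surrogate via Lemma~\ref{lem:helper-dro-1}; the ``few steps of SGD'' used in practice is the source of an additional optimization-error term not modeled here.
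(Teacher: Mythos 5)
Your proposal is correct and follows essentially the same route as the paper's proof: Jensen's inequality on the averaged iterate, the exponentiated-gradient regret bound applied to the validation losses (with the condition $n^\val \geq 8Z\log(2m/\delta)$ entering via a Chernoff bound to keep $\hat{\phi}^\val_y(f) \leq 2BZ$), bridging between validation, population, and training losses at the cost of $2\Delta(n^\val,\delta/2) + 2\Delta(n,\delta/2)$, exact best-response of the $f$-step via Lemma~\ref{lem:helper-dro-1}, and paying the approximation error $2B\max_{y}\E_x\big[\big|\tfrac{p^t_y(x)}{\pi^t_y}-\tfrac{\eta_y(x)}{\pi_y}\big|\big]$ once on each side at the end. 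The only cosmetic difference is that you close the argument by comparing directly against a fixed comparator $f^\star \in \argmin{f \in \cF}\max_{y \in [m]}\phi_y(f)$ via $\sum_y \lambda^k_y\phi_y(f^\star) \leq \max_y \phi_y(f^\star)$, whereas the paper routes through $\max_{\lambda}\min_{f}$ and invokes Sion's minimax theorem to swap the order --- your version is marginally more elementary but yields the identical constants and structure (and your passing remark that the EG rewards lie in $[0,B]$ should read $[0,2BZ]$, which is exactly why the chosen step size carries the $2BZ$ factor).
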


Before proceeding to the proof, we will find it useful to define:
\begin{align*}
\displaystyle
\hat{\phi}^\val_y(f^s) &= \frac{1}{\hat{\pi}^{t,\val}_y}\frac{1}{n^\val}\sum_{(x', y') \in S^{\val}}  p_y^t(x')\,\ell\left( y , f^s(x') \right).
\end{align*}
We then state a useful lemma.


\allowdisplaybreaks

\begin{lemma}
\label{lem:helper-dro-2}
Suppose the conditions in Theorem \ref{thm:dro} hold.
Then with probability $\leq 1 - \delta$ (over draw of $S \sim D^{n}$ and $S^\val \sim D^{n^\val}$), at each iteration $k$, 
\[
\sum_{y=1}^m \lambda^{k+1}_y  \phi_y(f^{k+1})
\,-\,
\min_{f \in \cF}\,\sum_{y=1}^m \lambda^{k+1}_y  \phi_y(f)
\leq 2\Delta(n, \delta);
\]
and for any $\lambda \in \Delta_m$:
\[
\left|\sum_{y=1}^m \lambda_y \hat{\phi}^\val_y(f^{k+1}) \,-\, 
\sum_{y=1}^m \lambda_y  \phi_y(f^{k+1}) \right| ~\leq~ \Delta(n^\val, \delta).
\]
\end{lemma}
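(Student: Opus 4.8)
The plan is to reduce both inequalities to the single estimation-error assumption of Theorem \ref{thm:dro}, applied once to the training sample $S$ (for the first bound) and once to the validation sample $S^\val$ (for the second). The structural fact I will lean on throughout is that the weights live in the simplex: for any $\lambda \in \Delta_m$ and any per-class numbers $a_y$, $|\sum_y \lambda_y a_y| \le \max_y |a_y|$ since $\sum_y \lambda_y = 1$. This lets me pass from a per-class uniform bound such as $\max_y |\phi_y(f) - \hat\phi_y(f)| \le \Delta(n,\delta)$ to any $\lambda$-weighted combination, even when $\lambda$ is data-dependent.

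For the first bound, recall that $f^{k+1}$ minimizes the margin-based surrogate with costs $\lambda^{k+1}/\hat\pi^t$, which by Lemma \ref{lem:helper-dro-1} (second part, with unit example weights) coincides with a minimizer of the training-weighted loss $\sum_y \lambda^{k+1}_y \hat\phi_y(f)$ over $\cF$. Let $f^\star \in \Argmin{f\in\cF}\sum_y \lambda^{k+1}_y \phi_y(f)$. On the event (probability $\ge 1-\delta$ over $S$) where the training uniform bound holds, the simplex observation gives $|\sum_y \lambda^{k+1}_y(\phi_y(f) - \hat\phi_y(f))| \le \Delta(n,\delta)$ for every $f \in \cF$. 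I then chain the standard excess-risk argument: the population objective at $f^{k+1}$ is within $\Delta(n,\delta)$ of its empirical value, the empirical value at $f^{k+1}$ is no larger than at $f^\star$ by optimality, and the empirical value at $f^\star$ is within $\Delta(n,\delta)$ of the population value, yielding the claimed $2\Delta(n,\delta)$ gap.

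For the second bound, the key point — and the main thing to get right — is that $f^{k+1}$ depends on $S^\val$ through the exponentiated-gradient updates on $\lambda$, so I cannot treat it as fixed and apply a single-function concentration inequality. Instead I invoke the estimation-error assumption on the validation draw: since $S^\val$ is a fresh sample from the same distribution $D$ and $\hat\phi^\val_y$ is exactly the validation analogue of $\hat\phi_y$ (with $n^\val$, $S^\val$, $\hat\pi^{t,\val}$ in place of $n$, $S$, $\hat\pi^t$), the identical uniform-convergence statement applies, giving $\max_y |\phi_y(f) - \hat\phi^\val_y(f)| \le \Delta(n^\val,\delta)$ for all $f\in\cF$ simultaneously with probability $\ge 1-\delta$ over $S^\val$. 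Applying this at $f = f^{k+1}$ and again invoking the simplex bound to move from the max over classes to the arbitrary $\lambda$-weighted combination delivers the second inequality.

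A few bookkeeping remarks complete the plan. The two bounds are governed by two independent events, one a function of $S$ and the other of $S^\val$; here I state each at confidence $1-\delta$, deferring the $\delta \mapsto \delta/2$ substitution and union bound to the proof of Theorem \ref{thm:dro}, where both must hold at once. Finally, the validation uniform bound implicitly needs the empirical class marginals $\hat\pi^{t,\val}_y$ to be well-behaved; this is exactly what the sample-size condition $n^\val \ge 8Z\log(2m/\delta)$ secures, ensuring (via $1/\pi^t_y \le Z$) that each $\hat\pi^{t,\val}_y$ concentrates around $\pi^t_y$ and stays bounded away from zero, so that $\hat\phi^\val_y$ is a stable estimate of $\phi_y$.
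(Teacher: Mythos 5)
Your proposal is correct and follows essentially the same route as the paper: both parts reduce to the uniform (over $\cF$) estimation-error assumption of Theorem \ref{thm:dro}, using Lemma \ref{lem:helper-dro-1} to identify $f^{k+1}$ as the empirical weighted-loss minimizer, the standard two-term excess-risk chaining for the first bound, and the simplex inequality $|\sum_y \lambda_y a_y| \le \max_y |a_y|$ to absorb the data-dependent weights — exactly the paper's $\max_{\lambda\in\Delta_m}$ step. Your explicit remark that the sup over $f\in\cF$ is what handles the dependence of $f^{k+1}$ on $S^\val$ is the same mechanism the paper uses, just stated more transparently.
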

\begin{proof}
We first note that by applying Lemma \ref{lem:helper-dro-1}  with $w_i = 1,\forall i$, we have that $f^{k+1}$ is the minimizer
of $\sum_{y=1}^m \lambda^{k+1}_y  \hat{\phi}_y(f)$ over all $f\in \cF$, and therefore:
\begin{equation}
    \sum_{y=1}^m \lambda^{k+1}_y  \hat{\phi}_y(f^{k+1}) \,\leq\, \sum_{y=1}^m \lambda^{k+1}_y  \hat{\phi}_y(f),~\forall f\in \cF.
    \label{eq:f-k+1-minimizer}
\end{equation}

Further, for a fixed iteration $k$, let us denote $\tilde{f} \in \Argmin{f \in \cF}\,  \sum_{y=1}^m\lambda^{k+1}_y  \phi_y({f})$. 
Then for the first part, we have:
\begin{align*}
\lefteqn{
\sum_{y=1}^m \lambda^{k+1}_y  \phi_y(f^{k+1})
\,-\,
\sum_{y=1}^m \lambda^{k+1}_y  \phi_y(\tilde{f})
}\\
&\leq
\sum_{y=1}^m \lambda^{k+1}_y  \phi_y(f^{k+1})
\,-\,
\sum_{y=1}^m \lambda^{k+1}_y \hat{\phi}_y(f^{k+1})
\,+\, \sum_{y=1}^m \lambda^{k+1}_y \hat{\phi}_y(f^{k+1})
\,-\,
\sum_{y=1}^m \lambda^{k+1}_y  \phi_y(\tilde{f})
\\
&\leq 
\sum_{y=1}^m \lambda^{k+1}_y  \phi_y(f^{k+1})
\,-\,
\sum_{y=1}^m \lambda^{k+1}_y \hat{\phi}_y(f^{k+1})
\,+\, \sum_{y=1}^m \lambda^{k+1}_y \hat{\phi}_y(\tilde{f})
\,-\,
\sum_{y=1}^m \lambda^{k+1}_y  \phi_y(\tilde{f})
\\
&\leq
    2\sup_{f \in \cF}\left|
    \sum_{y=1}^m \lambda^{k+1}_y  \hat{\phi}_y(f)
        \,-\,
    \sum_{y=1}^m \lambda^{k+1}_y  \phi_y(f)
    \right|\\
    &\leq
         2\sup_{f \in \cF}\max_{\lambda \in \Delta_m}\, 
        \left|\sum_{y=1}^m \lambda_y  \hat{\phi}_y(f)
        \,-\,
    \sum_{y=1}^m \lambda_y  \phi_y(f)
            \right|\\
    &\leq
        2 \sup_{f \in \cF}\max_{\lambda \in \Delta_m}\, 
        \sum_{y=1}^m {\lambda_y}
        \left|\hat{\phi}_y(f) \,-\, \phi_y(f) \right|
    \\
    &=  2\sup_{f \in \cF}\max_{y \in [m]} \big|\hat{\phi}_y(f) - {\phi}_y(f)\big|.
\end{align*}
where for the second inequality, we use \eqref{eq:f-k+1-minimizer}.
Applying the generalization bound assumed in Theorem \ref{thm:dro},
we have with probability $\leq 1 - \delta$ (over draw of $S \sim D^{n}$), for all iterations $k \in [K]$,
\[
\sum_{y=1}^m \lambda^{k+1}_y  \phi_y(f^{k+1})
\,-\,
\sum_{y=1}^m \lambda^{k+1}_y  \phi_y(\tilde{f})
    \,\leq\, 2\Delta(n, \delta),
\]

For the second part, note that for any $\lambda \in \Delta_m$,
\begin{align*}
    \left|\sum_{y=1}^m \lambda_y \hat{\phi}^\val_y(f^{k+1}) \,-\, 
\sum_{y=1}^m \lambda_y  \phi_y(f^{k+1}) \right| &\leq \sum_{y=1}^m \lambda_y\left| \hat{\phi}^\val_y(f^{k+1}) \,-\, 
\phi_y(f^{k+1})\right|\\
&\leq \max_{y\in[m]}\,\left| \hat{\phi}^\val_y(f^{k+1}) \,-\, 
\phi_y(f^{k+1})\right|\\
&\leq \sup_{f\in \cF}\max_{y\in[m]}\,\left| \hat{\phi}^\val_y(f) \,-\, 
\phi_y(f)\right|.
\end{align*}
An application of the generalization bound assumed in Theorem \ref{thm:dro} to empirical estimates from the validation sample
completes the proof.
\end{proof}

We are now ready to prove Theorem \ref{thm:dro}.

\begin{proof}[Proof of Theorem \ref{thm:dro}]
Note that because $\min_{y \in [m]}\pi^{t}_y \geq \frac{1}{Z}$ and $n^\val \geq 8Z\log(2m/\delta)$,
we have by a direct application of Chernoff's bound (along with a union bound over all $m$ classes) that with
probability at least $1-\delta/2$:
$$
\min_{y \in [m]}\hat{\pi}^{t,\val}_y \geq \frac{1}{2Z}, \forall y \in [m]
$$
and consequently,
$\hat{\phi}_y^\val(f) \leq 2BZ, \forall f \in \cF$. The boundedness of $\hat{\phi}_y^\val$ will then allow us to apply standard convergence guarantees for exponentiated gradient ascent \citep{shalev2011online}. For $\gamma = \frac{1}{2BZ}\sqrt{\frac{\log(m)}{{K}}}$,
the updates on $\lambda$ will give us  with
probability at least $1-\delta/2$:
\begin{equation}
\max_{\lambda \in \Delta_m}\,\frac{1}{K}\sum_{k=1}^K\sum_{y=1}^m \lambda_y \hat{\phi}_y^\val(f^k)
\,\leq\, 
\frac{1}{K}\sum_{k=1}^K\sum_{y=1}^m \lambda_y^k \hat{\phi}_y^\val(f^k)
\,+\,
4BZ\sqrt{\frac{\log(m)}{{K}}}
    \label{eq:phi-upper-bound}
\end{equation}

Applying the second part of Lemma \ref{lem:helper-dro-2} to each iteration $k$, we have
with probability at least $1-\delta$:
\[
\max_{\lambda \in \Delta_m}\,\frac{1}{K}\sum_{k=1}^K\sum_{y=1}^m \lambda_y {\phi}_y(f^k)
\,\leq\, 
\frac{1}{K}\sum_{k=1}^K\sum_{y=1}^m \lambda_y^k {\phi}_y(f^k)
\,+\,
4BZ\sqrt{\frac{\log(m)}{{K}}} \,+\, 2\Delta(n^\val, \delta/2),
\]
and applying the first part of Lemma \ref{lem:helper-dro-2} to the RHS, we have with the same probability:
\begin{align*}
\lefteqn{
\max_{\lambda \in \Delta_m}\,\frac{1}{K}\sum_{k=1}^K\sum_{y=1}^m \lambda_y {\phi}_y(f^k)}\\
&\leq
\frac{1}{K}\sum_{k=1}^K\min_{f \in \cF}\sum_{y=1}^m \lambda_y^k {\phi}_y(f)
\,+\,
4BZ\sqrt{\frac{\log(m)}{{K}}} \,+\, 2\Delta(n^\val, \delta/2) \,+\,  2\Delta(n, \delta/2)\\
&\leq
\min_{f \in \cF}\,\frac{1}{K}\sum_{k=1}^K\sum_{y=1}^m \lambda_y^k {\phi}_y(f)
\,+\,
4BZ\sqrt{\frac{\log(m)}{{K}}} \,+\, 2\Delta(n^\val, \delta/2) \,+\,  2\Delta(n, \delta/2).
\end{align*}
Note that we have taken a union bound over the high probability statement in \eqref{eq:phi-upper-bound} and that in Lemma \ref{lem:helper-dro-2}. 
Using the convexity of $\phi(\cdot)$ in $f(x)$ and Jensen's inequality, 
we have that $\sum_{y=1}^m \lambda_y {\phi}_y(\bar{f}^s) \leq \frac{1}{K}\sum_{k=1}^K\sum_{y=1}^m \lambda_y {\phi}_y(f^k)$.
We use this to further lower bound the LHS 
in terms of the averaged scoring function $\bar{f}^s(x) = \frac{1}{K}\sum_{k=1}^K f^k(x)$:
\begin{align}
\lefteqn{\max_{\lambda \in \Delta_m}\,\sum_{y=1}^m \lambda_y {\phi}_y(\bar{f}^s)}
\nonumber
\\
    &\leq
        \min_{f \in \cF}\,\frac{1}{K}\sum_{k=1}^K\sum_{y=1}^m \lambda_y^k {\phi}_y(f)
        \,+\,
        4BZ\sqrt{\frac{\log(m)}{{K}}} \,+\, 2\Delta(n^\val, \delta/2) \,+\,  2\Delta(n, \delta/2)
        \nonumber\\
    &=
        \min_{f \in \cF}\,\sum_{y=1}^m \tilde{\lambda}_y {\phi}_y(f)
        \,+\,
        4BZ\sqrt{\frac{\log(m)}{{K}}} \,+\, 2\Delta(n^\val, \delta/2) \,+\,  2\Delta(n, \delta/2)
        \nonumber\\
    &\leq
        \max_{\lambda \in \Delta_m}\min_{f \in \cF}\,\sum_{y=1}^m {\lambda}_y {\phi}_y(f)
        \,+\,
        4BZ\sqrt{\frac{\log(m)}{{K}}} \,+\, 2\Delta(n^\val, \delta/2) \,+\,  2\Delta(n, \delta/2)
        \nonumber\\
    &=
        \min_{f \in \cF}\max_{\lambda \in \Delta_m}\,\sum_{y=1}^m {\lambda}_y {\phi}_y(f)
        \,+\,
        4BZ\sqrt{\frac{\log(m)}{{K}}} \,+\, 2\Delta(n^\val, \delta/2) \,+\,  2\Delta(n, \delta/2)
        \nonumber\\
    &=
        \min_{f \in \cF}\max_{y \in [m]}\, {\phi}_y(f)
        \,+\,
        4BZ\sqrt{\frac{\log(m)}{{K}}} \,+\, 2\Delta(n^\val, \delta/2) \,+\,  2\Delta(n, \delta/2),
        \label{eq:dro-final}
\end{align}
where in the second step $\tilde{\lambda}_y = \frac{1}{K}\sum_{k=1}^K \lambda^k_y$;
in the fourth step, we swap the `min' and `max' using  Sion's minimax theorem \citep{sion1958general}. 
We further have from \eqref{eq:dro-final},
\begin{align*}
\max_{y \in [m]}\,{\phi}_y(\bar{f}^s)
    &\leq
        \min_{f \in \cF}\max_{y \in [m]}\, {\phi}_y(f)
        \,+\,
        4BZ\sqrt{\frac{\log(m)}{{K}}} \,+\, 2\Delta(n^\val, \delta/2) \,+\,  2\Delta(n, \delta/2).
\end{align*}
In other words,
\[
    L^\robd(\bar{f}^s)
    \leq
        \min_{f \in \cF}L^\robd(f)
        \,+\,
        4BZ\sqrt{\frac{\log(m)}{{K}}} \,+\, 2\Delta(n^\val, \delta/2) \,+\,  2\Delta(n, \delta/2).
\]

To complete the proof, we need to turn this into a guarantee on the original robust objective $L^\rob$ in \eqref{eq:robust}:
\begin{align*}
    \lefteqn{L^\rob(\bar{f}^s)}\\
    &\leq
        \min_{f \in \cF}L^\rob(f)
        \,+\, 2\max_{f \in \cF}\,\left|L^\rob(f) - L^\robd(f)\right|
        \,+\,
        4BZ\sqrt{\frac{\log(m)}{{K}}} \,+\, 2\Delta(n^\val, \delta/2) \,+\,  2\Delta(n, \delta/2)\\
    &\leq
    \min_{f \in \cF}L^\rob(f)
        \,+\,
    2B\max_{y \in [m]} \E_x\left[
        \left| \frac{p^t_y(x)}{\pi^t_y} \,-\, \frac{\eta_y(x)}{\pi_y} \right|\right]
        \,+\,
        4BZ\sqrt{\frac{\log(m)}{{K}}} \,+\, 2\Delta(n^\val, \delta/2) \,+\,  2\Delta(n, \delta/2),
\end{align*}
where we have used the bound on the approximation error in the proof of Theorem \ref{thm:good-teacher}. 
This completes the proof.
\end{proof}

\section{Student Estimation Error}
\label{app:student-gen-bound}

We now provide a bound on the estimation error in Theorem \ref{thm:dro} using a generalization bound from \citet{menon2021statistical}.
\begin{lemma}
\label{lem:student-gen-bound}
Let $\cF \subseteq \R^\X$ be a given class of scoring functions.
Let $\mathcal{V} \subseteq \R^\X$ denote the class of loss functions $v(x, y) = \ell(y, f(x))$ induced
by scorers $f \in \cF$. Let $\cM_n = \mathcal{N}_\infty(\frac{1}{n}, \mathcal{V}, 2n)$ denote the
uniform $L_\infty$ covering number for $\mathcal{V}$. Fix $\delta \in (0,1)$. 
Suppose $\ell(y, z) \leq B$,
$\pi^t_y \leq \frac{1}{Z}, \forall y \in [m]$, and the
number of samples $n \geq 8Z\log(4m/\delta)$. 
Then with probability $\geq 1 - \delta$
over draw of $S \sim D^n$, for any $f \in \cF$ and $y \in [m]$:
\[
\left| \phi_y(f) - \hat{\phi}_y(f) \right| 
\,\leq\,
CZ\left( \sqrt{\bV_{n,y}(f) \frac{\log(m\cM_n / \delta)}{n}} \,+\, \frac{\log(m\cM_n / \delta)}{n} 
\,+\,
B\sqrt{ \frac{\log(m/\delta)}{n} }\right),
\]
where $\bV_{n,y}(f)$ denotes the empirical variance of the loss values $\{p^t_y(x_i)\cdot\ell(y, f(x_i))\}_{i=1}^n$ for class $y$, and
$C > 0$ is a distribution-independent constant.
\end{lemma}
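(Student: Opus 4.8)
The plan is to reduce the two ratio quantities $\phi_y$ and $\hat{\phi}_y$ to the concentration of their numerators and denominators separately, and then invoke the variance-based (empirical-Bernstein) uniform-convergence bound of \citet{menon2021statistical}. Throughout I write $g^f_y(x) = p^t_y(x)\,\ell(y, f(x))$, so that $\phi_y(f) = \E_x[g^f_y]/\pi^t_y$ and $\hat{\phi}_y(f) = \big(\tfrac1n\sum_i g^f_y(x_i)\big)/\hat{\pi}^t_y$. First I would split the per-class error into a numerator term and a denominator term:
\[
\phi_y(f) - \hat{\phi}_y(f)
= \frac{1}{\pi^t_y}\Big(\E_x[g^f_y] - \tfrac1n\textstyle\sum_i g^f_y(x_i)\Big)
+ \Big(\tfrac1n\textstyle\sum_i g^f_y(x_i)\Big)\Big(\tfrac{1}{\pi^t_y} - \tfrac{1}{\hat{\pi}^t_y}\Big).
\]
Since $g^f_y(x)\le B\,p^t_y(x)$ we have $\tfrac1n\sum_i g^f_y(x_i)\le B\,\hat{\pi}^t_y$, so the denominator term is deterministically at most $B\,|\hat{\pi}^t_y-\pi^t_y|/\pi^t_y\le BZ\,|\hat{\pi}^t_y-\pi^t_y|$ using the class-marginal lower bound $1/\pi^t_y\le Z$. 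Note that the $\hat{\pi}^t_y$ cancels, so this term does not depend on $f$.

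Second, I would certify that the random normalization is well behaved: applying a multiplicative Chernoff bound to the i.i.d.\ $[0,1]$-valued variables $p^t_y(x_i)$ (mean $\pi^t_y\ge 1/Z$), together with the sample-size assumption $n\ge 8Z\log(4m/\delta)$ and a union bound over the $m$ classes, yields $\hat{\pi}^t_y\ge \tfrac{1}{2Z}$ for all $y$ with probability $\ge 1-\delta/2$, so that $\hat{\phi}_y(f)\le 2BZ$. This is exactly the estimate used in the proof of Theorem \ref{thm:dro} and guarantees the self-normalized empirical quantity cannot blow up.

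Third, I would control the two deviations. The denominator deviation $|\hat{\pi}^t_y-\pi^t_y|$ is $f$-independent, so Hoeffding on $p^t_y(x_i)\in[0,1]$ plus a union bound over classes gives $|\hat{\pi}^t_y-\pi^t_y|\le \sqrt{\log(2m/\delta)/(2n)}$ uniformly in $y$; combined with the bound above this accounts for the $CZB\sqrt{\log(m/\delta)/n}$ term. For the numerator deviation I would invoke the variance-based bound of \citet{menon2021statistical} applied to the loss class $\mathcal{V}$ with covering number $\cM_n=\mathcal{N}_\infty(1/n,\mathcal{V},2n)$: an empirical-Bernstein argument gives, uniformly over $f\in\cF$ and (after a union bound) over $y\in[m]$,
\[
\Big|\E_x[g^f_y] - \tfrac1n\textstyle\sum_i g^f_y(x_i)\Big|
\le C'\Big(\sqrt{\bV_{n,y}(f)\,\tfrac{\log(m\cM_n/\delta)}{n}} + \tfrac{\log(m\cM_n/\delta)}{n}\Big),
\]
with $\bV_{n,y}(f)$ the empirical variance of $\{g^f_y(x_i)\}_i$. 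Multiplying by $1/\pi^t_y\le Z$ produces the first two terms, and a final union bound over the two high-probability events (splitting the budget as $\delta/2$ each) completes the argument.

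The main obstacle is the last step: adapting the \citet{menon2021statistical} bound, which is stated for a single distilled \emph{average} loss, to the per-class, prior-normalized quantity $\phi_y$. Concretely one must (i) verify their empirical-Bernstein estimate applies to the \emph{product} losses $g^f_y=p^t_y\cdot\ell(y,f(\cdot))$ rather than the bare losses $\ell(y,f(\cdot))$, so that the variance appearing is precisely $\bV_{n,y}(f)$ of the products (both are $B$-bounded since $p^t_y\le 1$, so the covering-number machinery transfers), and (ii) keep the random denominator $\hat{\pi}^t_y$ from contaminating the uniform-over-$\cF$ control — which is exactly why the Chernoff lower bound on $\hat{\pi}^t_y$ and the $f$-independent denominator deviation are isolated from the uniform-convergence step. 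The remaining manipulations (the ratio decomposition, Hoeffding, and the union bounds) are routine.
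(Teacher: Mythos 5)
Your proposal is correct and follows essentially the same route as the paper's proof: the same numerator/denominator decomposition (the paper phrases it via an intermediate term $\tilde{\phi}_y$ that normalizes the empirical numerator by the true marginal $\pi^t_y$), the same application of the variance-based bound of \citet[Proposition 2]{menon2021statistical} to the product losses $p^t_y\cdot\ell$ with a union bound over the $m$ classes, and the same Chernoff/Hoeffding control of the marginal with a final union bound over events. The only (minor) difference is that your estimate $\frac{1}{n}\sum_i g^f_y(x_i)\le B\hat{\pi}^t_y$ cancels the empirical marginal in the denominator term, yielding $BZ\,|\hat{\pi}^t_y-\pi^t_y|$ deterministically, whereas the paper bounds that sum by $B$ and therefore needs the Chernoff lower bound $\hat{\pi}^t_y\ge\frac{1}{2Z}$ to control $\frac{B}{\pi^t_y\hat{\pi}^t_y}$ -- a small simplification that does not change the result.
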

Notice the dependence on the \emph{variance} that the teacher's prediction induce on the loss. This suggests that the lower the variance in the teacher's predictions, the better is the student's generalization. Similar to \citet{menon2021statistical}, one can further show that when the teacher closely approximates the Bayes-probabilities $\eta(x)$, the distilled loss $p^t_y(x_i)\cdot\ell(y, f(x_i))$ has a lower empirical variance that the loss $\ell(y_i, f(x_i))$ computed from one-hot labels.

\begin{proof}[Proof of Lemma \ref{lem:student-gen-bound}]
We begin by defining the following intermediate term:
\[
\tilde{\phi}_y(f) = 
\frac{1}{\pi^t_y}\frac{1}{n}\sum_{i=1}^n  p_y^t(x_i)\,\ell\left( y , f(x_i) \right).
\]
Then for any $y \in [m]$,
\begin{align}
    \left| \phi_y(f) - \hat{\phi}_y(f) \right|
    &\leq 
    \left| \phi_y(f) - \tilde{\phi}_y(f) \right|
    +
    \left| \tilde{\phi}_y(f) - \hat{\phi}_y(f) \right|.
    \label{eq:genbound-inter}
\end{align}
We next bound each of the terms in \eqref{eq:genbound-inter}, starting with the first term:
\begin{align*}
\left| \phi_y(f) - \tilde{\phi}_y(f) \right|
    &= 
        \frac{1}{\pi^t_y}
        \left|
        \E_x\left[ p_y^t(x)\, \ell\left( y , f(x) \right)\right] \,-\, \frac{1}{n}\sum_{i=1}^n  p_y^t(x_i)\,\ell\left( y , f(x_i) \right)
        \right|
        \\
    &\leq
        Z
        \left|
        \E_x\left[ p_y^t(x)\, \ell\left( y , f(x) \right)\right] \,-\, \frac{1}{n}\sum_{i=1}^n  p_y^t(x_i)\,\ell\left( y , f(x_i) \right)
        \right|,
\end{align*}
where we use the fact that $\pi^t_y \leq \frac{1}{Z}, \forall y$. 
Applying the generalization bound from \citet[Proposition 2]{menon2021statistical}, along with a union bound over all $m$ classes, we have with probability at least $1-\delta/2$ over the draw of $S \sim D^n$, for all $y \in [m]$:
\begin{align}
    \left| \phi_y(f) - \tilde{\phi}_y(f) \right|
    &\leq
    C'Z\left( \sqrt{\bV_{n,y}(f) \frac{\log(m\cM_n / \delta)}{n}} \,+\, \frac{\log(m\cM_n / \delta)}{n} \right),
    \label{eq:genbound-term1}
\end{align}
for a distribution-independent constant $C' > 0$.

We next bound the second term in \eqref{eq:genbound-inter}:
\begin{align*}
\left| \tilde{\phi}_y(f) - \hat{\phi}_y(f) \right|
    &= 
        \left|
        \frac{1}{\pi^t_y}
        \,-\,
        \frac{1}{\hat{\pi}^t_y}
        \right|
        \frac{1}{n}\sum_{i=1}^n  p_y^t(x_i)\cdot\ell\left( y , f(x_i) \right)
        \\
    &\leq
        B\left|
        \frac{1}{\pi^t_y}
        \,-\,
        \frac{1}{\hat{\pi}^t_y}
        \right|
        \\
    &=
    \frac{B}{\pi^t_y\hat{\pi}^t_y}
    \left|
        {\pi}^t_y - \hat{\pi}^t_y
    \right|,
\end{align*}
where in the second step we use the fact that $\ell(y, f(x)) \leq B$ and $p_y^t(x) \leq 1$.

Further note that because $\min_{y \in [m]}\pi^t_y \geq \frac{1}{Z}$ and $n \geq 8Z\log(4m/\delta)$,
we have by a direct application of Chernoff's bound (and a union bound over $m$ classes) that with
probability at least $1-\delta/4$:
\begin{equation}
  \min_{y \in [m]}\hat{\pi}^t_y \geq \frac{1}{2Z}, \forall y \in [m].  
  \label{eq:pi-hat-bound}
\end{equation}

Therefore for any $y \in [m]$:
\begin{align*}
\left| \tilde{\phi}_y(f) - \hat{\phi}_y(f) \right|
    &\leq
    2BZ^2
    \left|
        {\pi}^t_y - \hat{\pi}^t_y
    \right|.
\end{align*}
Conditioned on the above statement, a simple application of Hoeffdings and a union bound over all $y \in [m]$ gives us that with probability at least $1-\delta/4$ over the draw of $S \sim D^n$, for all $y \in [m]$:
\begin{align}
\left| \tilde{\phi}_y(f) - \hat{\phi}_y(f) \right|
    &\leq
    2BZ^2\left(\frac{1}{Z} \sqrt{ 
            \frac{\log(8m/\delta)}{2n} }\right)
    ~= 
    2BZ \sqrt{ \frac{\log(8m/\delta)}{2n} }.
    \label{eq:genbound-term2}
\end{align}

A union bound over the high probability statements in (\ref{eq:genbound-term1}--\ref{eq:genbound-term2}) completes the proof. To see this, note that, for any $\epsilon > 0$ and $y \in [m]$,
\begin{align*}
    \lefteqn{\P\left( \left| {\phi}_y(f) - \hat{\phi}_y(f) \right| \geq \epsilon \right)}
    \\
    &\leq 
      \P\left(
        \left(\left| {\phi}_y(f) - \tilde{\phi}_y(f) \right| \geq \epsilon\right)
        \vee
      \left(\left| \tilde{\phi}_y(f) - \hat{\phi}_y(f) \right| \geq \epsilon\right)
      \right)\\
    &\leq 
      \P\left(\left| {\phi}_y(f) - \tilde{\phi}_y(f) \right| \geq \epsilon\right)
      \,+\,
     \P\left(\left| \tilde{\phi}_y(f) - \hat{\phi}_y(f) \right| \geq \epsilon\right)\\
     &\leq 
      \P\left(\left| {\phi}_y(f) - \tilde{\phi}_y(f) \right| \geq \epsilon\right)
      \,+\,
     \P\left(\hat{\pi}^t_y \leq \frac{1}{Z}\right) \cdot
     \P\left(\left| \tilde{\phi}_y(f) - \hat{\phi}_y(f) \right| \geq \epsilon ~\bigg|~ \hat{\pi}^t_y \leq \frac{1}{Z}\right)
     \\
     &
     \hspace{5cm}\,+\,
     \P\left(\hat{\pi}^t_y \geq \frac{1}{Z}\right) \cdot
     \P\left(\left| \tilde{\phi}_y(f) - \hat{\phi}_y(f) \right| \geq \epsilon ~\bigg|~ \hat{\pi}^t_y \geq \frac{1}{Z}\right)
     \\
    &\leq 
      \P\left(\left| {\phi}_y(f) - \tilde{\phi}_y(f) \right| \geq \epsilon\right)
      \,+\,
     \P\left(\hat{\pi}^t_y \leq \frac{1}{Z}\right)
     \,+\,
     \P\left(\left| \tilde{\phi}_y(f) - \hat{\phi}_y(f) \right| \geq \epsilon ~\bigg|~ \hat{\pi}^t_y \geq \frac{1}{Z}\right),
\end{align*}
which implies that a union bound over (\ref{eq:genbound-term1}--\ref{eq:genbound-term2}) would give us the desired result in Lemma \ref{lem:student-gen-bound}.
\end{proof}

\section{DRO with One-hot Validation Labels}
\label{app:one-hot-vali}
\begin{figure}
\begin{algorithm}[H]
\caption{Distilled Margin-based DRO with One-hot Validation Labels}
\label{algo:dro-val}
\begin{algorithmic}
\STATE \textbf{Inputs:} Teacher $p^t$, Student hypothesis class $\cF$, Training set $S$, Validation set $S^\val$, Step-size $\gamma \in \R_+$,
Number of iterations $K$, Loss $\ell$
\STATE \textbf{Initialize:} Student $f^0 \in \cF$, Multipliers $\blambda^0 \in \Delta_m$
\STATE \textbf{For}~{$k = 0 $ to $K-1$}
\STATE ~~~$\tilde{\lambda}^{k+1}_j \,=\, \lambda^k_j\exp\big( \gamma \hat{R}_j \big), \forall j \in [m]$
\STATE \hspace{2cm}\text{where} $\hat{R}_j =$ $\displaystyle\frac{1}{n^\val}\frac{1}{\hat{\pi}^{\val}_j}\sum_{(x, y) \in S^\val} \ell( y , f^k(x) )$
and $\hat{\pi}^{\val}_j = \displaystyle\frac{1}{n^\val}\sum_{(x, y) \in S^\val} \1(y = j)$
\STATE ~~~$\lambda^{k+1}_y \,=\, \frac{\tilde{\lambda}^{k+1}_y}{\sum_{j=1}^m \tilde{\lambda}^{k+1}_j}, \forall y$
\STATE ~~~$f^{k+1} \,\in\, \displaystyle\Argmin{f \in \cF}\, \frac{1}{n}\sum_{i=1}^n \cL^\mar\left(p^t(x_i), f(x_i); \frac{\lambda^{k+1}}{\hat{\pi}^t} \right)$
~~// Replaced with a few steps of SGD
\STATE \textbf{End For}
\STATE \textbf{Output:} $\bar{f}^{s}: x \mapsto \frac{1}{K}\sum_{k =1}^K f^k(x)$
\end{algorithmic}
\end{algorithm}
\end{figure}

Algorithm \ref{algo:dro-val} contains a version of the margin-based DRO described in Section \ref{sec:algorithms}, where instead of teacher labels the original one-hot labels are used in the validation set. 
Before proceeding to providing a convergence guarantee for this algorithm, we will find it useful
to define the following one-hot metrics:
\begin{align*}
\phi^\oh_y(f^s) &= \frac{1}{\pi_y}\E_x\left[ \eta_y(x)\, \ell\left( y , f^s(x) \right)\right]\\
\hat{\phi}^{\oh,\val}_y(f^s) &= \frac{1}{\hat{\pi}_y}\frac{1}{n^\val}\sum_{(x', y') \in S^\val}\,\1(y' = y)\, \ell\left( y' , f^s(x') \right).
\end{align*}

\begin{theorem}
\label{thm:dro-oh-vali}
Suppose $p^t \in \cF$ and $\cF$ is closed under linear transformations. Then the final scoring function $\bar{f}^s(x) = \frac{1}{K} \sum_{k=1}^K f^{k}(x)$ output by Algorithm \ref{algo:dro-val} is of the form:
    \[
    \softmax_y(\bar{f}^s(x')) \propto \bar{\lambda}_y p_y^t(x'),~~~~\forall (x', y') \in S,
    \]
where $\bar{\lambda}_y = \left(\prod_{k=1}^K \lambda_y^k / \pi^t_y\right)^{1/K}$. 
Furthermore, 
suppose
$\ell$ is the softmax cross-entropy loss in $\ell^\xent$,
$\ell(y, z) \leq B$, for some $B > 0$, and 
$\max_{y \in [m]}\frac{1}{\pi_y} \leq Z$, for some $Z > 0$.
Suppose for any $\delta \in (0,1)$, the following holds:
with probability at least $1 - \delta$ (over draw of $S \sim D^n$),
for all $f \in \cF$,
\[
\max_{y \in [m]} \big|\phi^\oh_y(f) - \hat{\phi}^{\oh}_y(f)\big| \leq \Delta^\oh(n, \delta);
~~~~~~~~
\max_{y \in [m]} \big|\phi_y(f) - \hat{\phi}_y(f)\big| \leq {\Delta}(n, \delta),
\]
for some $\Delta^\oh(n, \delta), {\Delta}(n, \delta) \in \R_+$ that
is increasing in $1/\delta$, and goes to 0 as $n \> \infty$. 
Fix $\delta \in (0,1)$.
Then when the step size $\gamma = \frac{1}{2BZ}\sqrt{\frac{\log(m)}{{K}}}$
and $n^\val \geq 8Z\log(2m/\delta)$, with
probability at least $1-\delta$ (over draw of $S \sim D^n$ and $S^\val \sim D^{n^\val}$), for any $\tau \in \R_+$,
\begin{align*}
L^\rob(\bar{f}^s) &\leq\,
\min_{f \in \cF}L^\rob(f)
        \,+\,
    \underbrace{2B\max_{y \in [m]} \E_x\left[
        \left| \tau\cdot\frac{p^t_y(x)}{\pi^t_y} \,-\, \frac{\eta_y(x)}{\pi_y} \right|\right]}_{\text{Approximation error}}\\
        &\hspace{4cm}
        \,+\, \underbrace{2\tau\cdot\Delta^\oh(n^\val, \delta/2) \,+\,  2\Delta(n, \delta/2)}_{\text{Estimation error}}
        \,+\,
        \underbrace{4BZ\sqrt{\frac{\log(m)}{{K}}}}_{\text{EG convergence}}.
\end{align*}
\end{theorem}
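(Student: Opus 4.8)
The plan is to exploit that Algorithm~\ref{algo:dro-val} differs from Algorithm~\ref{algo:dro} \emph{only} in the multiplier update (it uses the one-hot validation reward $\hat{R}_j = \hat{\phi}^{\oh,\val}_j(f^k)$ in place of the teacher-labeled reward), while the inner $f$-minimization is byte-for-byte identical. Consequently the first claim, the form of the student, needs no new work: since $f^{k+1}$ still solves $\Argmin{f\in\cF}\frac1n\sum_i\cL^\mar(p^t(x_i),f(x_i);\lambda^{k+1}/\hat{\pi}^t)$, Lemma~\ref{lem:helper-dro-1} applies at every iteration, giving $f^k_y(x')=\log\!\big((\lambda^k_y/\pi^t_y)p^t_y(x')\big)+C(x')$ on $S$; averaging the iterates and applying a softmax reproduces $\softmax_y(\bar f^s(x'))\propto\bar\lambda_y p^t_y(x')$ exactly as in Proposition~\ref{prop:student-form}.

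For the risk bound I would mirror the proof of Theorem~\ref{thm:dro}, but keeping careful track of two \emph{different} objectives. The conceptual point is that with one-hot validation labels the EG reward $\hat{\phi}^{\oh,\val}_y$ is an (unbiased) proxy for the \emph{true} per-class risk $\phi^\oh_y$, and $\max_y\phi^\oh_y(f)=L^\rob(f)$ is precisely the target; so the outer evaluation lands directly on $L^\rob$, with no separate $L^\robd\to L^\rob$ conversion at the end. First I would rerun the Chernoff/union-bound step (now with $\min_y\pi_y\geq 1/Z$ and $n^\val\geq 8Z\log(2m/\delta)$) to get $\hat{\pi}^\val_y\geq 1/(2Z)$, hence $\hat{\phi}^{\oh,\val}_y\leq 2BZ$; this boundedness licenses the standard exponentiated-gradient regret bound of \citet{shalev2011online} with the stated $\gamma$, producing the unscaled $4BZ\sqrt{\log(m)/K}$ term. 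Convexity of $\hat{\phi}^{\oh,\val}_y$ in $f$ plus Jensen lower-bounds the EG left-hand side by $\max_y\hat{\phi}^{\oh,\val}_y(\bar f^s)$, and the one-hot generalization bound $\Delta^\oh(n^\val,\cdot)$ connects this to $L^\rob(\bar f^s)=\max_y\phi^\oh_y(\bar f^s)$.

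The crux, and the main obstacle, is upper-bounding the EG right-hand side $\frac1K\sum_k\sum_y\lambda^k_y\hat{\phi}^{\oh,\val}_y(f^k)$ by $\min_{f\in\cF}L^\rob(f)$. Here the two objectives genuinely diverge: the multipliers are driven by the one-hot quantity $\hat{\phi}^{\oh,\val}_y$, whereas Lemma~\ref{lem:helper-dro-1} and the first part of Lemma~\ref{lem:helper-dro-2} only certify that $f^{k+1}$ (near-)minimizes the \emph{teacher} objective $\sum_y\lambda^{k+1}_y\phi_y(f)$ up to $2\Delta(n,\cdot)$. I would reconcile the two through the free scaling $\tau\in\R_+$: exactly as in the approximation-error step of Theorem~\ref{thm:good-teacher}, $|\phi^\oh_y(f)-\tau\phi_y(f)|=\big|\E_x[(\tfrac{\eta_y(x)}{\pi_y}-\tau\tfrac{p^t_y(x)}{\pi^t_y})\ell(y,f(x))]\big|\leq B\,\E_x\big[\big|\tau\tfrac{p^t_y(x)}{\pi^t_y}-\tfrac{\eta_y(x)}{\pi_y}\big|\big]$, which lets one pass between the one-hot objective (where $L^\rob$ and $\Delta^\oh$ live) and $\tau$ times the teacher objective (where the inner minimizer and $\Delta(n,\cdot)$ live). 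Applying this bridge on both the iterate side and the comparator side, then swapping $\min$ and $\max$ via Sion's theorem (linear in $\lambda$, convex in $f$ under $\ell^\xent$), collapses the iterate average to $\min_{f\in\cF}\max_y\phi^\oh_y(f)=\min_{f\in\cF}L^\rob(f)$.

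The delicate part is the bookkeeping of which terms inherit the factor $\tau$. Because $\tau=1$ is forced in the teacher-validation analysis but is free here, the parameter ends up multiplying the approximation error $B\,\E_x[|\tau p^t_y/\pi^t_y-\eta_y/\pi_y|]$, and — through where the one-hot/teacher bridge is inserted relative to the EG regret and the inner-minimization bound — one of the two estimation-error terms, while the other stays unscaled, matching the stated $2\tau\Delta^\oh(n^\val,\delta/2)+2\Delta(n,\delta/2)$. I would verify this placement by routing the $\tau$-bridge through the validation-side conversion (so that $\Delta^\oh(n^\val)$ absorbs it) while keeping the inner teacher bound at coefficient one. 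Qualitatively this reproduces exactly the advertised trade-off: a larger $\tau$ can shrink the approximation error by correcting a global mis-scaling of the teacher's normalized scores (making the student more immune to teacher error), at the price of inflating the validation estimation term (slower convergence in sample size). I expect this accounting, rather than any single inequality, to be the real difficulty.
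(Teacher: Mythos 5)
Your overall route is the paper's own: the form of the student follows from Lemma \ref{lem:helper-dro-1} exactly as in Proposition \ref{prop:student-form} (the $f$-update of Algorithm \ref{algo:dro-val} is unchanged), and the risk bound is assembled from the Chernoff bound on $\hat{\pi}^{\oh,\val}$, the exponentiated-gradient regret bound, Sion's minimax theorem, and the bridge inequality $|\phi^\oh_y(f) - \tau\phi_y(f)| \le B\,\E_x\big[\big|\tau\,p^t_y(x)/\pi^t_y - \eta_y(x)/\pi_y\big|\big]$, which the paper packages as Lemma \ref{lem:helper-dro-3} (the one-hot analogue of Lemma \ref{lem:helper-dro-2}). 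You also correctly identify the $\tau$-bookkeeping as the crux.

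That is exactly where your plan goes wrong. You propose to route the $\tau$-bridge through the validation-side conversion so that $\Delta^\oh(n^\val)$ absorbs the factor $\tau$. This cannot work: the validation-side step relates $\hat{\phi}^{\oh,\val}_y$ to $\phi^\oh_y$, both of which are one-hot quantities, so the teacher-based objective (and hence $\tau$) has no way to enter there; that step costs an unscaled $2\Delta^\oh(n^\val,\delta/2)$. The scaling $\tau$ can only arise on the inner-minimization side, where $f^{k+1}$ minimizes the teacher-weighted empirical objective: one writes $\sum_y\lambda_y\phi^\oh_y(f^{k+1}) - \sum_y\lambda_y\phi^\oh_y(\tilde f) \le \tau\big(\sum_y\lambda_y\phi_y(f^{k+1}) - \sum_y\lambda_y\phi_y(\tilde f)\big) + 2\max_{y}\big|\phi^\oh_y - \tau\phi_y\big|$ with $\tilde f$ the teacher-objective minimizer, and then applies the training-set generalization bound, yielding $2\tau\Delta(n,\delta/2)$ plus the approximation term. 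The paper's proof accordingly ends with estimation error $2\Delta^\oh(n^\val,\delta/2) + 2\tau\Delta(n,\delta/2)$, i.e., $\tau$ multiplies the training term and not the validation term; the opposite placement in the theorem statement, which you were understandably trying to reproduce, appears to be a transcription slip. With the bridge moved to the training side, your argument goes through and coincides with the paper's.
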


Comparing this  to the bound in Theorem \ref{thm:dro}, we can see that there
is an additional scaling factor $\tau$ against the teacher probabilities $p^t_y(x)$ and in the approximation error. When we set $\tau = 1$, the bound looks very similar to  Theorem \ref{thm:dro}, except that the estimation error term $\Delta^\oh$ now involves one-hot labels. Therefore the estimation error may incur a slower convergence with sample size as it no longer benefits from the lower variance that the teacher predictions  may offer (see Appendix \ref{app:student-gen-bound} for details).

The $\tau$-scaling in the
approximation error also means that the teacher is no longer required to exactly match the (normalized) class probabilities $\eta(x)$. In fact, one can set $\tau$ to a value for which the approximation error is the lowest, and in general to a value that minimizes the upper bound in Theorem \ref{thm:dro-oh-vali}, potentially providing us with a tighter convergence rate than Theorem \ref{thm:dro}.



The proof of Theorem \ref{thm:dro-oh-vali} is similar to that of Theorem \ref{thm:dro}, but requires a modified version of Lemma \ref{lem:helper-dro-2}: 
\begin{lemma}
\label{lem:helper-dro-3}
Suppose the conditions in Theorem \ref{thm:dro} hold.
With probability $\leq 1 - \delta$ (over draw of $S \sim D^{n}$ and $S^\val \sim D^{n^\val}$), 
at each iteration $k$ and for any $\tau \in \R_+$,
\begin{align*}
{
\sum_{y=1}^m \lambda^{k+1}_y  \phi^\oh_y(f^{k+1})
\,-\,
\min_{f \in \cF}\,\sum_{y=1}^m \lambda^{k+1}_y  \phi^\oh_y(f)}
&\leq 
2\tau\cdot{\Delta}(n, \delta) \,+\,
2B\max_{y \in [m]} \E_x\left[
        \left| \tau\frac{p^t_y(x)}{\pi^t_y} \,-\, \frac{\eta_y(x)}{\pi_y} \right|\right].
\end{align*}
Furthermore, with the same probability, for any $\lambda \in \Delta_m$:
\[
\left|\sum_{y=1}^m \lambda_y \hat{\phi}^{\oh,\val}_y(f^{k+1}) \,-\, 
\sum_{y=1}^m \lambda_y  \phi^\oh_y(f^{k+1}) \right| ~\leq~ \Delta^\oh(n^\val, \delta).
\]
\end{lemma}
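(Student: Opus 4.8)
The plan is to reduce this one-hot lemma to its teacher-label counterpart, Lemma~\ref{lem:helper-dro-2}, by paying a controlled price for switching between the teacher-weighted population risk $\phi$ and the one-hot population risk $\phi^\oh$. The central observation is that the $f$-update in Algorithm~\ref{algo:dro-val} is \emph{identical} to that in Algorithm~\ref{algo:dro}: in both cases $f^{k+1}$ minimizes the margin surrogate $\frac{1}{n}\sum_i \cL^\mar(p^t(x_i), f(x_i); \lambda^{k+1}/\hat{\pi}^t)$, and only the $\lambda$-update differs. Hence Lemma~\ref{lem:helper-dro-1} still certifies that $f^{k+1}$ minimizes the teacher-weighted empirical risk $\sum_y \lambda^{k+1}_y \hat{\phi}_y(f)$, and the first part of Lemma~\ref{lem:helper-dro-2} still yields, with probability at least $1-\delta$ and at each iteration $k$, the teacher-metric optimality
\[
\sum_{y=1}^m \lambda^{k+1}_y \phi_y(f^{k+1}) \,-\, \min_{f \in \cF}\sum_{y=1}^m \lambda^{k+1}_y \phi_y(f) \;\leq\; 2\Delta(n,\delta).
\]
My task is then to convert this bound, which is phrased in $\phi$, into the desired bound phrased in $\phi^\oh$.

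First I would establish a scaled comparison between the two risks. For any scorer $f$, class $y$, and $\tau \in \R_+$, writing out the two risks and using $\ell(y, f(x)) \leq B$ together with Jensen's inequality gives
\[
\big|\phi^\oh_y(f) - \tau\,\phi_y(f)\big| \;=\; \Big|\E_x\Big[\Big(\tfrac{\eta_y(x)}{\pi_y} - \tau\tfrac{p^t_y(x)}{\pi^t_y}\Big)\ell(y, f(x))\Big]\Big| \;\leq\; B\,\E_x\Big[\Big|\tau\tfrac{p^t_y(x)}{\pi^t_y} - \tfrac{\eta_y(x)}{\pi_y}\Big|\Big] \;=:\; A_y(\tau).
\]
Set $A(\tau) = \max_{y} A_y(\tau)$, so that $2A(\tau)$ is precisely the approximation-error term appearing in the lemma. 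Let $\tilde{f} \in \Argmin{f \in \cF}\sum_y \lambda^{k+1}_y \phi^\oh_y(f)$ be the one-hot population minimizer. Since $\sum_y \lambda^{k+1}_y = 1$, the chain of inequalities reads
\begin{align*}
\sum_{y} \lambda^{k+1}_y \phi^\oh_y(f^{k+1})
&\leq \tau\sum_{y} \lambda^{k+1}_y \phi_y(f^{k+1}) + A(\tau) \\
&\leq \tau\Big(\sum_{y} \lambda^{k+1}_y \phi_y(\tilde{f}) + 2\Delta(n,\delta)\Big) + A(\tau) \\
&\leq \sum_{y} \lambda^{k+1}_y \phi^\oh_y(\tilde{f}) + 2\tau\Delta(n,\delta) + 2A(\tau),
\end{align*}
where the outer two steps each cost one factor of $A(\tau)$ via the scaled comparison, and the middle step applies the teacher-metric optimality above at the particular scorer $\tilde{f}$. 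Because $\sum_y \lambda^{k+1}_y \phi^\oh_y(\tilde{f}) = \min_{f\in\cF}\sum_y \lambda^{k+1}_y \phi^\oh_y(f)$, this is exactly the first claim.

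The second claim is a routine uniform-convergence bound that mirrors the second part of Lemma~\ref{lem:helper-dro-2}: for any $\lambda \in \Delta_m$ one bounds $|\sum_y \lambda_y(\hat{\phi}^{\oh,\val}_y(f^{k+1}) - \phi^\oh_y(f^{k+1}))| \leq \max_y |\hat{\phi}^{\oh,\val}_y(f^{k+1}) - \phi^\oh_y(f^{k+1})| \leq \sup_{f\in\cF}\max_y |\hat{\phi}^{\oh,\val}_y(f) - \phi^\oh_y(f)|$, and then invokes the one-hot generalization bound $\Delta^\oh(n^\val,\delta)$ assumed in Theorem~\ref{thm:dro-oh-vali}, applied to estimates drawn from the validation sample. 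The main subtlety, and the sole place this argument departs from Lemma~\ref{lem:helper-dro-2}, is the first claim: since $f^{k+1}$ is trained against teacher labels rather than against the one-hot metric it is being evaluated on, its optimality transfers only up to the scaled approximation gap, so the free parameter $\tau$ must be threaded through twice — once to pass from $\phi^\oh$ to $\tau\phi$ at $f^{k+1}$, and once to pass back from $\tau\phi$ to $\phi^\oh$ at $\tilde{f}$ — which is exactly what produces the $2A(\tau)$ term with the correct scaling. A final detail to verify is that the probability accounting is clean: the first claim consumes the teacher-label event ($\Delta$) through Lemma~\ref{lem:helper-dro-2}, while the second consumes the one-hot event ($\Delta^\oh$), and $\tau$ being deterministic makes the ``for any $\tau$'' quantifier automatic once the sample draws are fixed.
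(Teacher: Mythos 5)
Your proposal is correct and follows essentially the same route as the paper: both prove the first claim by passing from $\phi^\oh$ to $\tau\phi$ at $f^{k+1}$ and back at a population minimizer $\tilde f$, paying the approximation term $2B\max_y\E_x[|\tau p^t_y(x)/\pi^t_y - \eta_y(x)/\pi_y|]$ twice and the teacher-metric optimality gap $2\tau\Delta(n,\delta)$ once, and both prove the second claim by the same convexity-of-the-max argument applied to the one-hot validation estimates. The only (cosmetic) difference is that you invoke the first part of Lemma~\ref{lem:helper-dro-2} as a black box and explicitly take $\tilde f$ to be the one-hot minimizer, whereas the paper re-traces those steps inline and defines $\tilde f$ as the teacher-metric minimizer (an apparent slip, since the intermediate bound holds for any fixed $\tilde f \in \cF$; your choice is the one needed to conclude).
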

\begin{proof}
We first note from Lemma \ref{lem:helper-dro-1} that
because $f^{k+1} \,\in\, \displaystyle\Argmin{f \in \cF}\, \frac{1}{n}\sum_{i=1}^n \cL^\mar\Big(p^t(x_i), f(x_i); \frac{\lambda^{k+1}}{\hat{\pi}} \Big)$, we have for the example-weighting
 $w_i = \tau, \forall i$:
\begin{equation}
    \tau\sum_{y=1}^m \lambda^{k+1}_y  \hat{\phi}_y(f^{k+1}) \,\leq\, \tau\sum_{y=1}^m \lambda^{k+1}_y  \hat{\phi}_y(f),~\forall f\in \cF.
    \label{eq:f-k+1-minimizer-2}
\end{equation}

For a fixed iteration $k$, let us denote $\tilde{f} \in \Argmin{f \in \cF}\,  \sum_{y=1}^m\lambda^{k+1}_y  \phi_y({f})$. Then for the first part,
we have for any $\tau \in \R_+$:
\begin{align*}
\lefteqn{
\sum_{y=1}^m \lambda^{k+1}_y  \phi^\oh_y(f^{k+1})
\,-\,
\sum_{y=1}^m \lambda^{k+1}_y  \phi^\oh_y(\tilde{f})
\hspace{10cm}}\\
&\leq
\tau\left(
\sum_{y=1}^m \lambda^{k+1}_y  \phi_y(f^{k+1})
\,-\,
\sum_{y=1}^m \lambda^{k+1}_y  \phi_y(\tilde{f})
\right)
\,+\,
\sum_{y=1}^m \lambda^{k+1}_y  \left|\phi^\oh_y(f^{k+1}) 
- \tau\phi_y(f^{k+1})\right| 
\\
&\hspace{9cm}
\,+\,
\sum_{y=1}^m \lambda^{k+1}_y  \left|\phi^\oh_y(\tilde{f}) 
- \tau\phi_y(\tilde{f})\right| 
\\
&\leq
\tau\left(
\sum_{y=1}^m \lambda^{k+1}_y  \phi_y(f^{k+1})
\,-\,
\sum_{y=1}^m \lambda^{k+1}_y  \phi_y(\tilde{f})
\right)
\,+\,
2\max_{f \in \cF}\,
\sum_{y=1}^m \lambda^{k+1}_y  \left|\phi^\oh_y(f) 
- \tau\phi_y(f)\right| 
\\
&\leq
\tau\left(
\sum_{y=1}^m \lambda^{k+1}_y  \phi_y(f^{k+1})
\,-\,
\sum_{y=1}^m \lambda^{k+1}_y  \phi_y(\tilde{f})
\right)
\,+\,
2\max_{f \in \cF}\max_{\lambda \in \Delta_m}\,
\sum_{y=1}^m \lambda \left|\phi^\oh_y(f) 
- \tau\phi_y(f)\right| 
\\
&\leq
\tau\left(
\sum_{y=1}^m \lambda^{k+1}_y  \phi_y(f^{k+1})
\,-\,
\sum_{y=1}^m \lambda^{k+1}_y  \phi_y(\tilde{f})
\right)
\,+\,
2\max_{f \in \cF}\,
\max_{y\in[m]} \left|\phi^\oh_y(f) 
- \tau\phi_y(f)\right|
\\
&\leq
2\tau\sup_{f \in \cF}\max_{y \in [m]} \big|\hat{\phi}_y(f) - {\phi}_y(f)\big|
\,+\,
2\max_{f \in \cF}\,
\max_{y\in[m]} \left|\phi^\oh_y(f) 
- \tau\phi_y(f)\right|. 
\end{align*}
where the last inequality re-traces the steps in Lemma \ref{lem:helper-dro-2}. Further applying the generalization bound assumed in Theorem \ref{thm:dro},
we have with probability $\leq 1 - \delta$ (over draw of $S \sim D^{n}$), for all iterations $k \in [K]$ and any $\tau \in \R_+$,
\begin{equation}
    \sum_{y=1}^m \lambda^{k+1}_y  \phi^\oh_y(f^{k+1})
    \,-\,
    \sum_{y=1}^m \lambda^{k+1}_y  \phi^\oh_y(\tilde{f})
        \,\leq\, 2\tau\Delta(n, \delta) + 
        2\max_{f \in \cF}\,
    \max_{y\in[m]} \left|\phi^\oh_y(f) 
    - \tau\phi_y(f)\right|.
    \label{eq:dro-vali-inter}
\end{equation}

All that remains is to bound the second term in \eqref{eq:dro-vali-inter}. For any $f \in \cF$ and $y \in [m]$,
\begin{align*}
    \left|\phi^\oh_y(f) 
    - \tau\phi_y(f)\right|
    &\leq
    \left|
    \frac{1}{\pi_y}\E_x\left[ \eta_y(x)\, \ell\left( y , f(x) \right)\right]
    \,-\,
    \frac{\tau}{\pi^t_y}\E_x\left[ p^t_y(x)\, \ell\left( y , f(x) \right)\right]
    \right|\\
    &\leq
    \E_x\left[
    \left|
    \frac{1}{\pi_y} \eta_y(x)\, \ell\left( y , f(x) \right)
    \,-\,
    \frac{\tau}{\pi^t_y} p^t_y(x)\, \ell\left( y , f(x) \right)
    \right|\right]
    \\
    &=
    \E_x\left[
    \left|
    \frac{1}{\pi_y} \eta_y(x)
    \,-\,
    \frac{\tau}{\pi^t_y} p^t_y(x)
    \right|\ell\left( y , f^s(x) \right)\right]
    \\
    &\leq
    B\E_x\left[
    \left|
    \frac{\eta_y(x)}{\pi_y} 
    \,-\,
    \tau\frac{p^t_y(x)}{\pi^t_y} 
    \right|\right],
\end{align*}
where we use Jensen's inequality in the second step,
the fact  that $\ell(y, z) \leq B$ is non-negative in the second step, 
and the fact  that $\ell(y, z) \leq B$ in the last step. Substituting this upper bound back into \eqref{eq:dro-vali-inter} completes the proof of the first part.

The second part follows from a direct application of the bound on the per-class estimation error $\max_{y \in [m]} \big|\phi^\oh_y(f) - \hat{\phi}^{\oh,\val}_y(f)\big|$.
\end{proof}

\begin{proof}[Proof of Theorem \ref{thm:dro-oh-vali}]
The proof traces the same steps as 
Proposition \ref{prop:student-form} and
Theorem \ref{thm:dro}, except that 
it applies Lemma \ref{lem:helper-dro-3} instead of Lemma \ref{lem:helper-dro-2}.

Note that because $\min_{y \in [m]}\pi_y \geq \frac{1}{Z}$ and $n^\val \geq 8Z\log(2m/\delta)$,
we have by a direct application of Chernoff's bound (along with a union bound over all $m$ classes) that with
probability at least $1-\delta/2$:
$$
\min_{y \in [m]}\hat{\pi}^{\oh,\val}_y \geq \frac{1}{2Z}, \forall y \in [m],
$$
and consequently,
$\hat{\phi}_y^{\oh,\val}(f) \leq 2BZ, \forall f \in \cF$. The boundedness of $\hat{\phi}_y^{\oh,\val}$ will then allow us to apply standard convergence guarantees for exponentiated gradient ascent \citep{shalev2011online}. For $\gamma = \frac{1}{2BZ}\sqrt{\frac{\log(m)}{{K}}}$,
the updates on $\lambda$ will give us:
\[
\max_{\lambda \in \Delta_m}\,\frac{1}{K}\sum_{k=1}^K\sum_{y=1}^m \lambda_y \hat{\phi}_y^{\oh,\val}(f^k)
\,\leq\, 
\frac{1}{K}\sum_{k=1}^K\sum_{y=1}^m \lambda_y^k \hat{\phi}_y^{\oh,\val}(f^k)
\,+\,
4BZ\sqrt{\frac{\log(m)}{{K}}}
\]

Applying the second part of Lemma \ref{lem:helper-dro-2} to each iteration $k$, we have
with probability at least $1-\delta$:
\[
\max_{\lambda \in \Delta_m}\,\frac{1}{K}\sum_{k=1}^K\sum_{y=1}^m \lambda_y {\phi}^\oh_y(f^k)
\,\leq\, 
\frac{1}{K}\sum_{k=1}^K\sum_{y=1}^m \lambda_y^k {\phi}^\oh_y(f^k)
\,+\,
4BZ\sqrt{\frac{\log(m)}{{K}}} \,+\, 2\Delta^\oh(n^\val, \delta/2),
\]
and applying the first part of Lemma \ref{lem:helper-dro-2} to the RHS, we have with the same probability, for any $\tau \in \R_+$:
\begin{align*}
{
\max_{\lambda \in \Delta_m}\,\frac{1}{K}\sum_{k=1}^K\sum_{y=1}^m \lambda_y {\phi}^\oh_y(f^k)}
&\leq
\frac{1}{K}\sum_{k=1}^K\min_{f \in \cF}\sum_{y=1}^m \lambda_y^k {\phi}^\oh_y(f)
\,+\,
4BZ\sqrt{\frac{\log(m)}{{K}}} \,+\, 2\Delta^\oh(n^\val, \delta/2) 
\\
&
\hspace{1cm}
\,+\,  2\tau\Delta(n, \delta/2)
\,+\, 2B\max_{y \in [m]} \E_x\left[
        \left| \tau\frac{p^t_y(x)}{\pi^t_y} \,-\, \frac{\eta_y(x)}{\pi_y} \right|\right]
\\
&\leq
\min_{f \in \cF}\,\frac{1}{K}\sum_{k=1}^K\sum_{y=1}^m \lambda_y^k {\phi}^\oh_y(f)
\,+\,
4BZ\sqrt{\frac{\log(m)}{{K}}} \,+\, 2\Delta^\oh(n^\val, \delta/2)
\\
&
\hspace{1cm}
\,+\,  2\tau\Delta(n, \delta/2)
\,+\, 2B\max_{y \in [m]} \E_x\left[
        \left| \tau\frac{p^t_y(x)}{\pi^t_y} \,-\, \frac{\eta_y(x)}{\pi_y} \right|\right].
\end{align*}
Using the convexity of $\phi(\cdot)$ in $f(x)$ and Jensen's inequality, 
we have that $\sum_{y=1}^m \lambda_y {\phi}_y(\bar{f}^s) \leq \frac{1}{K}\sum_{k=1}^K\sum_{y=1}^m \lambda_y {\phi}_y(f^k)$.
We use this to further lower bound the LHS 
in terms of the averaged scoring function $\bar{f}^s(x) = \frac{1}{K}\sum_{k=1}^K f^k(x)$, and re-trace the steps in Theorem \ref{thm:dro} to get"
\begin{align*}
{\max_{y \in [m]}\, {\phi}^\oh_y(\bar{f}^s)}
    &\leq
        \min_{f \in \cF}\max_{y \in [m]}\, {\phi}^\oh_y(f)
        \,+\,
        4BZ\sqrt{\frac{\log(m)}{{K}}} \,+\, 2\Delta^\oh(n^\val, \delta/2)
        \nonumber
        \\
    &\hspace{1cm}
        \,+\,  2\tau\Delta(n, \delta/2)
\,+\, 2B\max_{y \in [m]} \E_x\left[
        \left| \tau\frac{p^t_y(x)}{\pi^t_y} \,-\, \frac{\eta_y(x)}{\pi_y} \right|\right].
\end{align*}
Noting that $L^\rob(f) = \max_{y \in [m]}\, {\phi}^\oh_y(f)$ completes the proof.
\end{proof}

\section{DRO for Traded-off Objective}
\label{app:dro-general-algo}
\begin{figure}
\begin{algorithm}[H]
\caption{Distilled Margin-based DRO for Traded-off Objective}
\label{algo:dro-general}
\begin{algorithmic}
\STATE \textbf{Inputs:} Teacher $p^t$, Student hypothesis class $\cF$, Training set $S$, Validation set $S^\val$, Step-size $\gamma \in \R_+$,
Number of iterations $K$, Loss $\ell$, Trade-off parameter $\alpha$
\STATE \textbf{Initialize:} Student $f^0 \in \cF$, Multipliers $\blambda^0 \in \Delta_m$
\STATE \textbf{For}~{$k = 0 $ to $K-1$}
\STATE ~~~$\tilde{\lambda}^{k+1}_j \,=\, \lambda^k_j\exp\big( \gamma \alpha \hat{R}_j \big), \forall j \in [m]$
~\text{where} $\hat{R}_j =$ $\displaystyle\frac{1}{n^\val}\frac{1}{\hat{\pi}^{t,\val}_j}\sum_{(x, y) \in S^\val} p_j^t(x_i)\, \ell( j , f^k(x) )$
\STATE ~~~$\lambda^{k+1}_y \,=\, \frac{\tilde{\lambda}^{k+1}_y}{\sum_{j=1}^m \tilde{\lambda}^{k+1}_j}, \forall y$
\STATE ~~~
    $\beta^{k+1}_y =\,(1 - \alpha)\frac{1}{m} \,+\, \alpha\lambda^{k+1}_y$
\STATE ~~~$f^{k+1} \,\in\, \displaystyle\Argmin{f \in \cF}\, \frac{1}{n}\sum_{i=1}^n \cL^\mar\left(p^t(x_i), f(x_i); \frac{\beta^{k+1}}{\hat{\pi}^t} \right)$
~~// Replaced with a few steps of SGD
\STATE \textbf{End For}
\STATE \textbf{Output:} $\bar{f}^{s}: x \mapsto \frac{1}{K}\sum_{k =1}^K f^k(x)$
\end{algorithmic}
\end{algorithm}
\end{figure}

We present a variant of the margin-based DRO algorithm described in Section \ref{sec:algorithms}
that seeks to minimize a trade-off between the balanced and robust student objectives:
$$\displaystyle \hat{L}^\tdfd(f^s) = (1-\alpha)\hat{L}^\bald(f^s) + \alpha\hat{L}^\robd(f^s),$$
for some $\alpha \in [0,1]$. 

Expanding this, we have:
\begin{align*}
    L^\tdfd(f)
         &= 
             (1-\alpha)\frac{1}{m}\sum_{y=1}^m \frac{1}{\hat{\pi}^t_y}\frac{1}{n}\sum_{i=1}^n p^t_y(x_i)\,\ell(y, f(x_i))
             \,+\,
             \alpha \max_{y \in [m]}\,\sum_{y=1}^m \frac{1}{\hat{\pi}^t_y}\frac{1}{n}\sum_{i=1}^n p^t_y(x_i)\,\ell(y, f(x_i))\\
        &= 
             (1-\alpha)\frac{1}{m}\sum_{y=1}^m \frac{1}{\hat{\pi}^t_y}\frac{1}{n}\sum_{i=1}^n p^t_y(x_i)\,\ell(y, f(x_i))
             \,+\,
             \alpha \max_{\lambda \in \Delta_m}\,\sum_{y=1}^m \frac{\lambda_y}{\hat{\pi}^t_y}\frac{1}{n}\sum_{i=1}^n p^t_y(x_i)\,\ell(y, f(x_i))\\
        &= 
             \max_{\lambda \in \Delta_m}\sum_{y=1}^m\left((1-\alpha)\frac{1}{m} + \alpha\lambda_y\right) \frac{1}{\hat{\pi}^t_y}\frac{1}{n}\sum_{i=1}^n p^t_y(x_i)\,\ell(y, f(x_i)).
\end{align*}
The minimization of $ L^\tdfd(f)$ over $f$ can then be a cast as a min-max problem:
\begin{align*}
    \min_{f:\X\>\R^m}\,L^\tdfd(f)
         &= 
             \min_{f:\X\>\R^m}\,\max_{\lambda \in \Delta_m}\sum_{y=1}^m\left((1-\alpha)\frac{1}{m} + \alpha\lambda_y\right) \frac{1}{\hat{\pi}^t_y}\frac{1}{n}\sum_{i=1}^n p^t_y(x_i)\,\ell(y, f(x_i)).
\end{align*}
Retracing the steps in the derivation of Algorithm \ref{algo:dro} in Section \ref{sec:algorithms}, we
have the following updates on $\lambda$ and $f$ to solve the above min-max problem:
\begin{align*}
    \tilde{\lambda}^{k+1}_y &= \lambda^k_y\exp\bigg( \gamma\alpha\frac{1}{n\hat{\pi}^t_y}\sum_{i=1}^n  p_y^t(x_i)\, \ell\left( y , f^k(x_i) \right) \bigg), \forall y\\
    \lambda^{k+1}_y &=\, \frac{\tilde{\lambda}^{k+1}_y}{\sum_{j=1}^m \tilde{\lambda}^{k+1}_j}, \forall y\\
    \beta^{k+1}_y &=\,(1 - \alpha)\frac{1}{m} \,+\, \alpha\lambda^{k+1}_y\\
    f^{k+1} &\in\, \Argmin{f \in \cF} \sum_{y \in [m]}\frac{\beta^{k+1}_y}{n\hat{\pi}^t_y}\sum_{i=1}^n  p_y^t(x_i)\, \ell\left( y , f(x_i) \right),
\end{align*}
for step-size parameter $\gamma > 0$. 
To better handle training of over-parameterized students, we will perform the updates on $\lambda$ using a held-out validation set,
and employ a margin-based surrogate for performing the minimization over $f$.  This procedure is outlined in Algorithm \ref{algo:dro-general}.

\section{Further experiment details}\label{app:experiment_details}
This section contains further experiment details and additional results. 
\begin{itemize}
\item Appendices \ref{app:datasets} through \ref{app:baselines} contain additional details about the datasets, hyperparameters, and baselines. 
\item Appendices \ref{app:tables} through \ref{app:tradeoff_plots} contain additional experimental comparisons with the AdaMargin and AdaAlpha baselines \cite{lukasik2021teachers} and group DRO \cite{Sagawa2020Distributionally}, and additional experimental results on CIFAR, TinyImageNet and ImageNet, along with additional trade-off plots.
\end{itemize}


\subsection{Additional details about datasets}\label{app:datasets}

\subsubsection{Building long tailed datasets}
The long-tailed datasets were created from the original datasets following \citet{cui2019class} by downsampling examples with an exponential decay in the per-class sizes. As done by \citet{narasimhan2021training}, we set
the imbalance ratio $\frac{\max_i P(y=i)}{\min_i P(y=i)}$ to 100 for CIFAR-10 and CIFAR-100, and to 83 for TinyImageNet
(the slightly smaller ratio is to ensure that the smallest class is of a reasonable size).
We use the long-tail version of ImageNet generated by \citet{liu2017sphereface}.

\subsubsection{Dataset splits}
The original test samples for  CIFAR-10, CIFAR-10-LT, CIFAR-100, CIFAR-100-LT, TinyImageNet (200 classes), TinyImageNet-LT (200 classes), and ImageNet (1000 classes)
are all balanced. Following \citet{narasimhan2021training}, we randomly split them in half and use half the samples as a validation set, and the other half as a test set. For the CIFAR and TineImageNet datasets, this amounts to using a validation set of size 5000. For the ImageNet dataset, we sample a subset of 5000 examples from the validation set each time we update the Lagrange multipliers in Algorithm \ref{algo:dro}.


In keeping with prior work \cite{menon2020long, narasimhan2021training, lukasik2021teachers}, we use the same validation and test sets for the long-tailed training sets as we do for the original versions. For the long tailed training sets, this simulates a scenario where the training data follows a long tailed distribution due to practical data collection limitations, but the test distribution of interest still comes from the original data distribution. In plots, the ``balanced accuracy'' that we report for the long-tail datasets (e.g., CIFAR-10-LT) is actually the standard accuracy calculated over the balanced test set, which is shared with the original balanced dataset (e.g., CIFAR-10).

Both teacher and student were always trained on the same training set. 

The CIFAR
datasets had images of size 32 $\times$ 32, while the TinyImageNet and ImageNet datasets 
dataset 
had images of size 224 $\times$ 224.

These datasets do not contain personally identifiable information or offensive content. The CIFAR-10 and CIFAR-100 datasets are licensed under the MIT License. The terms of access for ImageNet are given at \url{https://www.image-net.org/download.php}.


\subsection{Additional details about training and hyperparameters}
\label{app:setup-details}
\subsubsection{Code}
\label{app:code}
We have made our code available as a part of the supplementary material.

\subsubsection{Training details and hyperparameters}
Unless otherwise specified, the temperature hyperparameters were chosen to achieve the best worst-class accuracy on the validation set. All models were trained using SGD with momentum of 0.9 \cite{lukasik2021teachers, narasimhan2021training}. 

The learning rate schedule were chosen to mimic the settings in prior work \cite{narasimhan2021training, lukasik2021teachers}. 
 %
For CIFAR-10 and CIFAR-100 datasets, we ran the optimizer for 450 epochs, linearly warming up the learning rate till the 15th epoch, and then applied a step-size decay of 0.1 after the 200th, 300th and 400th epochs, as done by \citet{lukasik2021teachers}.
For the long-tail versions of these datasets, we ran trained for 256 epochs,
linearly warming up the learning rate till the 15th epoch, and then applied a step-size decay of 0.1 after the 96th, 192nd and 224th epochs, as done by \citet{narasimhan2021training}. Similarly, for the TinyImageNet datasets, we train for 200 epochs,  linearly warming up the learning rate till the 5th epoch, and then applying a decay of 0.1 after the 75th and 135th epochs, as done by \citet{narasimhan2021training}. For ImageNet, we train for 90 epochs, linearly warming up the learning rate till the 5th epoch, then applying a decay of 0.1 after the 30th, 60th and 80th epochs, as done by \citet{lukasik2021teachers}. 
We used a batch size of 128 for the CIFAR-10 datasets \cite{narasimhan2021training}, and a batch size of 1024 for the other datasets \cite{lukasik2021teachers}.

We apply an $L_2$ weight decay of $10^{-4}$ in all our SGD updates \cite{lukasik2021teachers}. This amounts to applying an \emph{$L_2$ regularization} on the model parameters, and has the effect of keeping the model parameters (and as a result the loss function) bounded. 

When training with the margin-based robust objective (see Algorithm \ref{algo:dro}), a  separate step size $\gamma$ was applied for training the main model function $f$, and for updating the multipliers $\lambda$. 
We set $\gamma$ to 0.1 in all experiments.



\emph{Hardware.} Model training was done using TPUv2. 

\subsubsection{Repeats}
For all comparative baselines without distillation (e.g. the first and second rows of Table \ref{tab:combos}), we provide average results over $m$ retrained models ($m=5$ for TinyImageNet, or $m=10$ for CIFAR datasets). For students on all CIFAR* datasets, unless otherwise specified, we train the teacher once and run the student training 10 times using the same fixed teacher. We compute the mean and standard error of metrics over these $m=10$ runs. For the resource-heavy TinyImageNet and ImageNet students, we reduce the number of repeats to $m=5$. This methodology captures variation in the student retrainings while holding the teacher fixed. To capture the end-to-end variation in both teacher and student training, we include Appendix \ref{app:tables} which contains a rerun of the CIFAR experiments in Table \ref{tab:combos} using a distinct teacher for each student retraining. The overall best teacher/student objective combinations either did not change or were statistically comparable to the previous best combination in Table \ref{tab:combos}.

\subsection{Additional details about algorithms and baselines}\label{app:baselines}

\subsubsection{Practical improvements to Algorithms \ref{algo:dro}--\ref{algo:dro-general}}
\label{app:post-hoc-details}
 Algorithms \ref{algo:dro}--\ref{algo:dro-general} currently return a scorer that averages over
 all $K$ iterates $\bar{f}^s(x) = \frac{1}{K}\sum_{k=1}^K f^k(x)$. While this averaging was required for our theoretical robustness guarantees to hold,
 in our experiments, we find it sufficient to simply return the last model $f^{K}$.
 Another practical improvement that we make  to these algorithms following \citet{cotter2019optimization}, is to employ the 0-1 loss while 
 performing updates on $\lambda$, i.e., set $\ell = \ell^\zo$ in the $\lambda$-update step. We are able to do this because
 the convergence of the exponentiated gradient updates on $\lambda$ does not depend on $\ell$ being differentiable. 
 This modification allows $\lambda$s to better reflect the model's per-class performance on the validation sample. 
 
 \subsubsection{Discussion on post-shifting baseline}
 We implement the post-shifting method in \citet{narasimhan2021training} (Algorithm 3 in their paper),
 which provides for an efficient way to construct a scoring function of the form $f^s_y(x) = \log(\gamma_y p^t_y(x)),$ for a fixed teacher $p^t$, where
the coefficients $\gamma \in \R^m$ are chosen to maximize the worst-class accuracy on the validation dataset. 
Interestingly, in our experiments, we find this approach
to do exceedingly well on the validation sample, but this does not always translate to good worst-class test performance. 
In contrast, some of the teacher-student combinations
that we experiment with were seen to over-fit less to the validation sample, and as a result were
able to generalize better to the test set. This could perhaps indicate that the
teacher labels we use in these combinations benefit the student in a way that it improves its generalization. 
The variance reduction effect that \citet{menon2021statistical} postulate may be one possible explanation 
for why we see this behavior.


\subsection{Different teachers on repeat trainings}\label{app:tables}

Most of the student results in the main paper in Table \ref{tab:combos} use the same teacher for all repeat trainings of the student. This captures the variance in the student training process while omitting the variance in the teacher training process. To capture the variance in the full training pipeline, we ran an additional set of experiments where students were trained on different retrained teachers, rather than on the same teacher. We report results on all CIFAR datasets in Table \ref{tab:teacher_var}. The best teacher/student combinations are identical for all datasets except for CIFAR-10-LT, for which the best teacher/student combinations from Table \ref{tab:teacher_var} were also not statistically significantly different from the best combination in Table \ref{tab:combos} ($L^{\bal}$/$L^{\robd}$ (one-hot val) vs. $L^{\bal}$/$L^{\robd}$ (teacher val) in Table \ref{tab:combos}). Note that the first and second rows of Table \ref{tab:combos} are already averaged over $m$ retrained teachers ($m=5$ for TinyImageNet, or $m=10$ for CIFAR datasets), and those same $m$ teachers are used in the repeat trainings in Table \ref{tab:teacher_var}.

\begin{table}[!h]
\caption{Comparison using different teachers for student retrainings for self-distilled teacher/student combos on test. For each student/teacher objective pair, we train $m=10$ students total on each of $m=10$ distinct retrained teachers. For comparability, the same set of $m$ teachers is used for each student. This differs from Table \ref{tab:combos} in that in Table \ref{tab:combos}, the students are retrained on each repeat using the same teacher (arbitrarily selected). Otherwise, setups are the same as in Table \ref{tab:combos}.
}
\label{tab:teacher_var}
\begin{center}
\begin{small}
\hspace{-2pt}
\begin{tabular}{p{0.1cm}c||c|c||c|c||}
\toprule
& & \multicolumn{2}{c||}{\textbf{CIFAR-10} Teacher Obj.} & \multicolumn{2}{c||}{\textbf{CIFAR-100}Teacher Obj.} \\
& & $L^{\std}$ & $L^{\rob}$ & $L^{\std}$ & $L^{\rob}$ \\
\midrule
\multirow{6}{*}{\rotatebox{90}{Student Obj.}}
& $L^{\stdd}$ & $87.09 \pm 0.51$  & $89.68 \pm 0.20$   & $44.21\pm0.57$  & \cellcolor{blue!25}$\mathbf{47.79}\pm0.82$ \\
& & \tiny{($93.78 \pm 0.22$)} & \tiny{($93.74 \pm 0.07$)} & \tiny{$74.6\pm0.11$} & \cellcolor{blue!25}\tiny{$73.48\pm0.11$}  \\
\cline{2-6}
& $L^{\robd}$  & \cellcolor{blue!25} $\textbf{90.62} \pm 0.19$ & $87.12 \pm 0.38$ & $39.7\pm1.32$  & $31.09\pm1.21$ \\
&\tiny{(teacher val)} & \cellcolor{blue!25}\tiny{($92.58 \pm 0.08$)} & \tiny{($90.46 \pm 0.08$)} & \tiny{($64.28\pm0.41$)} & \tiny{($55.39\pm0.28$)} \\
\cline{2-6}
& $L^{\robd}$ & $88.15 \pm 0.66$ & $86.44 \pm 0.52$ & $39.44\pm0.94$  & $39.65\pm0.59$ \\
&\tiny{(one-hot val)} & \tiny{($91.03 \pm 0.47$)} & \tiny{($90.16 \pm 0.42$)} & \tiny{($61.23\pm0.36$)} & \tiny{($60.89\pm0.29$)} \\
\bottomrule
\vspace{4pt}
\end{tabular}

\begin{tabular}{p{0.1cm}c||c|c|c||c|c|c||}
\toprule
& & \multicolumn{3}{c||}{\textbf{CIFAR-10-LT} Teacher Obj.} & \multicolumn{3}{c||}{\textbf{CIFAR-100-LT} Teacher Obj.} \\
& & $L^{\std}$ & $L^{\bal}$ & $L^{\rob}$ & $L^{\std}$ & $L^{\bal}$ & $L^{\rob}$ \\
\midrule
\multirow{6}{*}{\rotatebox{90}{Student Obj.}}
& $L^{\stdd}$ & $60.12 \pm 0.56$ & $66.13 \pm 0.47$ & $69.75\pm0.52$ & $0.00 \pm 0.00$ & $1.41 \pm 0.41$ & $9.17\pm0.74$ \\
&&\tiny{($77.39 \pm 0.10$)} & \tiny{($79.16 \pm 0.20$)} & \tiny{($80.73\pm0.08$)}  &\tiny{($45.84 \pm 0.13$)} & \tiny{($49.67\pm 0.20$)} & \tiny{($48.55\pm0.14$)}\\
\cline{2-8}
& $L^{\bald}$ & $72.41 \pm 0.52$ & $71.49 \pm 0.30$ & $71.70 \pm 0.33$ & $5.83 \pm 0.54$ & $5.94 \pm 0.50$ & $8.37 \pm 0.72$ \\
&& \tiny{($81.97 \pm 0.11$)} & \tiny{($81.20 \pm 0.15$)} & \tiny{($80.29 \pm 0.11$)} & \tiny{($50.58 \pm 0.15$)} & \tiny{($50.85 \pm 0.14$)} & \tiny{($48.16\pm 0.20$)}\\
\cline{2-8}
& $L^{\robd}$  & $62.77 \pm 0.58$  & $73.09\pm 0.34$ & $68.04 \pm 0.47$  & $10.53 \pm 0.76$  & $12.04 \pm 0.89$ & $9.66 \pm 1.15$ \\
& \tiny{(teacher val)} & \tiny{($77.18 \pm 0.15$)} & \tiny{($80.03\pm 0.22$)} & \tiny{($75.36 \pm 0.25$)} & \tiny{($33.69 \pm 0.14$)} & \tiny{($34.08 \pm 0.12$)} & \tiny{($37.10 \pm 0.15$)}\\
\cline{2-8}
& $L^{\robd}$  & \cellcolor{blue!25}$\mathbf{75.10} \pm 0.36$ & \cellcolor{blue!25}$\mathbf{75.10} \pm 0.50$ & $74.16 \pm 0.34$ & $10.74 \pm 0.44$ & $11.95 \pm 0.69$ & \cellcolor{blue!25} $\mathbf{12.87}\pm 0.81$ \\
& \tiny{(one-hot val)} & \cellcolor{blue!25}\tiny{($79.27 \pm 0.13$)} & \cellcolor{blue!25}\tiny{($79.07 \pm 0.20$)} & \tiny{($78.11 \pm 0.14$} & \tiny{($30.36 \pm 0.39$)} & \tiny{($31.00 \pm 0.16$)} & \cellcolor{blue!25}\tiny{($31.62 \pm 0.34$}\\
\bottomrule
\end{tabular}
\end{small}
\end{center}
\end{table}

\subsection{AdaAlpha and AdaMargin comparisons}

We include and discuss additional comparisons to the AdaMargin and AdaAlpha methods \cite{lukasik2021teachers}. These methods each define additional ways to modify the student's loss function (see Section \ref{sec:expts}). Table \ref{tab:ada*} shows results for these under the same self-distillation setup as in Table \ref{tab:combos}. For the balanced datasets, AdaMargin was competitive with the robust and standard students: on CIFAR-100 and TinyImageNet, AdaMargin combined with the robust teacher and the standard teacher (respectively) achieved worst-class accuracies that are statistically comparable to the best worst-class accuracies in Table \ref{tab:combos} and Table \ref{tab:teacher_var}. However, on the long-tailed datasets CIFAR-10-LT, CIFAR-100-LT, and TinyImageNet-LT (Table \ref{tab:combos_tinlt} below), AdaAlpha and AdaMargin did not achieve worst-class accuracies as high as other teacher/student combinations. This suggests that the AdaMargin method can work well on balanced datasets in combination with a robust teacher, but other combinations of standard/balanced/robust objectives are valuable for long-tailed datasets. Overall, AdaMargin achieved higher worst-class accuracies than AdaAlpha. 

\begin{table*}[!h]
\caption{Results for AdaAlpha and AdaMargin baselines for self-distilled teacher/student combos on test. Worst-class accuracy shown above, standard accuracy shown in parentheses  for the top three datasets, and balanced accuracy shown in parentheses for the bottom two long-tail datasets. Mean and standard error are reported over 3 repeats for all datasets.}
\label{tab:ada*}
\begin{center}
\begin{small}
\begin{tabular}{p{0.1cm}c||c|c||c|c||c|c||}
\toprule
& & \multicolumn{2}{c||}{\textbf{CIFAR-10} Teacher Obj.} & \multicolumn{2}{c||}{\textbf{CIFAR-100} Teacher Obj.} & \multicolumn{2}{c||}{\textbf{TinyImageNet} Teacher Obj.} \\
& & $L^{\std}$ & $L^{\rob}$ & $L^{\std}$ & $L^{\rob}$ & $L^{\std}$ & $L^{\rob}$ \\
\midrule
& Ada & $88.33\pm0.14$  & $89.96\pm0.44$ & $43.50\pm 0.62$  & $45.59\pm 0.82$ & $11.11 \pm 1.29$ & $16.58\pm1.67$ \\
& Alpha & \tiny{($94.31\pm0.01$)} & \tiny{($93.97\pm0.07$)} & \tiny{$73.96\pm 0.09$} & \tiny{$71.42\pm 0.14$} & \tiny{$61.13\pm0.09$} & \tiny{$56.84\pm0.15$} \\
\cline{2-8}
& Ada & $87.36$ \tiny{$\pm0.06$}  & $90.37$\tiny{$\pm0.26$} & $43.91$ \tiny{$\pm 1.11$}  & \cellcolor{blue!25} $\mathbf{47.78}$ \tiny{$\pm 0.96$} & \cellcolor{blue!25}$\mathbf{18.17}$ \tiny{$\pm 3.89$}  & $17.84$ \tiny{$\pm 1.77$}\\
& Margin & \tiny{($94.25\pm0.02$)} & \tiny{($94.02\pm0.12$)} & \tiny{($73.58\pm 0.11$)} & \cellcolor{blue!25}\tiny{($70.92\pm 0.09$)} & \cellcolor{blue!25}\tiny{($61.3 \pm 0.28$)} & \tiny{($55.77 \pm 0.32$)}\\
\bottomrule
\end{tabular}

\begin{tabular}{p{0.1cm}c||c|c|c||c|c|c||}
\toprule
& & \multicolumn{3}{c||}{\textbf{CIFAR-10-LT} Teacher Obj.} & \multicolumn{3}{c||}{\textbf{CIFAR-100-LT} Teacher Obj.} \\
& & $L^{\std}$ & $L^{\bal}$ & $L^{\rob}$ & $L^{\std}$ & $L^{\bal}$ & $L^{\rob}$ \\
\midrule
& Ada & $41.90\pm0.44$ & $66.23\pm0.39$ & $71.17\pm0.32$  & $0.00 \pm 0.00$ & $1.46 \pm 0.61$ & $9.15 \pm 0.54$ \\
& Alpha & \tiny{($71.67\pm0.08$)} & \tiny{($77.87\pm0.16$)} & \tiny{($79.66\pm0.13$)}   & \tiny{($42.52 \pm 0.08$)} & \tiny{($45.44 \pm 0.14$)} & \tiny{($45.64 \pm 0.11$)}\\
\bottomrule
\end{tabular}
\end{small}
\end{center}
\end{table*}

\subsection{TinyImageNet-LT worst-10 accuracy results}
In this section we provide results for TinyImageNet-LT. We report worst-10 accuracies in Table \ref{tab:combos_tinlt}, where the \textbf{worst-$k$ accuracy} is defined to be the average of the $k$ lowest per-class accuracies. We report worst-10 accuracies since the worst-class accuracy (as reported in Table \ref{tab:combos}) for TinyImageNet-LT was usually close to 0.

\begin{table}[!ht]
\caption{TinyImageNet-LT comparison of self-distilled teacher/student combos on test. Average \textbf{worst-10} accuracy shown above, balanced accuracy shown in parentheses below. The combination with the best worst-class accuracy is bolded.}
\label{tab:combos_tinlt}
\begin{center}
\begin{small}
\begin{tabular}{p{0.1cm}c||c|c|c||}
\toprule
& & \multicolumn{3}{c||}{\textbf{TinyImageNet-LT} Teacher Obj.} \\
& & $L^{\std}$ & $L^{\bal}$ & $L^{\rob}$ \\
\midrule
\multirow{10}{*}{\rotatebox{90}{Student Obj.}} 
& none & $0.00 \pm 0.00$ & $0.00 \pm 0.00$ & $1.00 \pm 0.69$ \\
&& \tiny{($26.15 \pm 0.21$)} & \tiny{($29.19 \pm 0.22$)} & \tiny{($22.73 \pm 0.25$)} \\
\cline{2-5}
& Post &  $0.60 \pm 0.28$ & $0.56 \pm 0.32$ &  $0.07 \pm 0.06$ \\
& shift & \tiny{($15.99 \pm 0.39$)} & \tiny{($17.00 \pm 0.23$)} & \tiny{($16.73 \pm 0.46$)} \\
\cline{2-5}
& Ada &  $0.00 \pm 0.00$ & $0.00 \pm 0.00$ &  $0.00 \pm 0.00$ \\
& Alpha & \tiny{($28.14 \pm 0.12$)} & \tiny{($0.50 \pm 0.00$)} & \tiny{($0.50 \pm 0.00$)} \\
\cline{2-5}
& Ada &  $0.00 \pm 0.00$ & $0.00 \pm 0.00$ &  $0.41 \pm 0.17$ \\
& Margin & \tiny{($9.18 \pm 0.09$)} & \tiny{($7.92 \pm 0.10$)} & \tiny{($23.08 \pm 0.15$)} \\
\cline{2-5}
& $L^{\stdd}$ & $0.0 \pm 0.0$ & $0.0 \pm 0.0$ & $1.87 \pm 0.52$ \\
&&\tiny{($26.05 \pm 0.40$)} & \tiny{($28.76	\pm 0.37$)} & \tiny{($25.34 \pm	0.29$)} \\
\cline{2-5}
& $L^{\bald}$ & $0.2 \pm 0.4$ &  $0.71 \pm 0.5$ & $\cellcolor{blue!25}{\bf 2.05} \pm 0.55$ \\
&& \tiny{($30.43 \pm 0.14$)} & \tiny{($30.41 \pm 0.4$)} & \cellcolor{blue!25}\tiny{($26.08 \pm 0.28$)}\\
\cline{2-5}
& $L^{\robd}$  & $0.0 \pm 0.0$  & $0.5 \pm	0.36$ & $0.0 \pm 0.0$\\
& \tiny{(teacher val)} & \tiny{($22.67 \pm 0.23$ )} & \tiny{($23.79 \pm	0.41$)} & \tiny{($17.24	\pm 0.34$)} \\
\cline{2-5}
& $L^{\robd}$  & $1.04 \pm 0.51$ & $1.46 \pm 0.66$ & $1.22 \pm	0.79$\\
& \tiny{(one-hot val)} & \tiny{($16.85 \pm 0.23$)} & \tiny{($19.81 \pm	0.34$)} & \tiny{($16.3	\pm 0.13$)}\\
\bottomrule
\end{tabular}
\end{small}
\end{center}
\vskip -0.1in
\end{table}

\subsection{Group DRO comparison}

\citet{Sagawa2020Distributionally} propose a group DRO algorithm to improve long tail performance without distillation. In this section we present an experimental comparison to Algorithm 1 from \citet{Sagawa2020Distributionally}. This differs from our robust optimization methodology in Section \ref{sec:algorithms} in two key ways: \textit{(i)} we apply a margin-based surrogates of \citet{menon2020long}, and \textit{(ii)} we use a validation set to update the Lagrange multipliers $\lambda$ in Algorithm \ref{algo:dro-general}. Table \ref{tab:groupdro} shows results from running group DRO directly as specified in Algorithm 1 in \citet{Sagawa2020Distributionally}, as well as a variant where we use the validation set to update Lagrange multipliers in group DRO (labeled as ``with vali'' in the table).
This comparison shows that $L^{\rob}$ is comparable to group DRO, and that robust distillation protocols can outperform group DRO alone. 

\begin{table}[!ht]
\caption{Results from comparison to group DRO (Algorithm 1 in \citet{Sagawa2020Distributionally}) without distillation. ``No vali'' uses the training set to update group lagrange multipliers, as done originally by \citet{Sagawa2020Distributionally}. ``With vali'' uses the validation set to compute group Lagrange multipliers as done in all other experiments in our paper. Worst-class accuracy is shown above, and balanced accuracy is shown in parentheses below.}
\label{tab:groupdro}
\begin{center}
\begin{small}
\begin{tabular}{||c|c||c|c||c|c||}
\toprule
\multicolumn{2}{||c||}{\textbf{CIFAR-10} group DRO} & \multicolumn{2}{c||}{\textbf{CIFAR-100} group DRO} & \multicolumn{2}{c||}{\textbf{TinyImageNet} group DRO} \\
 No vali & With vali & No vali & With vali & No vali & With vali \\
\midrule
 $86.65$ \tiny{$\pm 0.49$}  & $89.32 $ \tiny{$\pm 0.21$} &  $40.35 $ \tiny{$\pm 1.18$}  & $43.89 $ \tiny{$\pm 1.12$}  & $0.00 $ \tiny{$\pm 0.00$}  & $9.17 $ \tiny{$\pm 1.55$}\\
 \tiny{($93.61 \pm 0.09$)} & \tiny{($92.34 \pm 0.07$)} & \tiny{$70.25 \pm 0.17$} & \tiny{$65.18 \pm 0.08$} & \tiny{($6.55 \pm 0.41$)} & \tiny{($47.67 \pm 0.22$)}\\
\bottomrule
\end{tabular}

\begin{tabular}{||c|c||c|c||c|c||}
\toprule
\multicolumn{2}{||c||}{\textbf{CIFAR-10-LT} group DRO} & \multicolumn{2}{c||}{\textbf{CIFAR-100-LT} group DRO} & \multicolumn{2}{c||}{\textbf{TinyImageNet-LT} group DRO} \\
No vali & With vali & No vali & With vali & No vali & With vali \\
\midrule
$51.59 $ \tiny{$\pm 2.49$} & $59.93 $ \tiny{$\pm 0.59$} &   $0.00 $ \tiny{$\pm 0.00$} & $0.19 $ \tiny{$\pm 0.17$}  &   $0.00 $ \tiny{$\pm 0.00$} & $0.00 $ \tiny{$\pm 0.00$}\\
\tiny{($71.94 \pm 0.75$)} & \tiny{($74.39 \pm 0.17$)}  & \tiny{($39.81 \pm 0.23$)} & \tiny{($40.47 \pm 0.17$)}& \tiny{($9.79 \pm 0.40$)} & \tiny{($22.49 \pm 0.10$)}\\
\bottomrule
\end{tabular}

\end{small}
\end{center}
\end{table}



\subsection{CIFAR compressed students}
To supplement the self-distilled results in Table \ref{tab:combos}, we provide  Table \ref{tab:combos_32} which gives results when distilling a larger teacher to a smaller student. As in Table \ref{tab:combos}, the combination with the best worst-class accuracy involved a robust student for CIFAR-10, CIFAR-10-LT, and CIFAR-100. 

\begin{table*}[!h]
\caption{Comparison of ResNet-56$\to$ResNet-32 distilled teacher/student combos on test on CIFAR datasets. Compared to the standard distillation baseline of teacher trained with $L^{\std}$ and student trained with $L^{\stdd}$ (top left corner), the introduction of a robust objective achieves better worst-class accuracy. Worst-class accuracy shown above, standard accuracy shown in parentheses  for the top three datasets, and balanced accuracy shown in parenthesis for the bottom two long-tail datasets. The combination with the best worst-class accuracy is bolded. Mean and standard error are reported over repeats (10 repeats for all except CIFAR-100, which has 3 repeats). We include results for the robust student using either a teacher labeled validation set (``teacher val''), or true one-hot class labels in the validation set (``one-hot val''), as outlined in Section \ref{sec:algorithms}.}
\label{tab:combos_32}
\begin{center}
\begin{small}
\begin{tabular}{p{0.1cm}p{0.1cm}c||c|c||c|c||}
\toprule
& & & \multicolumn{2}{c||}{\textbf{CIFAR-10} Teacher Obj.} & \multicolumn{2}{c||}{\textbf{CIFAR-100} Teacher Obj.} \\
& & & $L^{\std}$ & $L^{\rob}$ & $L^{\std}$ & $L^{\rob}$ \\
\midrule
\multirow{6}{*}{\rotatebox{90}{Student Obj.}} &\multirow{6}{*}{\rotatebox{90}{(ResNet-32)}}
& $L^{\stdd}$ & $86.4 \pm 0.27$  & $89.56 \pm 0.20$  & $41.82\pm1.12$  & \cellcolor{blue!25}$\mathbf{45.7}\pm1.13$ \\
&& & \tiny{($93.73 \pm 0.05$)} & \tiny{($93.38 \pm 0.05$)} & \tiny{$73.19\pm0.10$} & \cellcolor{blue!25}\tiny{$71.42\pm0.22$}  \\
\cline{3-7}
&& $L^{\robd}$  & \cellcolor{blue!25} $\mathbf{89.61} \pm 0.27$ & $83.8 \pm 0.95$ & $38.94\pm2.61$  & $19.15\pm0.00$ \\
&&\tiny{(teacher val)} & \cellcolor{blue!25}\tiny{($92.20 \pm 0.08$)} & \tiny{($88.71 \pm 0.24$)} & \tiny{($62.28\pm0.40$)} & \tiny{($52.9\pm0.00$)} \\
\cline{3-7}
&& $L^{\robd}$ & $87.92 \pm 0.23$ & $86.57 \pm 0.24$ & $33.19\pm1.29$  & $41.23\pm0.84$ \\
&&\tiny{(one-hot val)} & \tiny{($90.89 \pm 0.12$)} & \tiny{($90.54 \pm 0.11$)} & \tiny{($57.43\pm0.29$)} & \tiny{($61.14\pm0.24$)} \\
\bottomrule
\end{tabular}

\begin{tabular}{p{0.1cm}p{0.1cm}c||c|c|c||c|c|c||}
\toprule
&& & \multicolumn{3}{c||}{\textbf{CIFAR-10-LT} Teacher Obj.} & \multicolumn{3}{c||}{\textbf{CIFAR-100-LT} Teacher Obj.} \\
&& & $L^{\std}$ & $L^{\bal}$ & $L^{\rob}$ & $L^{\std}$ & $L^{\bal}$ & $L^{\rob}$ \\
\midrule
\multirow{6}{*}{\rotatebox{90}{Student Obj.}} &\multirow{6}{*}{\rotatebox{90}{(ResNet-32)}} 
& $L^{\stdd}$ & $57.23 \pm 0.53$ & $66.80 \pm 0.25$ & $72.36\pm0.39$ & $0.00 \pm 0.00$ & $1.38 \pm 0.39$ & $7.99\pm0.48$ \\
&&&\tiny{($75.76 \pm 0.12$)} & \tiny{($78.99 \pm 0.06$)} & \tiny{($80.74\pm0.09$)}  &\tiny{($44.33 \pm 0.11$)} & \tiny{($47.28\pm0.13$)} & \tiny{($47.34\pm0.08$)}\\
\cline{3-9}
&& $L^{\bald}$ & $71.37 \pm 0.50$ & $71.00 \pm 0.45$ & $72.17 \pm 0.40$ & $3.57 \pm 0.58$ & $4.28 \pm 0.45$ & $5.58 \pm 0.53$ \\
&&& \tiny{($81.13 \pm 0.12$)} & \tiny{($81.12 \pm 0.15$)} & \tiny{($79.91 \pm 0.08$)} & \tiny{($49.21 \pm 0.10$)} & \tiny{($46.56 \pm 0.13$)} & \tiny{($48.58\pm 0.09$)}\\
\cline{3-9}
&& $L^{\robd}$  & $64.1 \pm 0.36$  & $73.51\pm 0.33$ & $69.90 \pm 0.42$  & $10.24 \pm 0.71$  & \cellcolor{blue!25}$\mathbf{13.41} \pm 0.72$ & $11.27 \pm 0.61$ \\
&& \tiny{(teacher val)} & \tiny{($76.34 \pm 0.12$)} & \tiny{($80.10\pm 0.10$)} & \tiny{($76.37 \pm 0.14$)} & \tiny{($33.55 \pm 0.16$)} & \cellcolor{blue!25}\tiny{($33.37 \pm 0.17$)} & \tiny{($36.14 \pm 0.19$)}\\
\cline{3-9}
&& $L^{\robd}$  & $72.65 \pm 0.27$ & $74.39 \pm 0.34$ & \cellcolor{blue!25}$\mathbf{74.45} \pm 0.26$ & $10.93 \pm 0.65$ & $12.2 \pm 0.65$ & $12.93\pm 0.62$ \\
&& \tiny{(one-hot val)} & \tiny{($77.69 \pm 0.11$)} & \tiny{($78.68 \pm 0.16$)} & \cellcolor{blue!25}\tiny{($77.97 \pm 0.10$} & \tiny{($29.48 \pm 0.22$)} & \tiny{($30.27 \pm 0.18$)} & \tiny{($31.83 \pm 0.17$}\\
\bottomrule
\end{tabular}
\end{small}
\end{center}
\vskip -0.12in
\end{table*}

\subsection{ImageNet comparisons}\label{app:imgnet}
We trained ResNet-34 teachers on ImageNet and report results of distillation to ResNet-34 and ResNet-18 students in Table \ref{tab:combos_imagenet}. 
In Table \ref{tab:combos_imagenetlt},
we report results of
distilling a ResNet-34 teacher to a ResNet-34 student and to a ResNet-18 student on ImageNet-LT.

\begin{table}[!ht]
\caption{ImageNet comparison of teacher/student combos on test. The left panel presents the result of \emph{self distillation} and the right panel presents \emph{model compression}. Average worst-50 (worst-5\%) accuracy shown above, standard accuracy shown in parentheses below. The combination with the best worst-class accuracy is bolded. The first two rows are teacher metrics and are the same between the two experiments. Mean and standard error are reported over 5 repeats.}
\label{tab:combos_imagenet}
\begin{center}
\begin{small}
\begin{tabular}{p{0.1cm}c||c|c||}
\multicolumn{4}{c}{ResNet-34 Teacher to ResNet-34 Student} \\
\toprule
& & \multicolumn{2}{c||}{\textbf{ImageNet} Teacher Obj.} \\
&  & $L^{\std}$ & $L^{\rob}$ \\
\midrule
\multirow{8}{*}{\rotatebox{90}{Student Obj.}} 
& none & $28.08$ & $27.94$ \\
&& \tiny{($70.82$)} & \tiny{($66.62$)} \\
\cline{2-4}
& Post &  $26.10$ & $19.68$  \\
& shift & \tiny{($55.85 $)} & \tiny{($45.61 $)}\\
\cline{2-4}
& $L^{\stdd}$ & $24.43 \pm 0.00$ &  $25.23 \pm 0.06$\\
&& \tiny{($67.74 \pm 0.00$)} & \tiny{($67.05 \pm 0.05$)} \\
\cline{2-4}
& $L^{\robd}$  &\cellcolor{blue!25} $\mathbf{26.63} \pm 0.11$  & $24.18 \pm 0.24$\\
& \tiny{(one-hot val)} &\cellcolor{blue!25} \tiny{($63.50 \pm 0.01$ )} & \tiny{($60.68 \pm	0.14$)}\\
\bottomrule
\end{tabular}\quad
\begin{tabular}{p{0.1cm}c||c|c||}
\multicolumn{4}{c}{ResNet-34 Teacher to ResNet-18 Student} \\
\toprule
& & \multicolumn{2}{c||}{\textbf{ImageNet} Teacher Obj.} \\
&  & $L^{\std}$ & $L^{\rob}$ \\
\midrule
\multirow{8}{*}{\rotatebox{90}{Student Obj.}} 
& none & $28.08$ & $27.94$ \\
&& \tiny{($70.82$)} & \tiny{($66.62$)} \\
\cline{2-4}
& Post &  $26.10$ & $19.68$  \\
& shift & \tiny{($55.85 $)} & \tiny{($45.61 $)}\\
\cline{2-4}
& $L^{\stdd}$ & $20.64 \pm 0.24$ &\cellcolor{blue!25}  $\mathbf{23.91} \pm 0.38$\\
&& \tiny{($64.98 \pm 0.06$)} &\cellcolor{blue!25} \tiny{($60.24 \pm 0.07$)} \\
\cline{2-4}
& $L^{\robd}$  & $22.44 \pm 0.20$  & $21.62 \pm 0.22$\\
& \tiny{(one-hot val)} & \tiny{($64.52 \pm 0.10$ )} & \tiny{($57.96 \pm	0.06$)}\\
\bottomrule
\end{tabular}

\end{small}

\end{center}
\end{table}

\begin{table}[!ht]
\caption{ImageNet-LT comparison of teacher/student combos on test. The left panel presents the result of \emph{self distillation} and the right panel presents \emph{model compression}. Average worst-50 (worst-5\%) accuracy shown above, balanced accuracy shown in parentheses below. The combination with the best worst-class accuracy is bolded. The first two rows are teacher metrics and are the same between the two experiments. Mean and standard error are reported over 5 repeats. }
\label{tab:combos_imagenetlt}
\begin{center}
\begin{small}
\begin{tabular}{p{0.1cm}c||c|c||}
\multicolumn{4}{c}{ResNet-34 Teacher to ResNet-34 Student} \\
\toprule
& & \multicolumn{2}{c||}{\textbf{ImageNet-LT} Teacher Obj.} \\
&  & $L^{\bal}$ & $L^{\rob}$ \\
\midrule
\multirow{8}{*}{\rotatebox{90}{Student Obj.}} 
& none & $1.58$ & $2.67$ \\
&& \tiny{($43.26$)} & \tiny{($38.46$)} \\
\cline{2-4}
& Post &  $2.40$ & $0.87$  \\
& shift & \tiny{($19.49$)} & \tiny{($16.52$)}\\
\cline{2-4}
& $L^{\bald}$ & $3.53 \pm 0.11$ &  $4.85 \pm 0.15$\\
&& \tiny{($43.98 \pm 0.10$)} & \tiny{($43.89 \pm 0.03$)} \\
\cline{2-4}
& $L^{\robd}$  &\cellcolor{blue!25} $\mathbf{6.00} \pm 0.06$  & $4.31 \pm 0.08$\\
& \tiny{(one-hot val)} &\cellcolor{blue!25} \tiny{($39.17 \pm 0.14$ )} & \tiny{($36.12 \pm	0.14$)}\\
\bottomrule
\end{tabular}\quad
\begin{tabular}{p{0.1cm}c||c|c||}
\multicolumn{4}{c}{ResNet-34 Teacher to ResNet-18 Student} \\
\toprule
& & \multicolumn{2}{c||}{\textbf{ImageNet-LT} Teacher Obj.} \\
&  & $L^{\bal}$ & $L^{\rob}$ \\
\midrule
\multirow{8}{*}{\rotatebox{90}{Student Obj.}} 
& none & $1.58$ & $2.67$ \\
&& \tiny{($43.26$)} & \tiny{($38.46$)} \\
\cline{2-4}
& Post &  $2.40$ & $0.87$  \\
& shift & \tiny{($19.49$)} & \tiny{($16.52$)}\\
\cline{2-4}
& $L^{\bald}$ & $2.41 \pm 0.09$ &  $3.93 \pm 0.23$\\
&& \tiny{($42.35 \pm 0.07$)} & \tiny{($42.12 \pm 0.05$)} \\
\cline{2-4}
& $L^{\robd}$  &\cellcolor{blue!25} $\mathbf{5.02} \pm 0.20$  & $3.69 \pm 0.14$\\
& \tiny{(one-hot val)} &\cellcolor{blue!25} \tiny{($37.58 \pm 0.11$ )} & \tiny{($35.10 \pm	0.09$)}\\
\bottomrule
\end{tabular}
\end{small}

\end{center}
\vskip -0.1in
\end{table}

\subsection{Additional tradeoff plots}\label{app:tradeoff_plots}

Supplementing the tradeoff plot in the main paper (Figure \ref{fig:tradeoffs_cf-lt_32_onehot}), we provide the following set of plots illustrating the tradeoff between worst-case accuracy and balanced accuracy for different student architectures:
\begin{itemize}
    \item Long-tail datasets: self distillation (Figures \ref{fig:tradeoffs_cf-lt_56_onehot}, \ref{fig:tradeoffs_cf-lt_56_teacher}), and compressed student (Figures \ref{fig:tradeoffs_cf-lt_32_onehot}, \ref{fig:tradeoffs_cf_32_teacher}).
    \item Original balanced datasets: self distillation (Figures \ref{fig:tradeoffs_cf_56_onehot}, \ref{fig:tradeoffs_cf_56_teacher}), and compressed student (Figures \ref{fig:tradeoffs_cf_32_onehot}, \ref{fig:tradeoffs_cf_32_teacher}).
\end{itemize}

\begin{figure*}[!ht]
\begin{center}
\begin{tabular}{ccc}
    & \tiny{CIFAR-10-LT (ResNet-56 $\to$ ResNet-56)} & \tiny{CIFAR-100-LT (ResNet-56 $\to$ ResNet-56)} \\
    \includegraphics[trim={2.8cm 0 2.8cm 0},clip,width=0.15\textwidth]{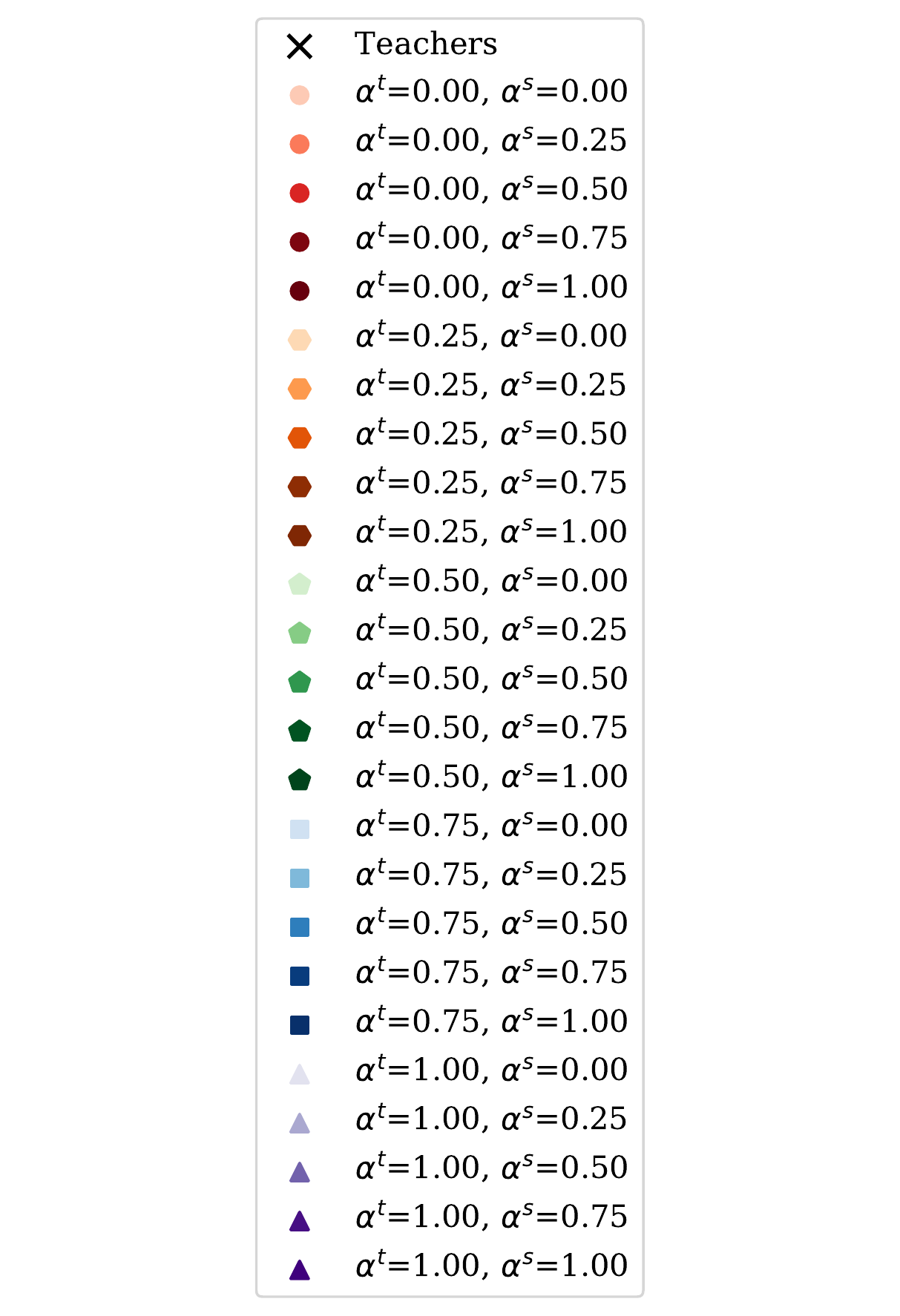} & \includegraphics[width=0.4\textwidth]{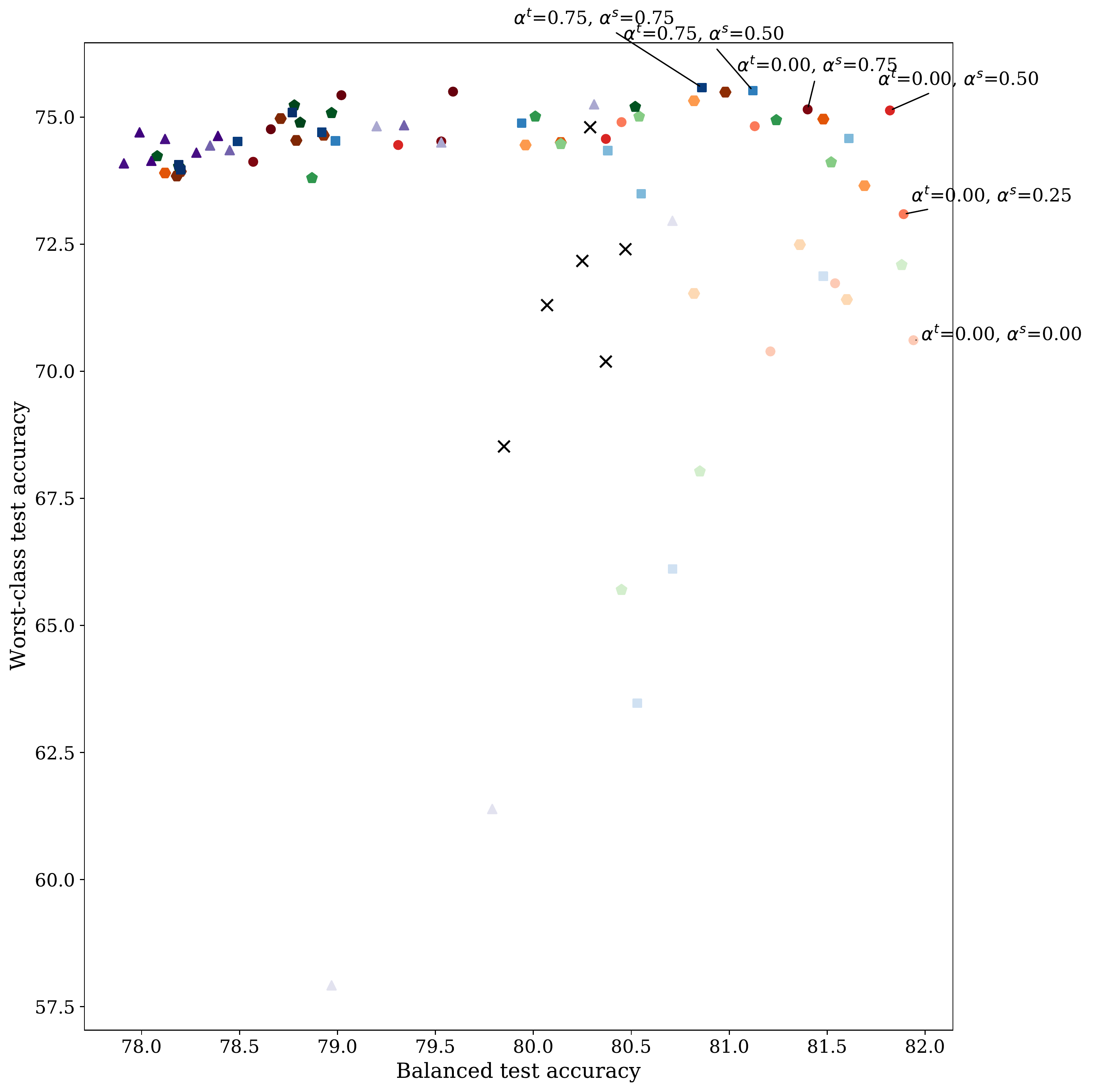} & \includegraphics[width=0.4\textwidth]{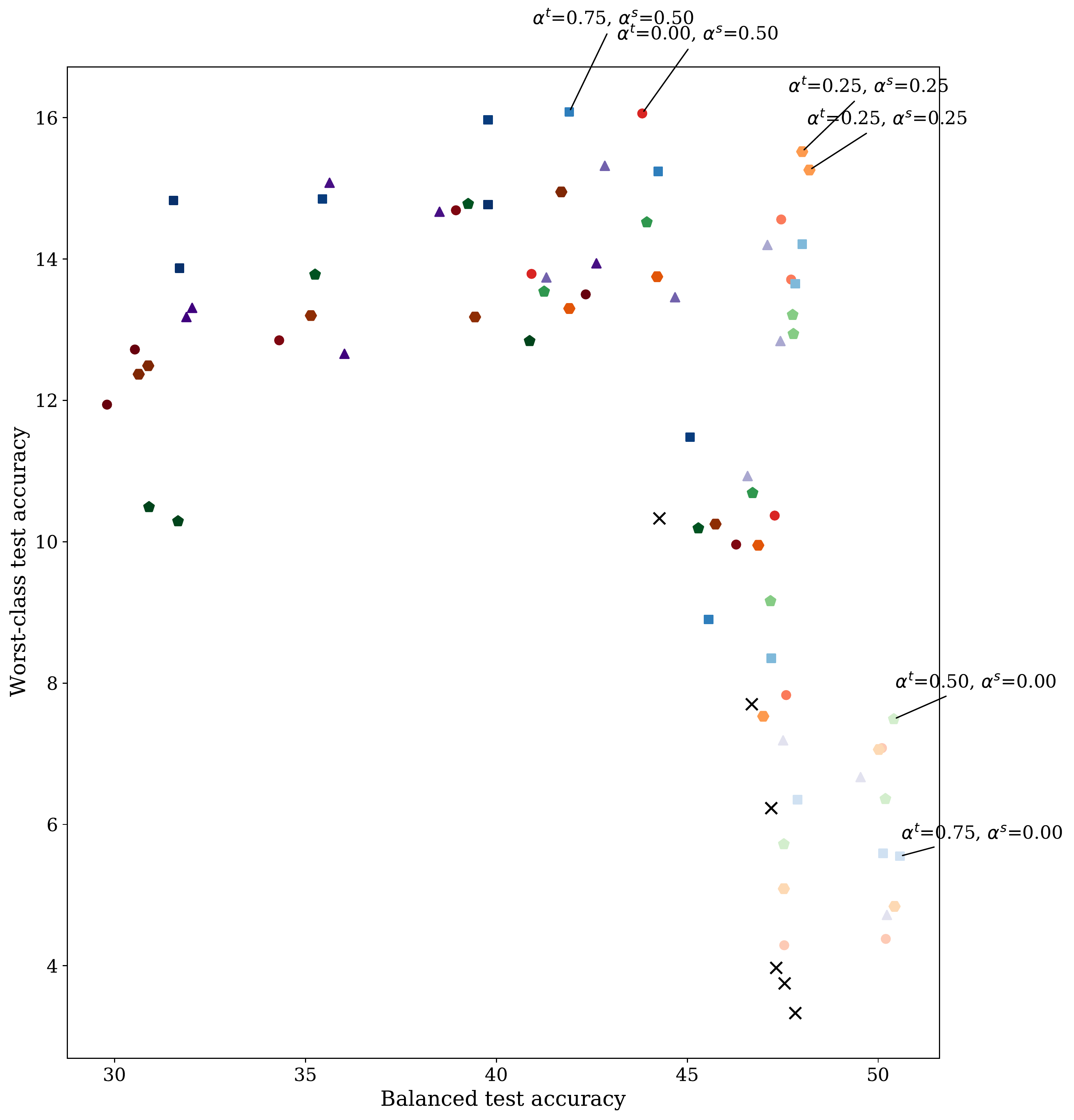}
\end{tabular}
\caption{Tradeoffs in worst-class test accuracy vs. average test accuracy for CIFAR-10-LT and CIFAR-100-LT, with students trained using architecture ResNet-56 and \textbf{one-hot} validation labels. Mean test performance over 10 runs are shown, and all temperatures are included.}
\label{fig:tradeoffs_cf-lt_56_onehot}
\end{center}
\vskip -0.2in
\end{figure*}

\begin{figure*}[!ht]
\begin{center}
\addtolength{\tabcolsep}{-5pt} 
\begin{tabular}{ccc}
    & \tiny{CIFAR-10-LT (ResNet-56 $\to$ ResNet-56)} & \tiny{CIFAR-100-LT (ResNet-56 $\to$ ResNet-56)} \\
    \includegraphics[trim={2.8cm 0 2.8cm 0},clip,width=0.15\textwidth]{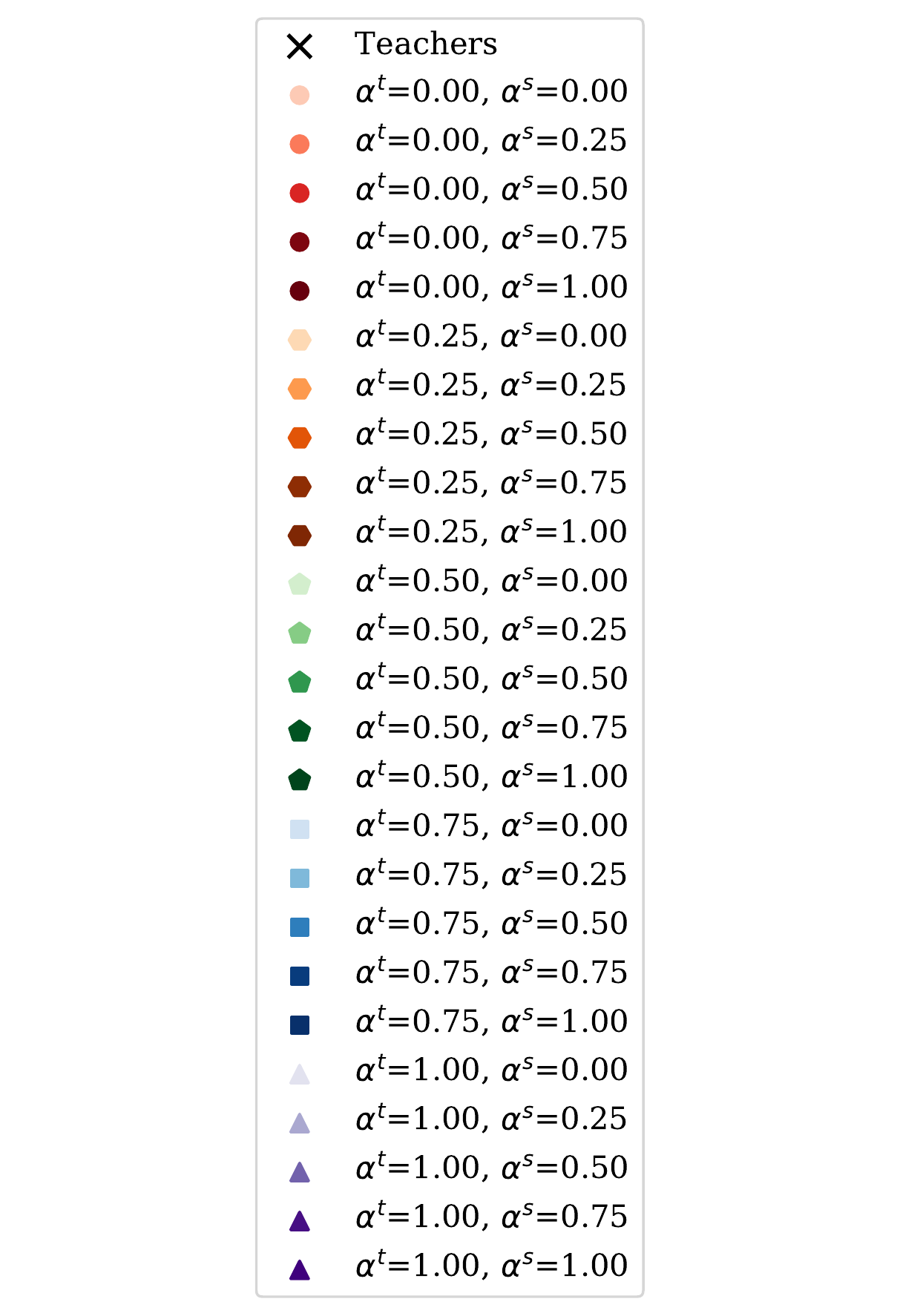} & \includegraphics[width=0.4\textwidth]{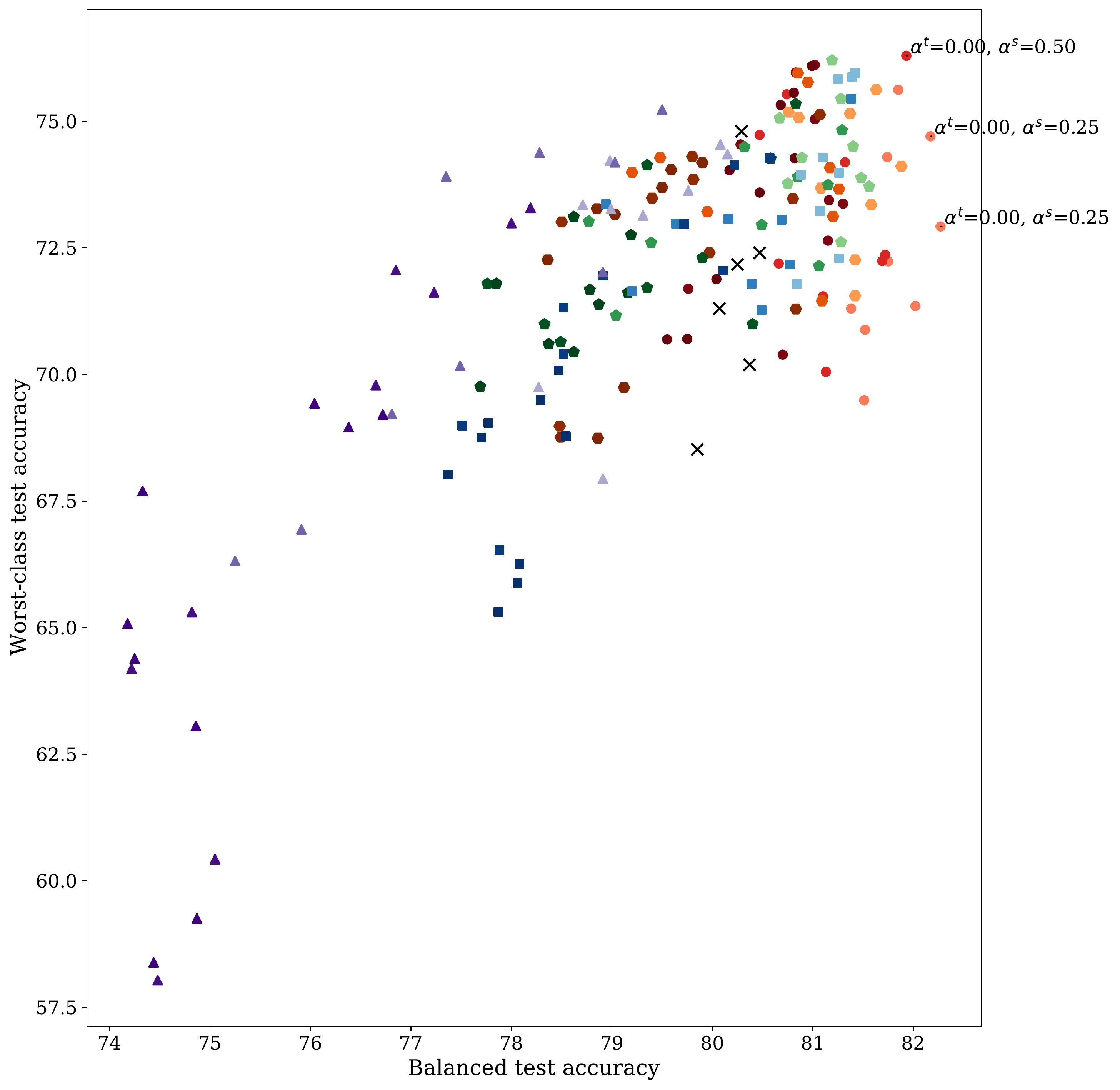} & \includegraphics[width=0.4\textwidth]{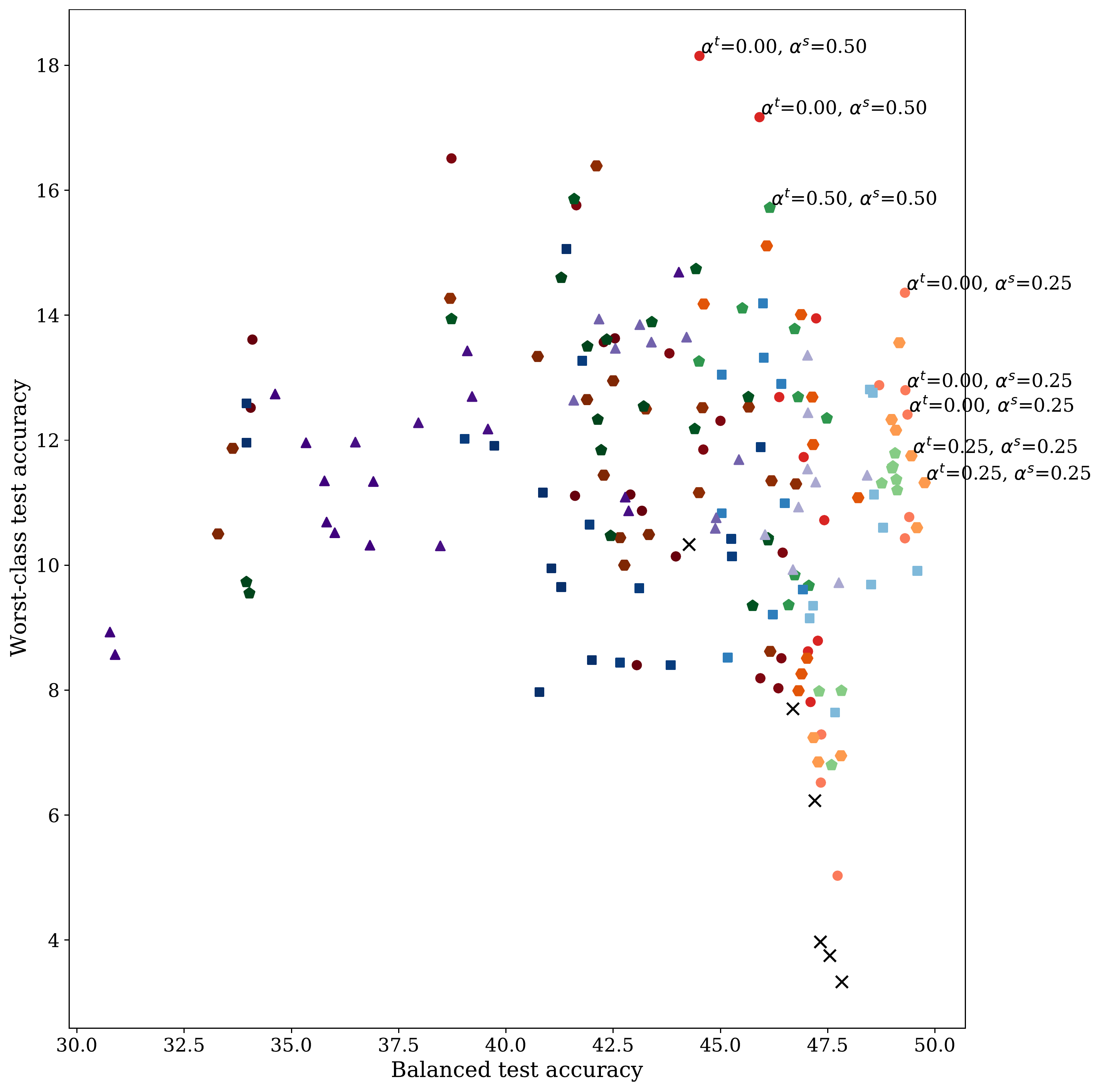}
\end{tabular}
\addtolength{\tabcolsep}{5pt} 
\caption{Tradeoffs in worst-class test accuracy vs. average test accuracy for CIFAR-10-LT and CIFAR-100-LT, with students trained using architecture ResNet-56 and \textbf{teacher} validation labels. Mean test performance over 10 runs are shown, and all temperatures are included.}
\label{fig:tradeoffs_cf-lt_56_teacher}
\end{center}
\vskip -0.2in
\end{figure*}

\begin{figure*}[!ht]
\begin{center}
\addtolength{\tabcolsep}{-5pt} 
\begin{tabular}{ccc}
    & \tiny{CIFAR-10-LT (ResNet-56 $\to$ ResNet-32)} & \tiny{CIFAR-100-LT (ResNet-56 $\to$ ResNet-32)} \\
    \includegraphics[trim={2.8cm 0 2.8cm 0},clip,width=0.15\textwidth]{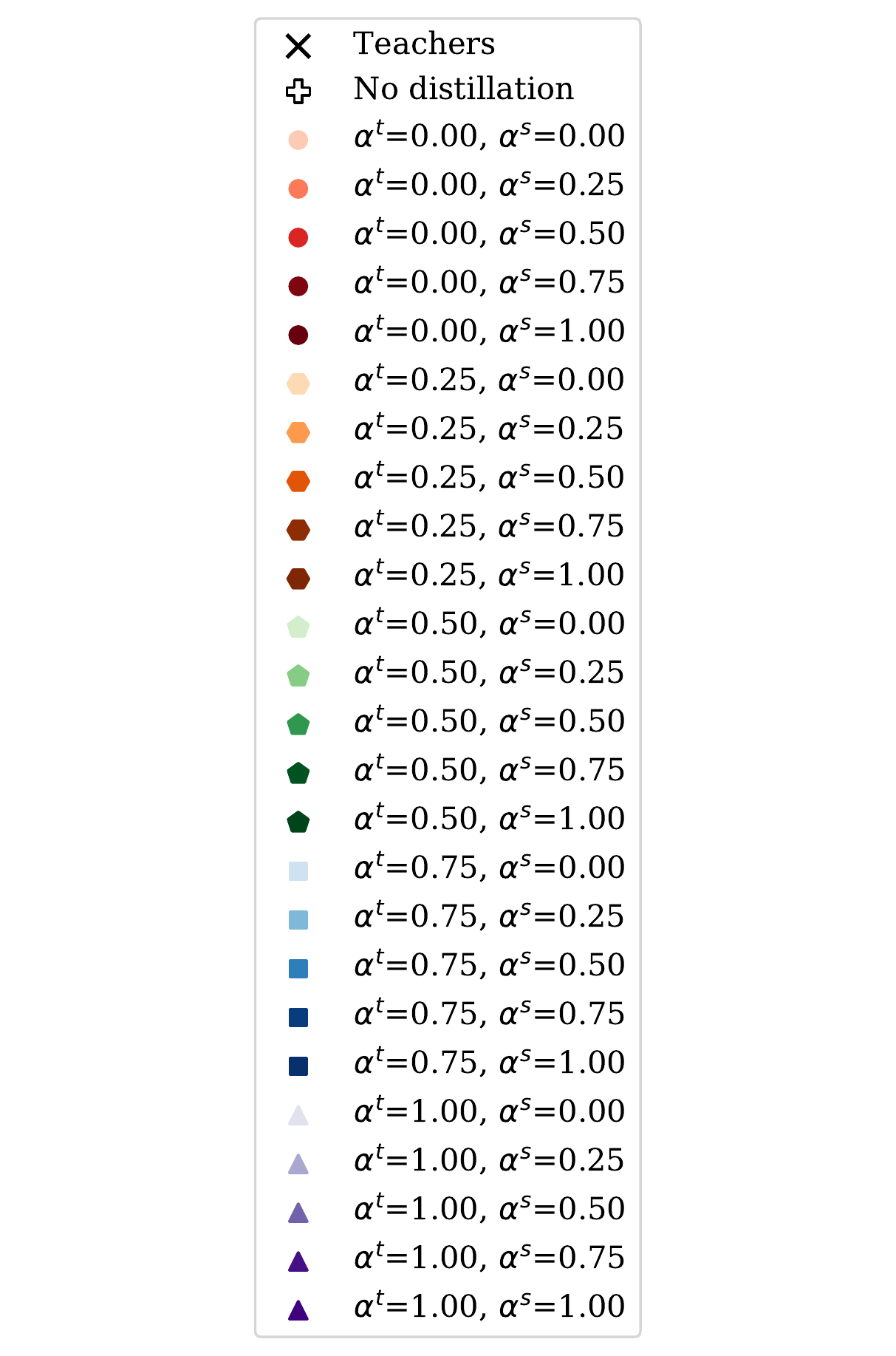} & \includegraphics[width=0.4\textwidth]{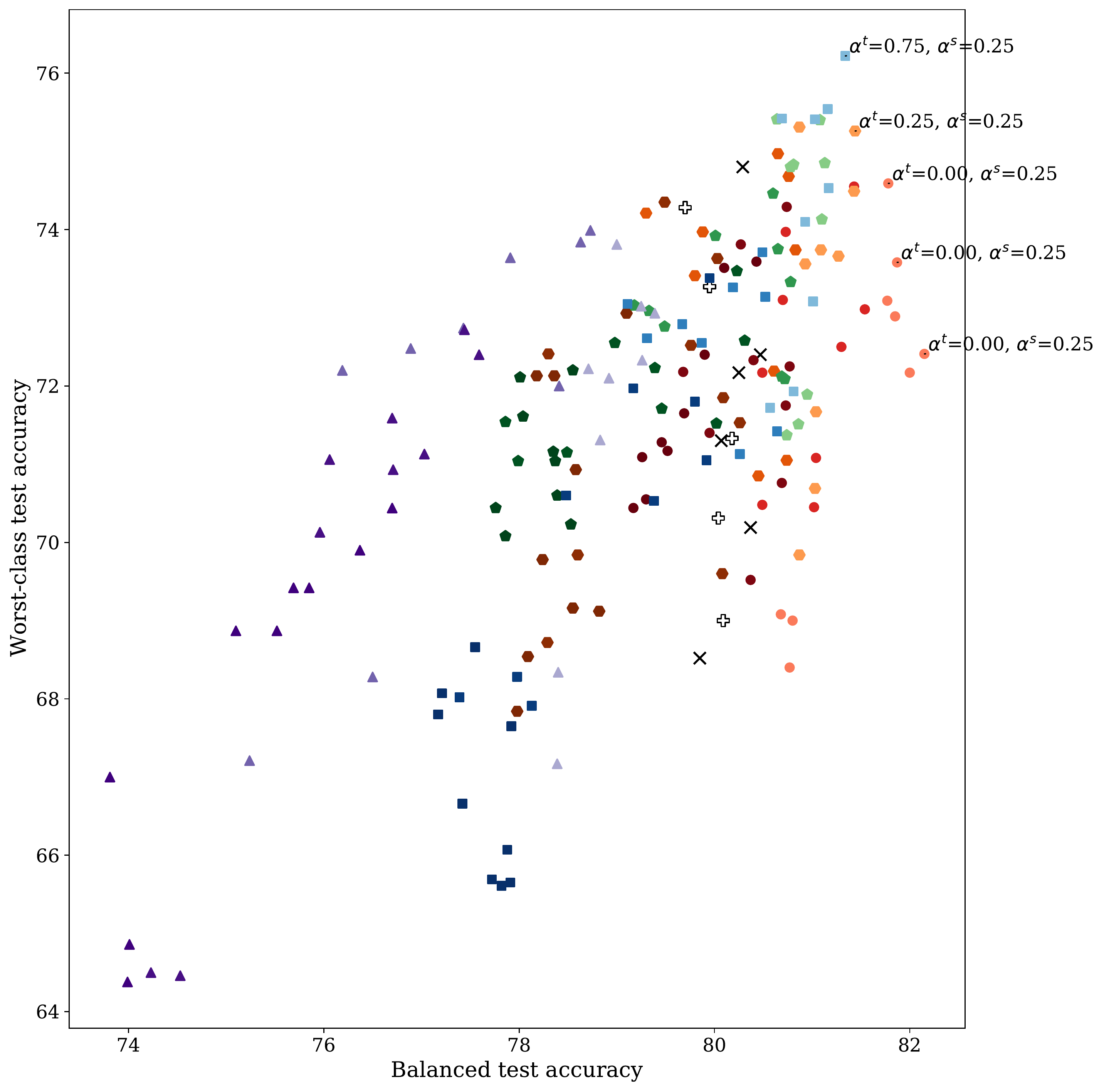} & \includegraphics[width=0.4\textwidth]{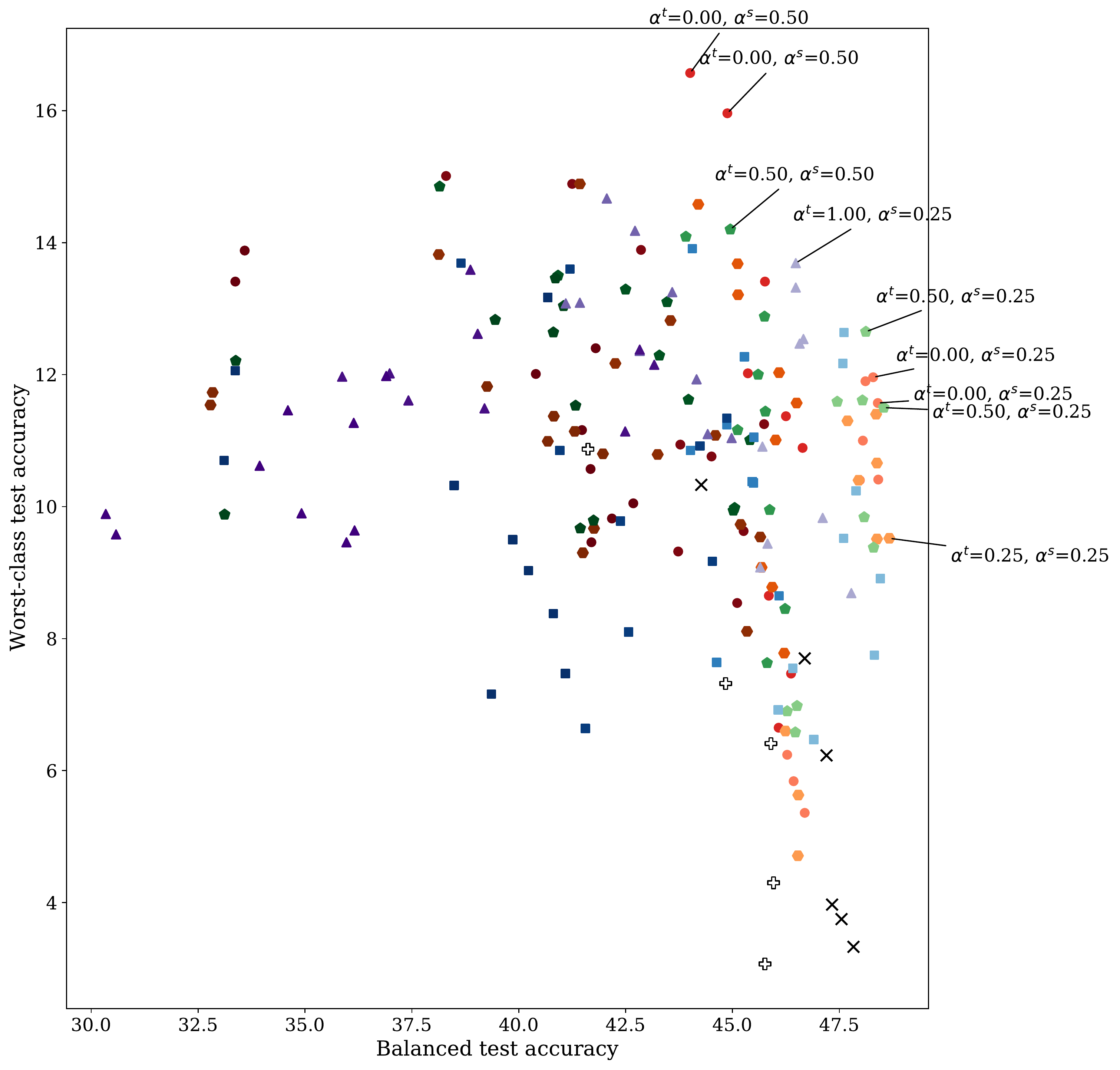}
\end{tabular}
\addtolength{\tabcolsep}{5pt} 
\caption{Tradeoffs in worst-class test accuracy vs. average test accuracy for CIFAR-10-LT and CIFAR-100-LT, with students trained using architecture ResNet-32 and \textbf{teacher} validation labels. Mean test performance over 10 runs are shown, and all temperatures are included.}
\label{fig:tradeoffs_cf-lt_32_teacher}
\end{center}
\vskip -0.2in
\end{figure*}

\begin{figure*}[!ht]
\begin{center}
\addtolength{\tabcolsep}{-5pt} 
\begin{tabular}{ccc}
    & \tiny{CIFAR-10 (ResNet-56 $\to$ ResNet-56)} & \tiny{CIFAR-100 (ResNet-56 $\to$ ResNet-56)} \\
    \includegraphics[trim={2.8cm 0 2.8cm 0},clip,width=0.15\textwidth]{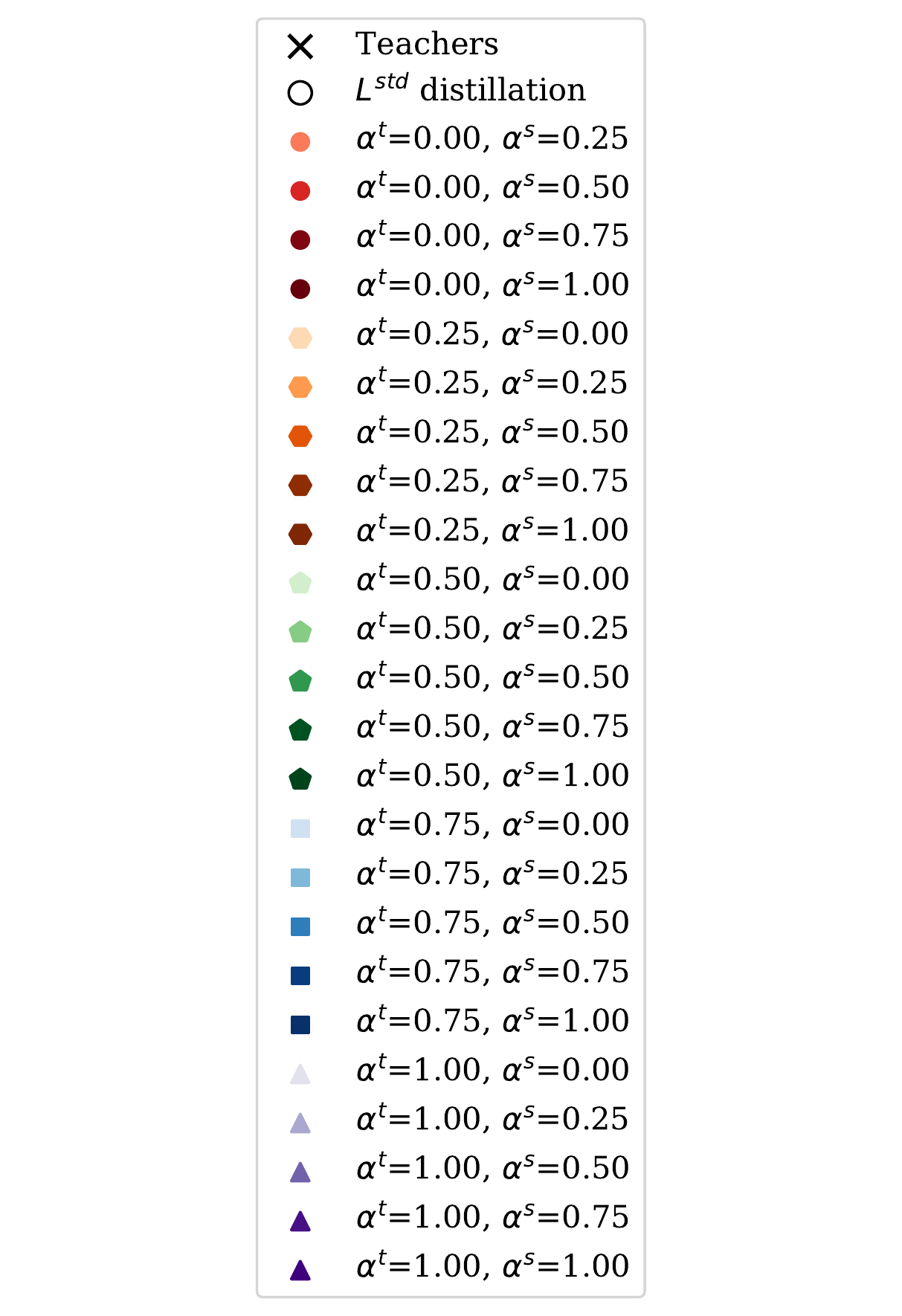} & \includegraphics[width=0.4\textwidth]{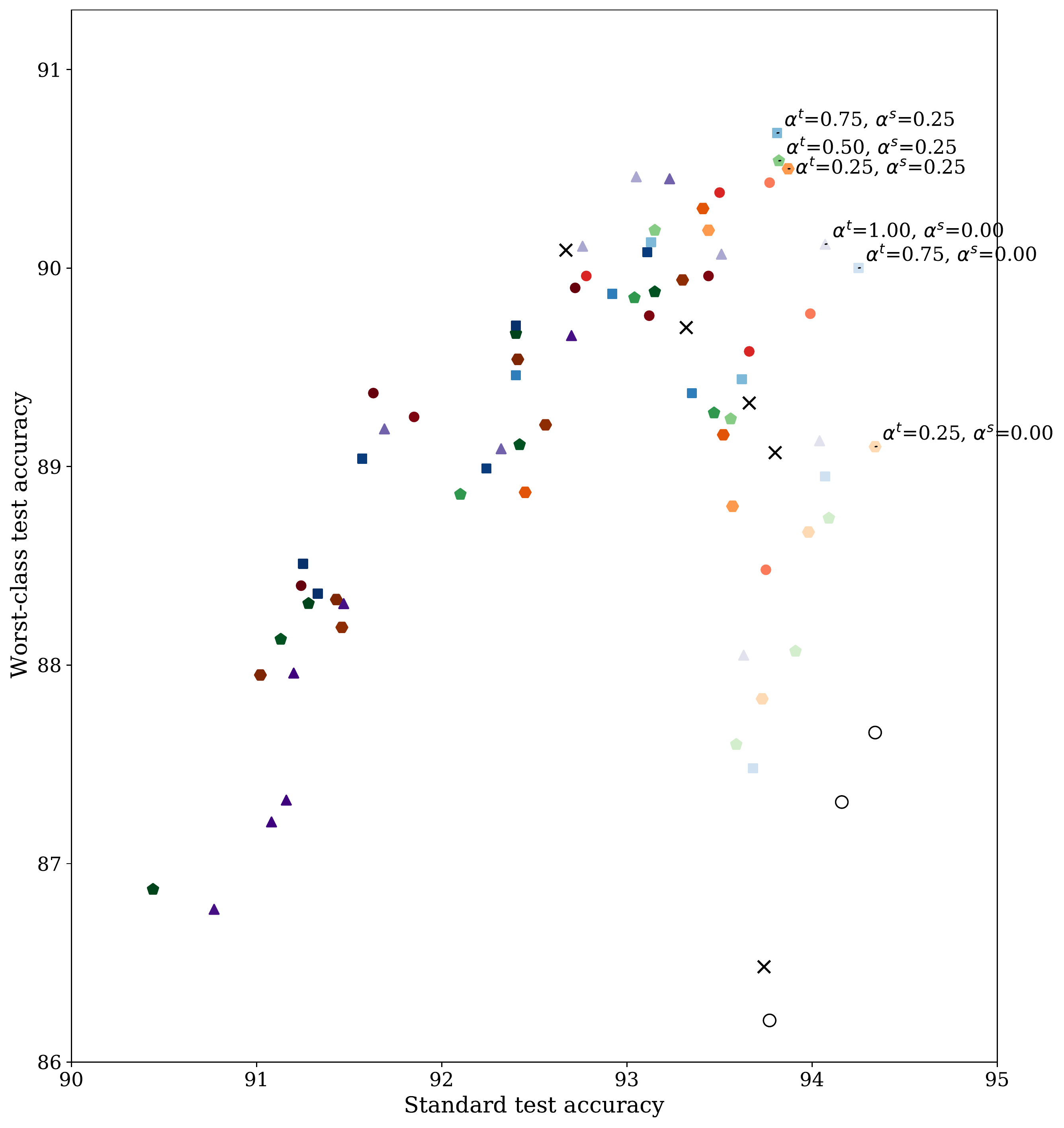} & \includegraphics[width=0.4\textwidth]{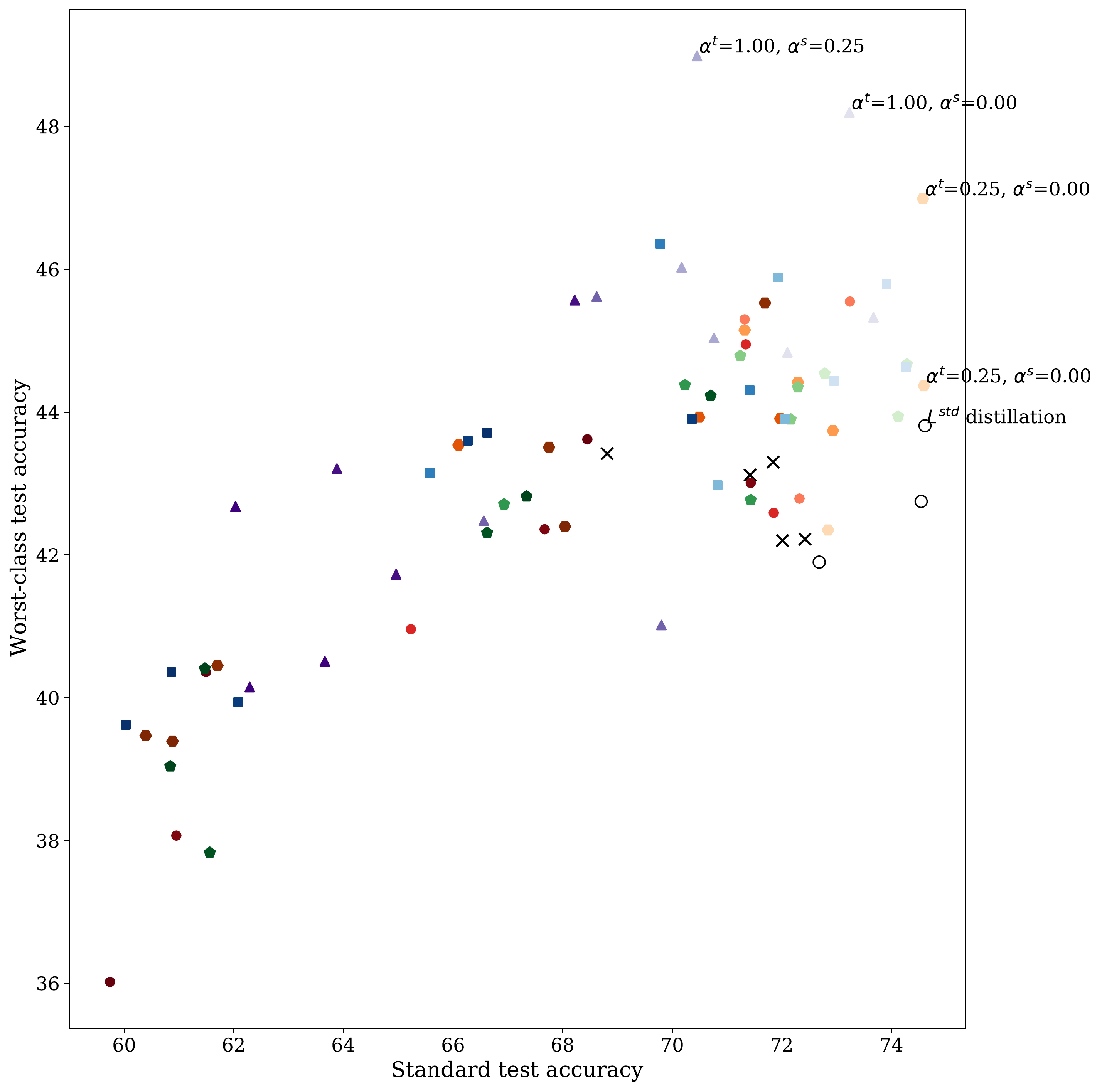}
\end{tabular}
\addtolength{\tabcolsep}{5pt} 
\caption{Tradeoffs in worst-class test accuracy vs. average test accuracy for CIFAR-10 and CIFAR-100, with students trained using architecture ResNet-56 and \textbf{one-hot} validation labels. Mean test performance over 10 runs are shown, and all temperatures are included.}
\label{fig:tradeoffs_cf_56_onehot}
\end{center}
\vskip -0.2in
\end{figure*}

\begin{figure*}[!ht]
\begin{center}
\addtolength{\tabcolsep}{-5pt} 
\begin{tabular}{ccc}
    & \tiny{CIFAR-10 (ResNet-56 $\to$ ResNet-56)} & \tiny{CIFAR-100 (ResNet-56 $\to$ ResNet-56)} \\
    \includegraphics[trim={2.8cm 0 2.8cm 0},clip,width=0.15\textwidth]{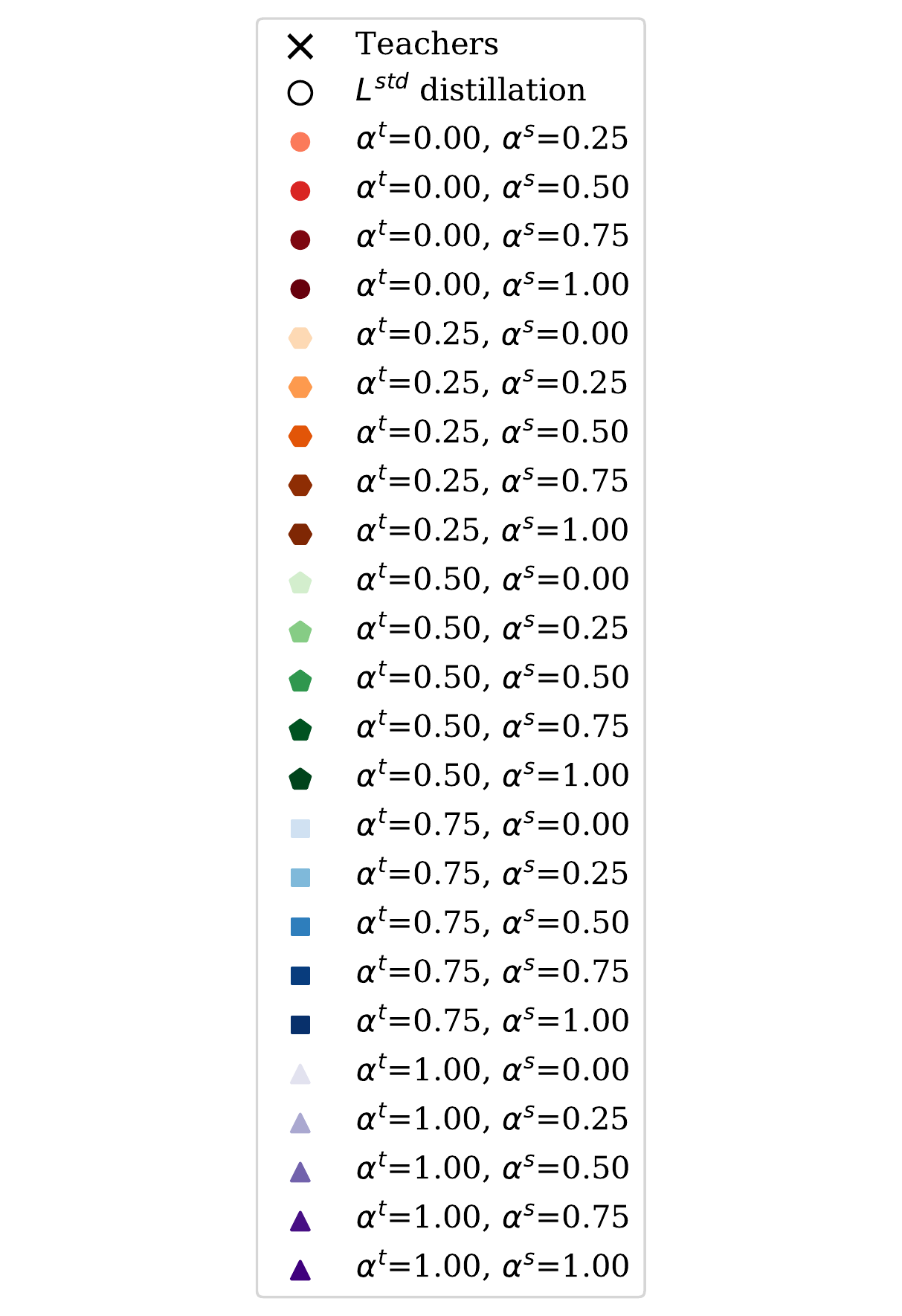} & \includegraphics[width=0.4\textwidth]{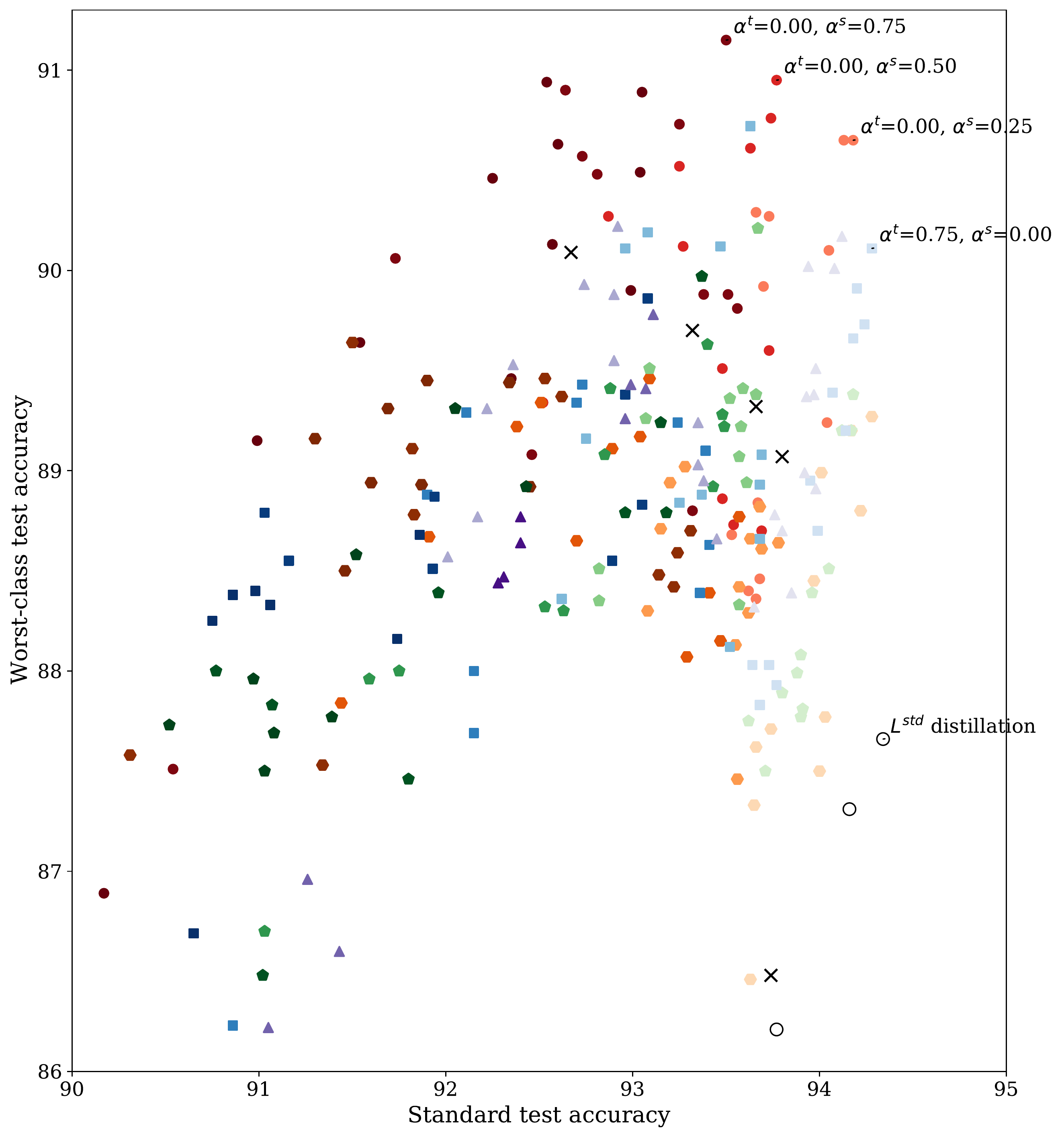} & \includegraphics[width=0.4\textwidth]{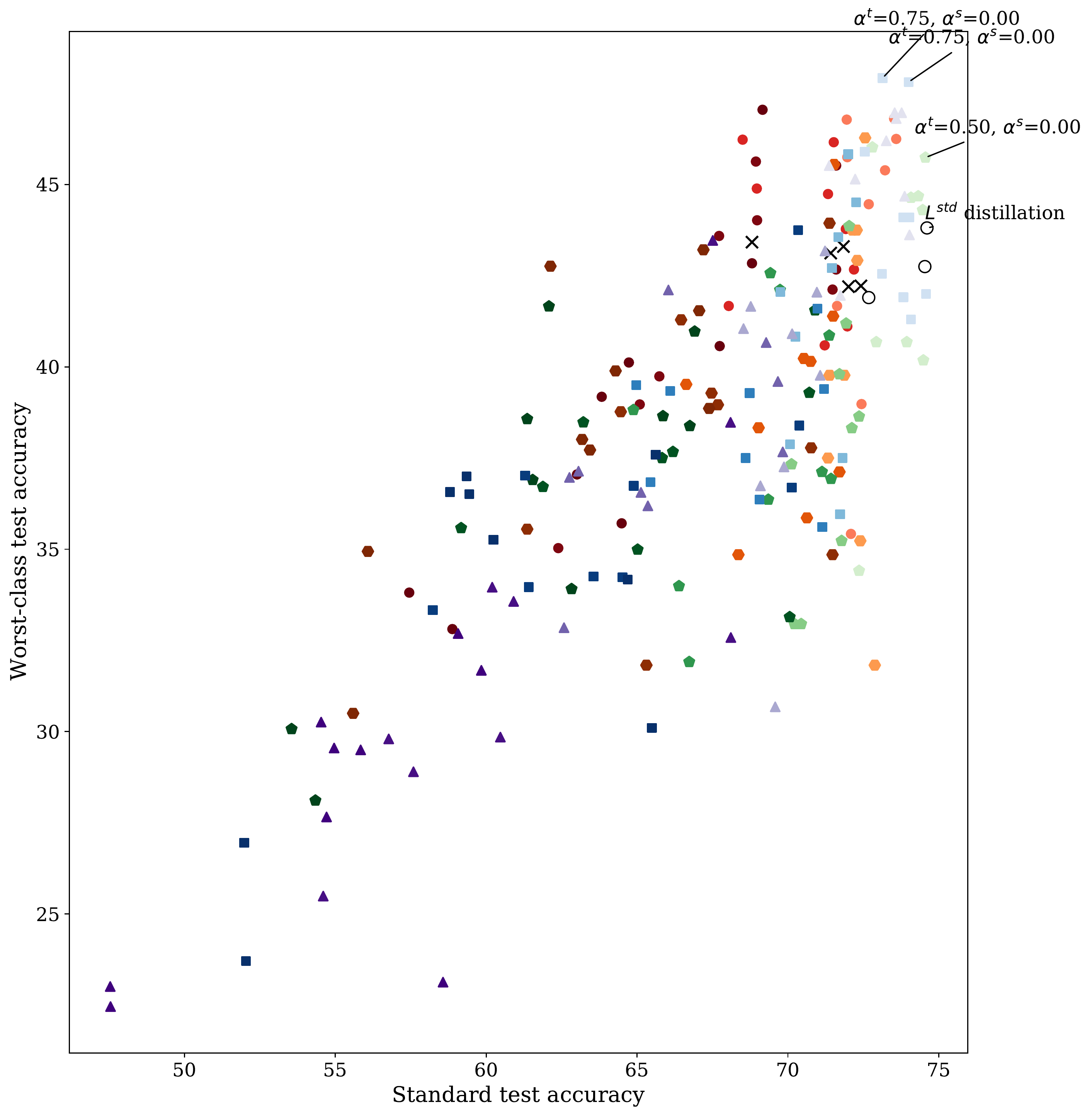}
\end{tabular}
\addtolength{\tabcolsep}{5pt} 
\caption{Tradeoffs in worst-class test accuracy vs. average test accuracy for CIFAR-10 and CIFAR-100, with students trained using architecture ResNet-56 and \textbf{teacher} validation labels. Mean test performance over 10 runs are shown, and all temperatures are included.}
\label{fig:tradeoffs_cf_56_teacher}
\end{center}
\vskip -0.2in
\end{figure*}

\begin{figure*}[!ht]
\begin{center}
\addtolength{\tabcolsep}{-5pt} 
\begin{tabular}{ccc}
    & \tiny{CIFAR-10 (ResNet-56 $\to$ ResNet-32)} & \tiny{CIFAR-100 (ResNet-56 $\to$ ResNet-32)} \\
    \includegraphics[trim={2.8cm 0 2.8cm 0},clip,width=0.15\textwidth]{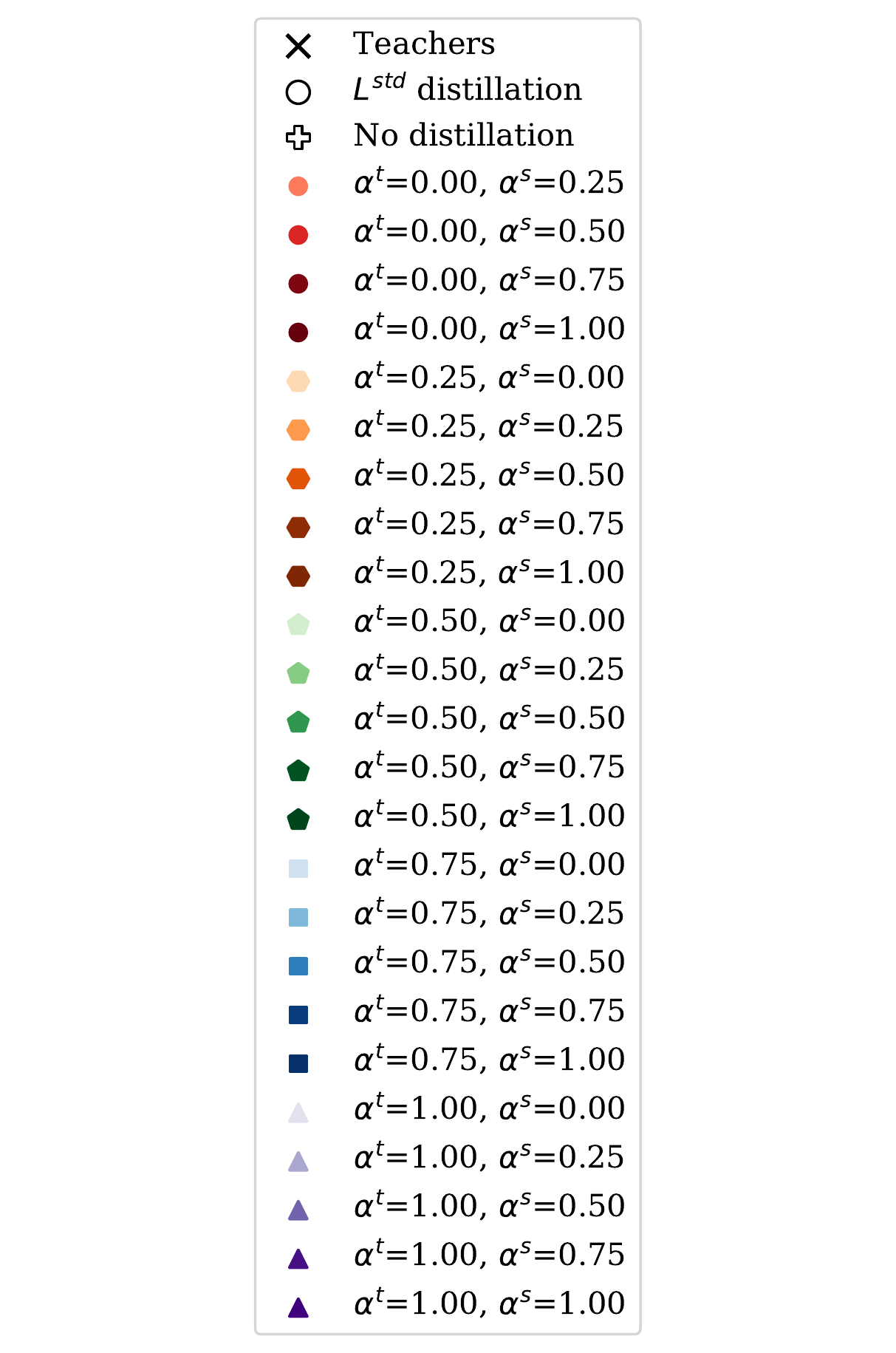} & \includegraphics[width=0.4\textwidth]{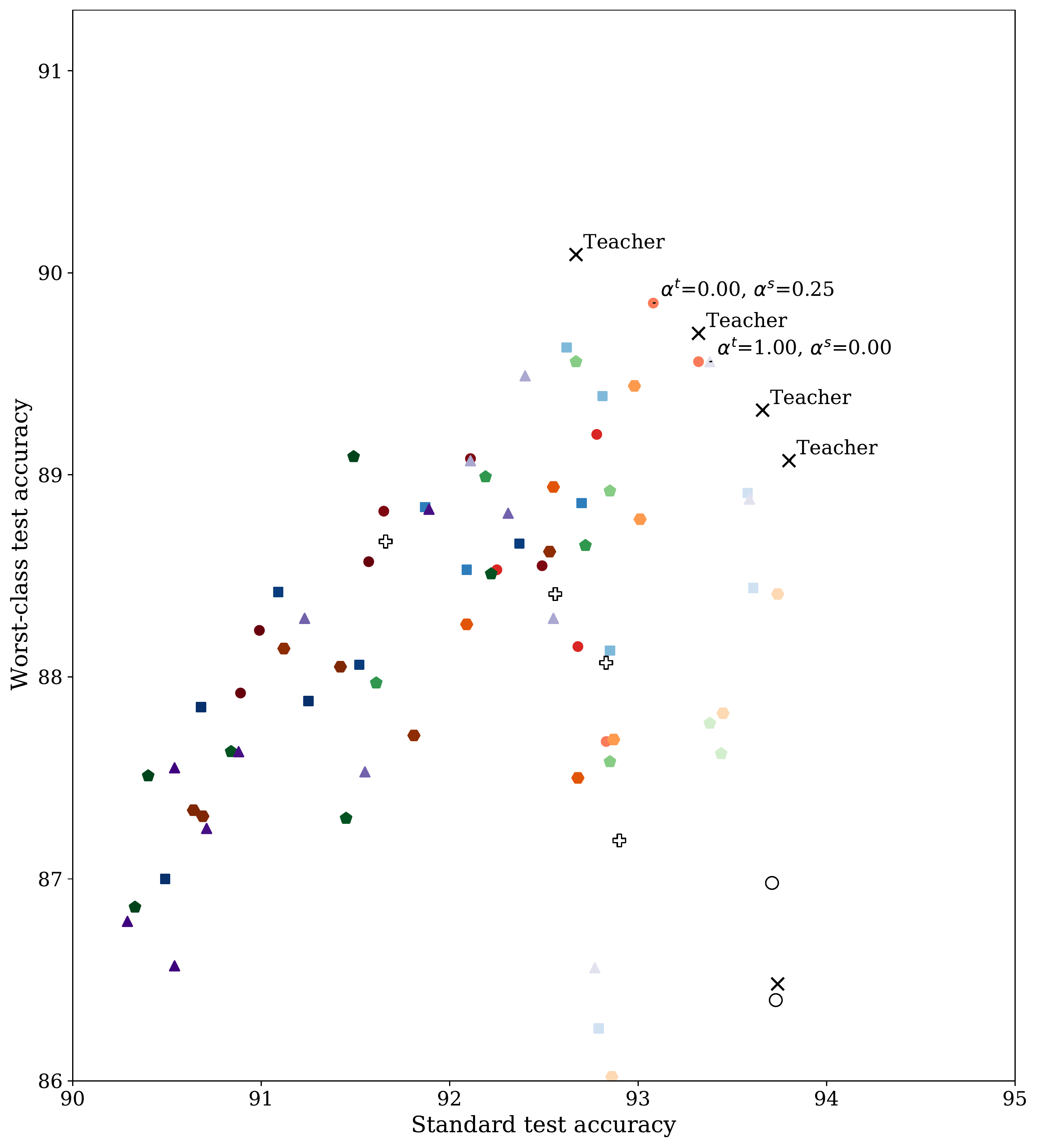} & \includegraphics[width=0.4\textwidth]{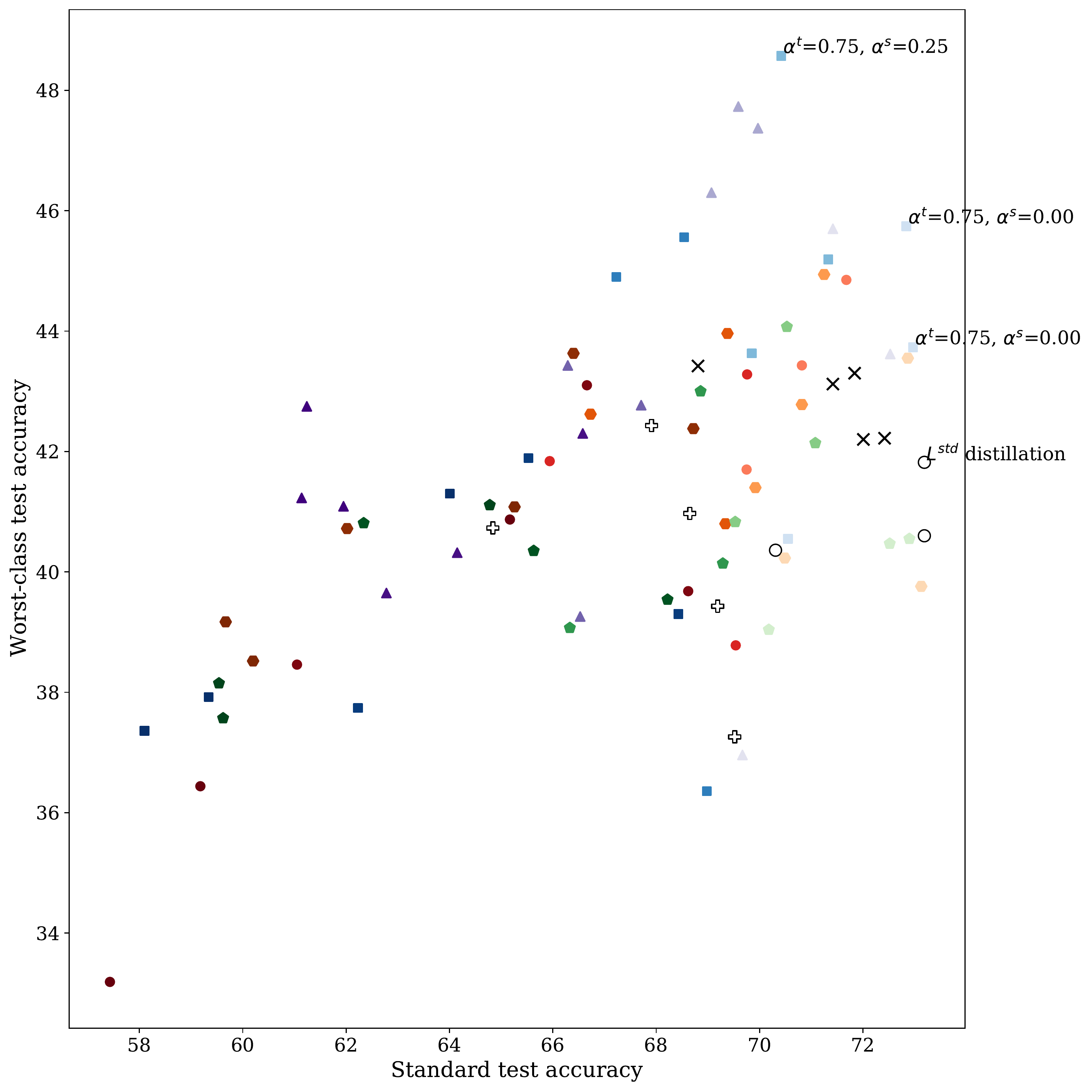}
\end{tabular}
\addtolength{\tabcolsep}{5pt} 
\caption{Tradeoffs in worst-class test accuracy vs. average test accuracy for CIFAR-10 and CIFAR-100, with students trained using architecture ResNet-32 and \textbf{one-hot} validation labels. Mean test performance over 10 runs are shown, and all temperatures are included.}
\label{fig:tradeoffs_cf_32_onehot}
\end{center}
\vskip -0.2in
\end{figure*}

\begin{figure*}[!ht]
\begin{center}
\addtolength{\tabcolsep}{-5pt} 
\begin{tabular}{ccc}
    & \tiny{CIFAR-10 (ResNet-56 $\to$ ResNet-32)} & \tiny{CIFAR-100 (ResNet-56 $\to$ ResNet-32)} \\
    \includegraphics[trim={2.8cm 0 2.8cm 0},clip,width=0.15\textwidth]{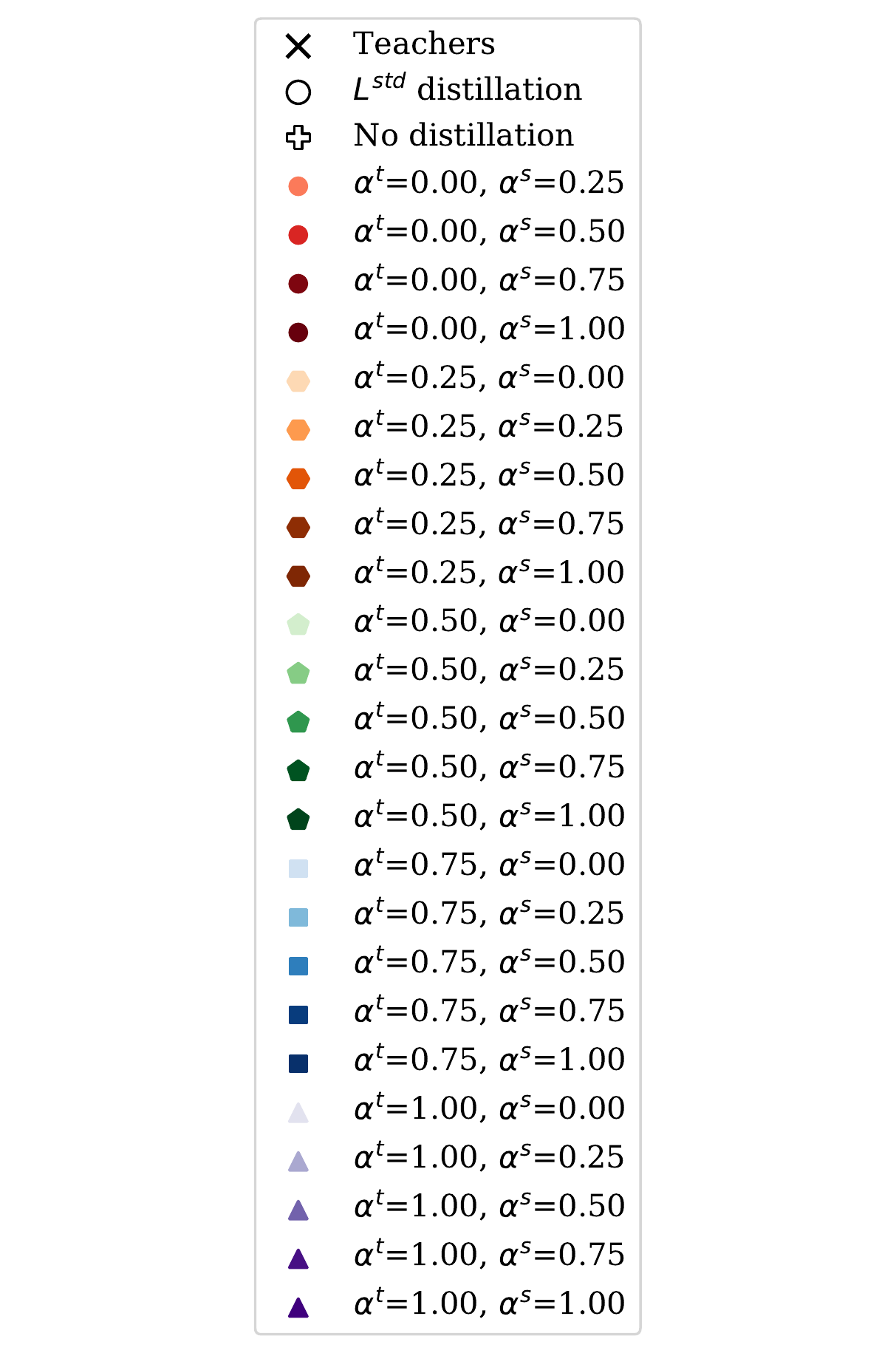} & \includegraphics[width=0.4\textwidth]{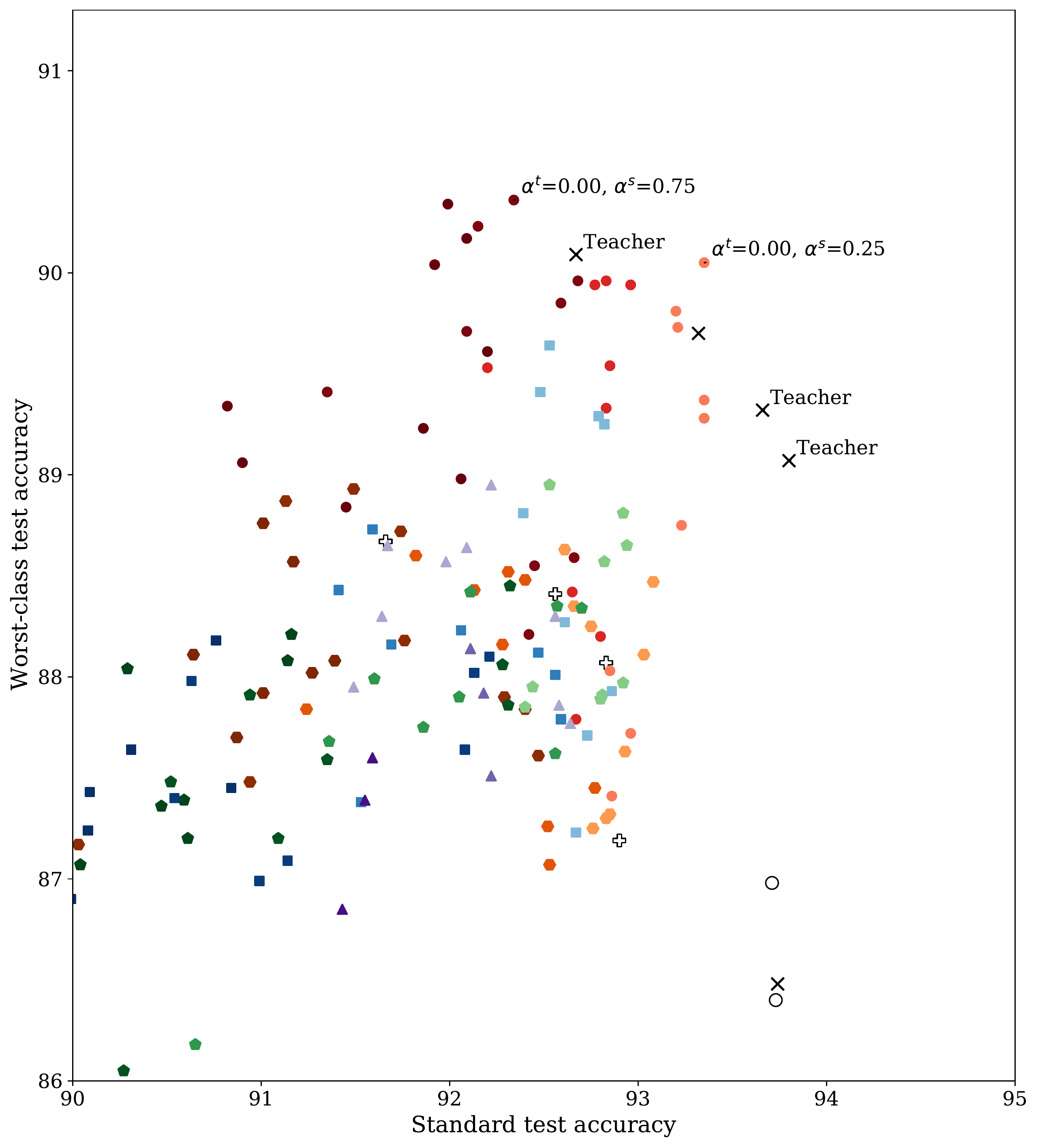} & \includegraphics[width=0.4\textwidth]{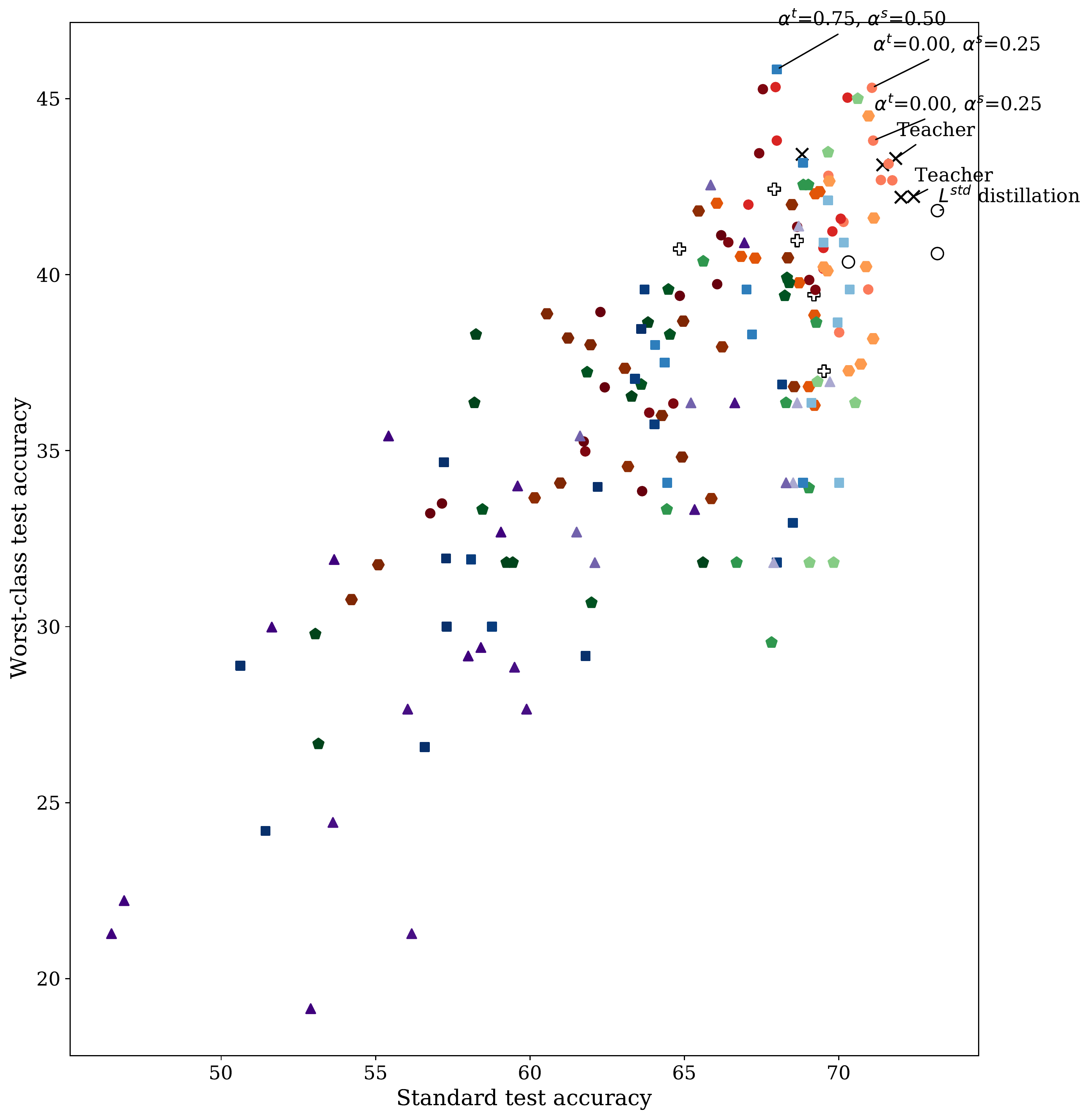}
\end{tabular}
\addtolength{\tabcolsep}{5pt} 
\caption{Tradeoffs in worst-class test accuracy vs. average test accuracy for CIFAR-10 and CIFAR-100, with students trained using architecture ResNet-32 and \textbf{teacher} validation labels. Mean test performance over 10 runs are shown, and all temperatures are included.}
\label{fig:tradeoffs_cf_32_teacher}
\end{center}
\vskip -0.2in
\end{figure*}

\end{document}